\newcommand*{\rom}[1]{\expandafter\@slowromancap\romannumeral #1@}
\newcolumntype{L}[1]{>{\raggedright\let\newline\\\arraybackslash\hspace{0pt}}m{#1}}
\newcolumntype{C}[1]{>{\centering\let\newline\\\arraybackslash\hspace{0pt}}m{#1}}
\title{\huge On the Interplay Between Misspecification and Sub-optimality Gap in Linear Contextual Bandits}
\author
{
    Weitong Zhang\thanks{Department of Computer Science, University of California, Los Angeles, CA 90095, USA; e-mail: {\tt wt.zhang@ucla.edu}} 
    ~and~
	Jiafan He\thanks{Department of Computer Science, University of California, Los Angeles, CA 90095, USA; e-mail: {\tt jiafanhe19@ucla.edu}} 
	~and~
	Zhiyuan Fan\thanks{IIIS, Tsinghua University, Beijing, China; e-mail: {\tt fan-zy19@mails.tsinghua.edu.cn}} 
	~and~
	Quanquan Gu\thanks{Department of Computer Science, University of California, Los Angeles, CA 90095, USA; e-mail: {\tt qgu@cs.ucla.edu}}
}
\begin{document}
\date{}
\maketitle

\begin{abstract}
We study linear contextual bandits in the misspecified setting, where the expected reward function can be approximated by a linear function class up to a bounded misspecification level $\zeta>0$. We propose an algorithm based on a novel data selection scheme, which only selects the contextual vectors with large uncertainty for online regression. We show that, when the misspecification level $\zeta$ is dominated by $\tilde \cO(\Delta / \sqrt{d})$ with $\Delta$ being the minimal sub-optimality gap and $d$ being the dimension of the contextual vectors, our algorithm enjoys the same gap-dependent regret bound $\tilde \cO ({d^2} /{\Delta})$ as in the well-specified setting up to logarithmic factors. In addition, we show that an existing algorithm SupLinUCB~\citep{chu2011contextual} can also achieve a gap-dependent constant regret bound without the knowledge of sub-optimality gap $\Delta$. Together with a lower bound adapted from~\citet{lattimore2020learning}, our result suggests an interplay between misspecification level and the sub-optimality gap: (1) the linear contextual bandit model is efficiently learnable when $\zeta \leq \tilde \cO({\Delta} / \sqrt{d})$; and (2) it is not efficiently learnable when $\zeta \geq \tilde \Omega({\Delta} / {\sqrt{d}})$. Experiments on both synthetic and real-world datasets corroborate our theoretical results.
\end{abstract}

\section{Introduction}
Linear contextual bandits \citep{li2010contextual,chu2011contextual,abbasi2011improved, agrawal2013thompson} have been extensively studied when the reward function can be represented as a linear function of the contextual vectors. 
However, such a well-specified linear model assumption sometimes does not hold in practice. This motivates the study of misspecified linear models. In particular, we only assume that the reward function can be approximated by a linear function up to some worst-case error $\zeta$ called \emph{misspecification level}. Existing algorithms for misspecified linear contextual bandits~\citep{lattimore2020learning, foster2020adapting} can only achieve an $\tilde \cO(d\sqrt{K} + \zeta K \sqrt{d} \log K)$ regret bound, where $K$ is the total number of rounds and $d$ is the dimension of the contextual vector. Such a regret, however, suggests that the performance of these algorithms will degenerate to be linear in $K$ when $K$ is sufficiently large. The reason for this performance degeneration is because existing algorithms, such as OFUL~\citep{abbasi2011improved} and linear Thompson sampling~\citep{agrawal2013thompson}, utilize all the collected data without selection. This makes these algorithms vulnerable to ``outliers'' caused by the misspecified model. Meanwhile, the aforementioned results do not consider the sub-optimality gap in the expected reward between the best arm and the second best arm. Intuitively speaking, if the sub-optimality gap is smaller than the misspecification level, there is no hope to obtain a sublinear regret. Therefore, it is sensible to take into account the sub-optimality gap in the misspecified setting, and pursue a gap-dependent regret bound.

The same misspecification issue also appears in reinforcement learning with linear function approximation, when a linear function cannot exactly represent the transition kernel or value function of the underlying MDP. In this case, \citet{du2019good} provided a negative result showing that if the misspecification level is larger than a certain threshold, any RL algorithm will suffer from an exponentially large sample complexity. This result was later revisited in the stochastic linear bandit setting by~\citet{lattimore2020learning}, which shows that a large misspecification error will make the bandit model not efficiently learnable. However, these results cannot well explain the tremendous success of deep reinforcement learning on various tasks~\citep{mnih2013playing, schulman2015trust, schulman2017proximal}, where the deep neural networks are used as function approximators with misspecification error.

In this paper, we aim to understand the role of model misspecification in linear contextual bandits through the lens of sub-optimality gap. By proposing a new algorithm with data selection, we can achieve a constant regret bound for such a problem. We also shows that the existing algorithm, SupLinUCB~\citep{chu2011contextual} can be also viewed as a boostrapped version of our proposed algorithm. Our contributions are highlighted as follows:
\begin{itemize}[leftmargin=*]
    \item We propose a new algorithm called DS-OFUL (Data Selection OFUL). DS-OFUL only learns from the data with large uncertainty. We prove an $\tilde \cO(d^2\Delta^{-1})$ constant gap-dependent regret\footnote{we use notation $\tilde \cO(\cdot)$ to hide the log factor other than number of rounds $K$} bound independent from $K$ when the misspecification level is small (i.e., $\zeta = \tilde \cO(\Delta / \sqrt{d})$) and the minimal sub-optimality gap $\Delta$ is known. Our regret bound even improves upon the gap-dependent regret in the well-specified setting~\citep{abbasi2011improved} from $\log(K)$ to constant regret bound. To the best of our knowledge, this is the first constant gap-dependant regret bound for misspecified linear contextual bandits as well as the well-specified linear bandit without any prior assumptions.
    \item We show that an existing algorithm, SupLinUCB~\citep{chu2011contextual}, can be viewed as a multi-level version of our proposed algorithm. With a fine-grained analysis, we are able to show that SupLinUCB can achieve $\tilde \cO(d^2\Delta^{-1})$ constant regret under the same condition of misspecification level without knowing the sub-optimality gap.
    \item We also prove a gap-dependent lower bound following the lower bound proof techniques in~\citet{du2019good, lattimore2020learning}. This, together with the upper bound, suggests an interplay between the misspecification level and the sub-optimality gap: the linear contextual bandit is efficiently learnable if $\zeta \le \tilde \cO(\Delta /\sqrt{d})$ while it is not efficiently learnable if $\zeta \ge \tilde \Omega(\Delta /\sqrt{d})$.
    \item Finally, we conduct experiments on the linear contextual bandit with both synthetic and real datasets, and demonstrate the superior performance of DS-OFUL algorithm and the effectiveness of SupLinUCB. This corroborates our theoretical results.
\end{itemize}

\noindent\textbf{Notation.} Scalars and constants are denoted by lower and upper case letters, respectively. Vectors are denoted by lower case boldface letters $\xb$, and matrices by upper case boldface letters $\Ab$. We denote by $[k]$ the set $\{1, 2, \cdots, k\}$ for positive integers $k$. For two non-negative sequence $\{a_n\}, \{b_n\}$, $a_n = \cO(b_n)$ means that there exists a positive constant $C$ such that $a_n \le Cb_n$, and we use $\tilde \cO(\cdot)$ to hide the $\log$ factor in $\cO(\cdot)$ other than number of rounds $T$ or episode $K$; $a_n = \Omega(b_n)$ means that there exists a positive constant $C$ such that $a_n \ge Cb_n$, and we use $\tilde \Omega(\cdot)$ to hide the $\log$ factor. For a vector $\xb  \in \RR^d$ and a positive semi-definite matrix $\Ab \in \RR^{d \times d}$, we define $\|\xb\|_{\Ab}^2=\xb^\top \Ab\xb$. For any set $\cC$, we use $|\cC|$ to denote its cardinality.

\section{Related Work}
In this section, we review the related work for misspecified linear bandits and misspecified reinforcement learning.

\noindent\textbf{Linear Contextual Bandits.}
There is a large body of literature on linear contextual bandits. For example, \citet{auer2002using,chu2011contextual, agrawal2013thompson} studied linear contextual bandits when the number of arms is finite. \citet{abbasi2011improved} proposed an algorithm called OFUL to deal with the infinite arm set. All these works come with an $\tilde \cO(\sqrt{K})$ problem-independent regret bound, and an $\cO(d^2\Delta^{-1}\log(K))$ gap-dependent regret bound is also given by~\citet{abbasi2011improved}.

\noindent\textbf{Misspecified Linear Bandits.}  \citet{ghosh2017misspecified} is probably the first work considering the misspecified linear bandits, which shows that the OFUL~\citep{abbasi2011improved} algorithm cannot achieve a sublinear regret in the presence of misspecification. They, therefore, proposed a new algorithm with a hypothesis testing module for linearity to determine whether to use OFUL~\citep{abbasi2011improved} or the multi-armed UCB algorithm. Their algorithm enjoys the same performance guarantee as OFUL in the well-specified setting and can avoid the linear regret under certain misspecification setting. \citet{lattimore2020learning} proposed a phase-elimination algorithm for misspecified stochastic linear bandits, which achieves an $\tilde \cO(\sqrt{dK} + \zeta K\sqrt{d})$ regret bound. For contextual linear bandits, both~\citet{lattimore2020learning} and \citet{foster2020adapting} proved an $\tilde \cO(d\sqrt{K} + \zeta K\sqrt{d})$ regret bound under misspecification. \citet{takemura2021parameter} showed that SupLinUCB can achieve a similar regret bound without the knowledge of the misspecification level. \citet{van2019comments} proved a lower bound of sample complexity, which suggests when $\zeta\sqrt{d} \ge \sqrt{8\log|\cD|}$, any best arm identification algorithm will suffer a $\Omega(2^d)$ sample complexity, where $\cD$ is the decision set. When the reward is deterministic and does not contain noise, they provided an algorithm using $\tilde \cO(d)$ sample complexity to identify a $\Delta$-optimal arm when $\zeta \le \Delta /\sqrt{d}$. \citet{lattimore2020learning} also mentioned that if $\zeta\sqrt{d} \le \Delta$, there exists a best arm identification algorithm that only needs to pull $\tilde \cO(d)$ arms to find a $\Delta$-optimal arm with the knowledge of $\zeta$. Note that although the exponential sample complexity lower bound for best-arm identification can be translated into a regret lower bound in linear contextual bandits, the algorithms for best-arm identification and the corresponding upper bounds cannot be easily extended to linear contextual bandits. Besides these works on misspecification, \citet{he22corruptions} studied the linear contextual bandits with adversarial corruptions, where the reward for each round can be corrupted arbitrarily. They assumed that the summation of the corruption up to $K$ rounds is bounded by $C>0$ and proposed an algorithm achieving $\tilde \cO(d\sqrt{K} + dC)$ regret bound with the known $C$. Since the corruption level $C=K\zeta$ in the misspecification setting, their result directly implied an $\cO(d\sqrt{K} + dK\zeta)$ linear regret, which differs from the optimal guarantee with a extra $O(\sqrt{d})$ factor.
Besides these series of work, \citet{camilleri2021high} also studied the robustness of kernel bandits with misspecification.  

\section{Preliminaries of Linear Contextual Bandits}
We consider a linear contextual bandit problem. In round $k \in [K]$, the agent receives a decision set $\cD_k \subset \RR^d$ and selects an arm $\xb_k \in \cD_k$ then observes the reward $r_k = r(\xb_k) + \eps_k$, where $r(\cdot): \RR^d \mapsto [0, 1]$ is a deterministic expected reward function and $\eps_k$ is a zero-mean $R$-sub-Gaussian random noise. i.e., $\EE[\mathrm{e}^{\lambda \eps_k} | \xb_{1:k}, \eps_{1:k-1}] \le \exp({\lambda^2R^2}/2), \forall k \in [K], \lambda \in \RR$.

In this work, we assume that all contextual vector $\xb \in \cD_k$  satisfies $\|\xb\|_2 \le L$ and the reward function $r(\cdot): \RR^d\rightarrow [0,1]$ can be approximated by a linear function $r(\xb) = \xb^\top\btheta^* + \eta(\xb)$, where $\eta(\cdot): \RR^d \mapsto [-\zeta, \zeta]$ is an unknown misspecification error function.
 We further assume $\|\btheta^*\|_2 \le B$ and for simplicity, we assume $B , L \ge 1$. We denote the optimal reward at round $k$ as $r_k^* = \max_{\xb \in \cD_k}r(\xb)$ and the optimal arm $\xb_k^* = \argmax_{\xb \in \cD_k}r(\xb)$. Our goal is to minimize the regret defined by $\text{Regret}(K) := \sum_{k=1}^K r_k^* - r(\xb_k)$.

In this paper, we focus on the minimal sub-optimality gap condition.

\begin{definition}[Minimal sub-optimality gap]\label{def:bandit-gap}
For each $\xb \in \cD_k$, the sub-optimality gap $\Delta_k(\xb)$ is defined by $\Delta_k(\xb) := r_k^* - r(\xb)$ and the minimal sub-optimality gap $\Delta$ is defined by $
    \Delta := \textstyle{\min_{k \in [K], \xb \in \cD_k}}\{\Delta_k(\xb): \Delta_k(\xb) > 0\}.
$
\end{definition}
Then we further assume this minimal sub-optimality gap is strictly positive, i.e., $\Delta > 0$.
\section{Constant Regret Bound with Known Sub-Optimality Gap $\Delta$}
\subsection{Algorithm}
In this subsection, we propose our algorithm, DS-OFUL, in Algorithm~\ref{alg:main}. The algorithm runs for $K$ rounds. At each round, the algorithm first estimates the underlying parameter $\btheta^*$ by solving the following ridge regression problem in Line~\ref{ln:bandit-reg1}:
\begin{align*}
    \btheta_k = \textstyle{\argmin_{\btheta}} \textstyle{\sum_{i \in \cC_{k-1}}} \left(r_i - \xb_i^\top\btheta\right)^2 + \lambda \|\btheta\|_2^2,
\end{align*}
where $\cC_{k-1}$ is the index set of the selected contextual vectors for regression and is initialized as an empty set at the beginning. After receiving the contextual vectors set $\cD_k$, the algorithm selects an arm from the optimistic estimation powered by the Upper Confidence Bound (UCB) bonus in Line~\ref{ln:decision}. In line~\ref{ln:select}, the algorithm adds the index of current round into $\cC_k$ if the UCB bonus of the chosen arm $\xb_k$, denoted by $\|\xb_k\|_{\Ub_{k}^{-1}}$, is greater than the threshold $\Gamma$.
Intuitively speaking, since the UCB bonus reflects the uncertainty of the model about the given arm $\xb$, Line~\ref{ln:select} discards the data that brings little uncertainty ($\|\xb\|_{\Ub_{k}^{-1}}$) to the model. Finally, we denote the total number of selected data in Line~\ref{ln:select} by $|\cC_K|$. We will declare the choices of the parameter $\Gamma, \beta$ and $\lambda$ in the next section.
\begin{algorithm}[htbp]
\caption{Data Selection OFUL (DS-OFUL)}\label{alg:main}
\begin{algorithmic}[1]
\REQUIRE Threshold $\Gamma$, radius $\beta$ and regularizer $\lambda$
\STATE Initialize $\cC_0 = \emptyset, \Ub_0 = \lambda \Ib, \btheta_0 = \zero$
\FOR{$k=1,\ldots,K$}
\STATE Set $\Ub_k = \lambda \Ib + \sum_{i \in \cC_{k-1}}\xb_i\xb_i^\top$. 
\STATE Set $\btheta_k = \Ub_k^{-1}\sum_{i \in \cC_{k-1}}r_i\xb_i$\label{ln:bandit-reg1}.
\STATE Receive the decision set $\cD_k$. 
\STATE Select $\xb_k = \argmax_{\xb \in \cD_k} \big\{\xb^\top\btheta_k + \beta\|\xb\|_{\Ub_k^{-1}}\big\}$. \label{ln:decision}
\STATE Receive reward $r_k$
\STATE $ \textbf{  if  } \|\xb_k\|_{\Ub_k^{-1}} \ge \Gamma \textbf{ then } \cC_k = \cC_{k-1} \cup \{k\} \textbf{  else  } \cC_k = \cC_{k-1}$ \label{ln:select}
\ENDFOR
\end{algorithmic}
\end{algorithm}
\subsection{Regret Bound}
In this subsection, we provide the regret upper bound of Algorithm~\ref{alg:main} and the regret lower bound for learning the misspecified linear contextual bandit.
\begin{theorem}[Upper Bound]\label{thm:main}

For any $0 < \delta < 1$, let $\lambda = B^{-2}$ and $\Gamma = \Delta / (2\sqrt{d}\iota_1)$ where $\iota_1 = (24 + 18R)\log((72 + 54R)LB\sqrt{d}\Delta^{-1}) + \sqrt{8R^2\log(1 / \delta)}$. Set $\beta = 1 + 4\sqrt{d\iota_2} + R\sqrt{2d\iota_3}$ where $\iota_2 = \log(3LB \Gamma^{-1})$, $\iota_3 = \log((1 +16L^2B^2\Gamma^{-2}\iota_2)/\delta)$. If the misspecification level is bounded by $2\sqrt{d}\zeta \iota_1 \le \Delta$, then with probability at least $1 - \delta$, the cumulative regret of Algorithm~\ref{alg:main} is bounded by
\begin{align*}
     \text{Regret}(K) &\le \frac{32\beta\sqrt{2d^3\iota_2\log(1 + 16d\Gamma^{-2}\iota_2)}\iota_1}\Delta.
\end{align*}
\end{theorem}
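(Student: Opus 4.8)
The plan is to combine a misspecification-aware self-normalized confidence bound with a purely deterministic bound on the number of selected rounds $|\cC_K|$, and then to close the argument using the minimal sub-optimality gap. The guiding intuition is that the data-selection rule in Line~\ref{ln:select} both caps $|\cC_K|$ (giving a constant regret) and guarantees that every round in which no data is selected incurs zero regret.

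First I would decompose the estimation error. Writing $r_i = \xb_i^\top\btheta^* + \eta(\xb_i) + \eps_i$ and $\btheta_k = \Ub_k^{-1}\sum_{i\in\cC_{k-1}}r_i\xb_i$, one obtains, for every $\xb$, the identity $\xb^\top(\btheta_k-\btheta^*) = -\lambda\xb^\top\Ub_k^{-1}\btheta^* + \xb^\top\Ub_k^{-1}\sum_{i\in\cC_{k-1}}\eta(\xb_i)\xb_i + \xb^\top\Ub_k^{-1}\sum_{i\in\cC_{k-1}}\eps_i\xb_i$. The three terms are controlled respectively by: (i) the regularization bound $\sqrt\lambda B\|\xb\|_{\Ub_k^{-1}}=\|\xb\|_{\Ub_k^{-1}}$ using $\lambda=B^{-2}$, matching the constant $1$ in $\beta$; (ii) the misspecification bias, which by Cauchy–Schwarz is at most $\zeta\sqrt{|\cC_{k-1}|}\,\|\xb\|_{\Ub_k^{-1}}$, since $\sum_{i\in\cC_{k-1}}(\xb^\top\Ub_k^{-1}\xb_i)^2 = \|\xb\|_{\Ub_k^{-1}}^2 - \lambda\|\Ub_k^{-1}\xb\|_2^2 \le \|\xb\|_{\Ub_k^{-1}}^2$; and (iii) the sub-Gaussian noise, bounded via the self-normalized martingale inequality of \citet{abbasi2011improved}, contributing $R\sqrt{2d\iota_3}\,\|\xb\|_{\Ub_k^{-1}}$. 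Adding $|\eta(\xb)|\le\zeta$ then yields the uniform confidence bound $|\xb^\top\btheta_k - r(\xb)| \le \beta\|\xb\|_{\Ub_k^{-1}} + \zeta$, provided the bias in (ii) can be capped at $4\sqrt{d\iota_2}\,\|\xb\|_{\Ub_k^{-1}}$, which is exactly the remaining term in the definition of $\beta$.

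The crux — and what breaks the apparent circularity — is a deterministic bound on $|\cC_K|$. Since every index added to $\cC$ satisfies $\|\xb_k\|_{\Ub_k^{-1}}\ge\Gamma$ and (for $\Gamma\le 1$) $\min(1,\|\xb_k\|_{\Ub_k^{-1}}^2)\ge\Gamma^2$, the elliptical potential (log-determinant) lemma gives $|\cC_K|\Gamma^2 \le \sum_{k\in\cC_K}\min(1,\|\xb_k\|_{\Ub_k^{-1}}^2) \le 2d\log(1+|\cC_K|L^2B^2/d)$; solving this self-bounding inequality yields $|\cC_K|\le 16d\Gamma^{-2}\iota_2$, independent of $K$ and of the confidence event. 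Substituting this into term (ii), together with the hypothesis $2\sqrt d\,\zeta\iota_1\le\Delta$ and $\Gamma=\Delta/(2\sqrt d\iota_1)$, gives $\zeta\sqrt{|\cC_K|}\le 4\sqrt{d\iota_2}$, closing the bias budget. I would then run the standard optimism argument: the greedy rule in Line~\ref{ln:decision} and the confidence bound give $\Delta_k(\xb_k)\le 2\beta\|\xb_k\|_{\Ub_k^{-1}}+2\zeta$. For any \emph{unselected} round, $\|\xb_k\|_{\Ub_k^{-1}}<\Gamma$ forces $\Delta_k(\xb_k)\le 2\beta\Gamma+2\zeta\le(\beta+1)\Delta/(\sqrt d\iota_1)$, and the large constants deliberately baked into $\iota_1$ ensure $\sqrt d\iota_1>\beta+1$, so $\Delta_k(\xb_k)<\Delta$, which by Definition~\ref{def:bandit-gap} forces $\Delta_k(\xb_k)=0$. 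Hence regret accrues only on the $\le|\cC_K|$ selected rounds, where $\Delta_k(\xb_k)\le 2\beta\min(1,\|\xb_k\|_{\Ub_k^{-1}})+2\zeta$; Cauchy–Schwarz plus the elliptical potential lemma bound $\sum_{k\in\cC_K}\min(1,\|\xb_k\|_{\Ub_k^{-1}})\le\sqrt{|\cC_K|\cdot 2d\log(1+16d\Gamma^{-2}\iota_2)}$, and plugging in $|\cC_K|\le 16d\Gamma^{-2}\iota_2$ and $\Gamma^{-1}=2\sqrt d\iota_1/\Delta$ produces the stated $\tilde\cO(d^2/\Delta)$ bound.

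I expect the main obstacle to be the self-normalized concentration over the \emph{data-selected} subsequence $\cC_{k-1}$: one must verify that the selection rule is predictable so that the martingale structure required by \citet{abbasi2011improved} is preserved on the restricted index set, and simultaneously manage the interdependence among $\beta$, the misspecification bias, and the bound on $|\cC_K|$ so that the logarithmic factors $\iota_1,\iota_2,\iota_3$ and their constants close consistently rather than circularly. The key insight that makes this tractable is that the $|\cC_K|$ bound is established without reference to the high-probability confidence event, so it can be used as an input when bounding the bias rather than being derived jointly with it.
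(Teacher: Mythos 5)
Your proposal is correct and follows essentially the same route as the paper's proof: the same three-term decomposition of $\xb^\top(\btheta_k-\btheta^*)$ into regularization, misspecification bias, and self-normalized noise; the same deterministic elliptical-potential bound $|\cC_K|\le 16d\Gamma^{-2}\iota_2$ used to cap the bias at $4\sqrt{d\iota_2}$; the same ``unselected rounds incur zero regret'' argument via the gap; and the same Cauchy--Schwarz finish on the selected rounds. The only cosmetic differences are that you bound the bias term directly by Cauchy--Schwarz where the paper invokes Lemma 8 of \citet{Zanette2020LearningNO} (the same inequality), and the numeric condition $\sqrt{d}\,\iota_1>\beta+1$ you defer to is exactly what the paper's Lemma~\ref{lm:aux1} establishes.
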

\begin{remark}
Since $\beta = \tilde \cO(\sqrt{d})$, Theorem~\ref{thm:main} suggests an $\tilde \cO(d^2\Delta^{-1})$ constant regret bound independent of the total number of rounds $K$ when $\zeta \le \tilde \cO(\Delta / \sqrt{d})$, 
which improves the logarithmic regret $\tilde \cO(d^2\Delta^{-1}\log K)$ in~\citet{abbasi2011improved} to a constant regret\footnote{{When we say constant regret, we ignore the $\log(1/\delta)$ factor in the regret as we choose $\delta$ to be a constant.}}. Note that our constant regret bound relies on the knowledge of the minimal sub-optimality gap $\Delta$, while the OFUL algorithm in~\citet{abbasi2011improved} does not need prior knowledge about the minimal sub-optimality gap $\Delta$.
\end{remark}
\begin{remark}
Our \emph{high probability} constant regret bound does not violate the lower bound proved in~\citet{hao2020adaptive}, which says that certain diversity condition on the contexts is necessary to achieve an \emph{expected} constant regret bound~\citep{papini2021leveraging}. Here we only provide a high-probability constant regret bound. When extending this high probability constant regret bound to expected regret bound, we have
\begin{align*}
    \EE[\text{Regret}(K)] \le \tilde \cO(d^2\Delta^{-1}\log(1 / \delta))(1 - \delta) + \delta K,
\end{align*}
which depends on $K$.
To obtain a sub-linear expected regret, we can choose $\delta = 1 / K$, which yields a logarithmic regret $\tilde \cO(d^2\Delta^{-1}\log(K))$ and does not violate the lower bound in \citet{hao2020adaptive}.
\end{remark}

\begin{remark}
Notably, \citet{papini2021leveraging} can achieve a constant expected  regret bound under certain diversity condition, which requires the contexts of arms span the whole $\RR^d$ space. 
In contrast, our constant regret bound does not need such an assumption and is a high-probability constant regret bound.
\end{remark}

\subsection{Key Proof Techniques} \label{sec:key-technique}
Here we present the key proof techniques for achieving the constant regret with the knowledge of sub-optimality gap $\Delta$. The detailed proof is deferred to Appendix~\ref{app:proof0}.

\paragraph{Regret decomposition}
The total regret over all $K$ rounds can be decomposed as follows
\begin{align}
    \text{Regret}(K) = \sum_{k \in \cC_K} \big(r_k^* - r(\xb_k)\big) + \sum_{k \notin \cC_K} \big(r_k^* - r(\xb_k)\big). \label{eq:main1}
\end{align}

\paragraph{Finite samples collected in $\cC_k$} Since we only adding the contextual arm with large uncertainty (i.e., $\|\xb\|_{\Ub_k^{-1}} \ge \Gamma$) into the set $\cC_k$, we can bound the number of samples in $\cC_k$ as $\cC_k = \tilde \cO(d\Gamma^{-2})$ which is claimed in the following lemma.

\begin{lemma}\label{lm:finite1}
Given $0 < \Gamma \le 1$, set $\lambda = B^{-2}$. For any $k \in [K]$, $|\cC_k| \le 16d\Gamma^{-2}\log(3LB\Gamma^{-1})$.
\end{lemma}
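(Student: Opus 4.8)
The plan is to bound $|\cC_k|$ through a determinant (elliptical-potential) argument on the regularized design matrix. First I would observe that $\Ub$ is only updated at the rounds whose index enters $\cC_k$: if $k_1 < \cdots < k_N$ (with $N = |\cC_k|$) are those rounds, then at round $k_j$ the matrix passes from $\Ub_{k_j}$ to $\Ub_{k_j} + \xb_{k_j}\xb_{k_j}^\top$. By the matrix determinant lemma, $\det(\Ub_{k_j} + \xb_{k_j}\xb_{k_j}^\top) = \det(\Ub_{k_j})(1 + \|\xb_{k_j}\|_{\Ub_{k_j}^{-1}}^2)$, and since an index is added to $\cC_k$ only when $\|\xb_{k_j}\|_{\Ub_{k_j}^{-1}} \ge \Gamma$, each selection multiplies the determinant by at least $1 + \Gamma^2$. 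Chaining these updates from $\det(\lambda\Ib) = \lambda^d$ gives the lower bound $\det \Ub \ge \lambda^d (1 + \Gamma^2)^{N}$ for the final design matrix.

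Second, I would upper bound the same determinant. Since $\Ub = \lambda\Ib + \sum_{j=1}^N \xb_{k_j}\xb_{k_j}^\top$, the bound $\|\xb\|_2 \le L$ gives $\mathrm{tr}(\Ub) \le d\lambda + NL^2$, and by AM--GM on the eigenvalues $\det\Ub \le (\mathrm{tr}(\Ub)/d)^d \le (\lambda + NL^2/d)^d$. Combining the two bounds, taking logarithms, and using $\log(1+\Gamma^2) \ge \Gamma^2/2$ for $\Gamma \le 1$ together with $\lambda = B^{-2}$ yields the self-referential inequality
\begin{align*}
N \le \frac{2d}{\Gamma^2}\log\!\Big(1 + \frac{NL^2B^2}{d}\Big).
\end{align*}

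Finally I would resolve this implicit inequality. The right-hand side is concave and increasing in $N$ and vanishes at $N=0$, so the feasible set is $[0, N^*]$ for the unique positive fixed point $N^*$; hence it suffices to verify that the claimed bound $M := 16 d\Gamma^{-2}\log(3LB\Gamma^{-1})$ overshoots the fixed point, i.e. that $\frac{2d}{\Gamma^2}\log(1 + ML^2B^2/d) \le M$. Writing $u := 3LB/\Gamma \ge 3$ (since $L,B\ge 1$ and $\Gamma\le 1$), this reduces to checking $1 + \frac{16}{9}u^2\log u \le u^8$, which holds comfortably for all $u \ge 3$, and the claim follows. I expect this last transcendental step---pinning down the constant $16$ and the exact $\log(3LB\Gamma^{-1})$ form, rather than merely an order bound---to be the only delicate part; the determinant estimates themselves are standard.
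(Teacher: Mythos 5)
Your proposal is correct and follows essentially the same route as the paper: both arguments lower-bound the elliptical potential of the selected rounds by $|\cC_k|\Gamma^2$, upper-bound it by $2d\log(1+|\cC_k|L^2B^2/d)$ via the determinant/trace estimate (the paper invokes Lemma 11 of \citet{abbasi2011improved} as a black box where you unpack it), and then resolve the resulting implicit inequality. The only cosmetic difference is in that last step: the paper applies a generic ``$x \ge 4a\log(2a)+2b \Rightarrow x \ge a\log x + b$'' lemma, whereas you verify directly that the claimed bound $M$ sits above the unique positive fixed point, which is an equally valid (and correctly checked) way to pin down the constant.
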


Then the following lemma suggests that a finite regression set $\cC_k$ can lead to a small confidence set with misspecification.

\begin{lemma}\label{lm:decompose1}
    Let $\lambda = B^{-2}$. For all $\delta > 0$, with probability at least $1 - \delta$, for all $\xb \in \RR^d, k \in [K]$, the prediction error is bounded by:
    \begin{align*}
        |\xb^\top(\btheta_k - \btheta^*)| & \le \left(1 + R\sqrt{2d\iota} + \zeta\sqrt{|\cC_k|}\right)\|\xb\|_{\Ub_k^{-1}},
    \end{align*}
    where $\iota = \log((d + |\cC_k|L^2B^2) / (d\delta))$ and $|\cC_k|$ is the total number of data used in regression at the $k$-th round.
\end{lemma}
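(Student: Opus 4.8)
The plan is to reduce the claim to a bound on $\|\btheta_k - \btheta^*\|_{\Ub_k}$: since $|\xb^\top(\btheta_k-\btheta^*)| \le \|\xb\|_{\Ub_k^{-1}}\,\|\btheta_k-\btheta^*\|_{\Ub_k}$ by Cauchy--Schwarz in the $\Ub_k$-geometry, and this holds for every $\xb\in\RR^d$ simultaneously, the three coefficients in the stated bound arise as three separate contributions to $\|\btheta_k-\btheta^*\|_{\Ub_k}$. First I would substitute $r_i = \xb_i^\top\btheta^* + \eta(\xb_i) + \eps_i$ into the closed form $\btheta_k = \Ub_k^{-1}\sum_{i\in\cC_{k-1}}r_i\xb_i$ and use $\sum_{i\in\cC_{k-1}}\xb_i\xb_i^\top = \Ub_k - \lambda\Ib$ to obtain
\[
\btheta_k - \btheta^* = -\lambda\Ub_k^{-1}\btheta^* + \Ub_k^{-1}\sum_{i\in\cC_{k-1}}\eta(\xb_i)\xb_i + \Ub_k^{-1}\sum_{i\in\cC_{k-1}}\eps_i\xb_i,
\]
and then bound the $\Ub_k$-norm of each term. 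The first (regularization) term $\lambda\|\Ub_k^{-1}\btheta^*\|_{\Ub_k}$ equals $\lambda\|\btheta^*\|_{\Ub_k^{-1}}$, and using $\Ub_k\succeq\lambda\Ib$ it is at most $\lambda\cdot\lambda^{-1/2}\|\btheta^*\|_2 \le \sqrt\lambda\,B = 1$ by the choice $\lambda = B^{-2}$, which accounts for the leading constant $1$.

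For the misspecification term, let $\Xb$ be the matrix whose rows are $\xb_i^\top$ for $i\in\cC_{k-1}$ and let $\vb$ be the vector with entries $\eta(\xb_i)$, so the term equals $\|\Xb^\top\vb\|_{\Ub_k^{-1}}$. The key observation is the ``hat-matrix'' inequality $\Xb(\lambda\Ib + \Xb^\top\Xb)^{-1}\Xb^\top \preceq \Ib$, whose eigenvalues are $\sigma_i^2/(\lambda+\sigma_i^2) < 1$ for the singular values $\sigma_i$ of $\Xb$; consequently
\[
\|\Xb^\top\vb\|_{\Ub_k^{-1}}^2 = \vb^\top\Xb\,\Ub_k^{-1}\Xb^\top\vb \le \|\vb\|_2^2 = \sum_{i\in\cC_{k-1}}\eta(\xb_i)^2 \le \zeta^2|\cC_{k-1}| \le \zeta^2|\cC_k|,
\]
giving the coefficient $\zeta\sqrt{|\cC_k|}$. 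I would emphasize that routing through this inequality rather than the naive triangle bound $\sum_i|\eta(\xb_i)|\,\|\xb_i\|_{\Ub_k^{-1}}$ is precisely what removes a spurious $\sqrt d$ factor: the crude estimate only yields $\zeta\sqrt{d|\cC_k|}$ via $\sum_i\|\xb_i\|_{\Ub_k^{-1}}^2 \le d$, and the tighter $\zeta\sqrt{|\cC_k|}$ is exactly what later produces the $\Delta/\sqrt d$ threshold in Theorem~\ref{thm:main}.

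For the noise term I would invoke the self-normalized concentration inequality of \citet{abbasi2011improved} to get $\|\sum_{i\in\cC_{k-1}}\eps_i\xb_i\|_{\Ub_k^{-1}} \le R\sqrt{2\log\big(\det(\Ub_k)^{1/2}\det(\lambda\Ib)^{-1/2}/\delta\big)}$ simultaneously for all $k$, and then control the log-determinant by the trace/AM--GM bound $\det(\Ub_k) \le (\lambda + |\cC_{k-1}|L^2/d)^d$ together with $\lambda = B^{-2}$ to land on $R\sqrt{2d\iota}$. The one point requiring genuine care --- and what I expect to be the main obstacle --- is that this sum runs over the \emph{adaptively selected} index set $\cC_{k-1}$ rather than over all past rounds, so I must check that the martingale structure underlying the self-normalized bound survives the data selection. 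This holds because the selection indicator $\mathbf{1}[\|\xb_i\|_{\Ub_i^{-1}}\ge\Gamma]$ is a deterministic function of $\xb_i$ and $\Ub_i$, hence measurable with respect to the conditioning $\sigma(\xb_{1:i},\eps_{1:i-1})$ appearing in the sub-Gaussian assumption; thus the sequence $\mathbf{1}[\,\cdot\,]\,\xb_i$ is predictable with respect to the noise, its associated design matrix is exactly $\Ub_k$, and the inequality applies verbatim. Summing the three $\Ub_k$-norm bounds, multiplying by $\|\xb\|_{\Ub_k^{-1}}$, and using $|\cC_{k-1}|\le|\cC_k|$ to match the stated form completes the proof at confidence $1-\delta$ from a single application of the time-uniform self-normalized bound.
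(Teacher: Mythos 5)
Your proposal is correct and follows essentially the same route as the paper's proof: the same three-term decomposition of $\btheta_k-\btheta^*$ (regularization, noise, misspecification), the same $\sqrt{\lambda}B=1$ bound on the regularization term, the same hat-matrix/self-normalizing bound $\|\sum_i\xb_i\eta_i\|_{\Ub_k^{-1}}\le\zeta\sqrt{|\cC_k|}$ (the paper cites it as Lemma 8 of \citet{Zanette2020LearningNO}), and the same treatment of the adaptively selected index set by absorbing the predictable selection indicator into the martingale before applying the self-normalized bound of \citet{abbasi2011improved} with the determinant--trace inequality. Your observations about why the hat-matrix route avoids a spurious $\sqrt{d}$ and why predictability of the selection rule preserves the martingale structure are exactly the points the paper's proof relies on.
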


Comparing the confidence radius $\tilde \cO(R\sqrt{d} + \zeta\sqrt{|\cC_{k}|})$ here with the conventional radius $\tilde \cO(R\sqrt{d})$ in OFUL, one can find that the misspecification error will affect the radius by an $\sqrt{|\cC_K|}$ factor. If we use all the data to do regression, the confidence radius will be in the order of $\tilde \cO(\sqrt{K})$ and therefore will lead to a $\cO(K\sqrt{\log K})$ regret bound (see Lemma 11 in~\citet{abbasi2011improved}). This makes the regret bound vacuous. In contrast, in our algorithm, the confidence radius is only $\sqrt{|\cC_K|}$ where $|\cC_K|$ is finite given Lemma~\ref{lm:finite1}. As a result, our regret bound will not grow with $K$ as in OFUL and will be smaller.

\paragraph{Skipped rounds are optimal}

Given the fact that the selected arm set $\cC_k$ is finite, the rest of the proof is simply showing that the skipped rounds $k\notin \cC_k$ are optimal and will not incur regret. Since we have $\|\xb\|_{\Ub_k^{-1}} \le \Gamma$ for those skipped rounds, the sub-optimality is bounded by the following (informal) lemma.
\begin{lemma}\label{lm:mis-informal}
The instantaneous regret for round $k \notin \cC_k$ is bounded by
\begin{align*}
    \Delta_k(\xb_k)  \le 2\zeta + 2\beta\|\xb_k\|_{\Ub_k^{-1}} \le \tilde \Theta (\zeta + \Delta + \sqrt{d}\Gamma),
\end{align*}
Setting $\Gamma = \tilde \Theta (\Delta/\sqrt{d})$ suggests that the instantaneous regret $\Delta_k(\xb_k) \le \Delta$, which means no instantaneous regret occurs on round $k$.
\end{lemma}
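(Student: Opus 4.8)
The plan is to prove the instantaneous-regret bound by a standard optimism argument and then convert the resulting strict inequality into exact optimality via the minimal-gap structure. Before the optimism step, I first need to confirm that the radius $\beta$ fixed in Algorithm~\ref{alg:main} dominates the data-dependent confidence width of Lemma~\ref{lm:decompose1}, namely $1 + R\sqrt{2d\iota} + \zeta\sqrt{|\cC_k|}$. Using $|\cC_k| \le 16d\Gamma^{-2}\iota_2$ from Lemma~\ref{lm:finite1} gives $\zeta\sqrt{|\cC_k|} \le 4\sqrt{d\iota_2}\,(\zeta/\Gamma)$, and since the hypothesis $2\sqrt{d}\zeta\iota_1 \le \Delta$ together with $\Gamma = \Delta/(2\sqrt{d}\iota_1)$ yields $\zeta \le \Gamma$, this term is at most $4\sqrt{d\iota_2}$. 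Hence the width is bounded by $1 + R\sqrt{2d\iota_3} + 4\sqrt{d\iota_2} = \beta$, so on the event of Lemma~\ref{lm:decompose1} we have $|\xb^\top(\btheta_k - \btheta^*)| \le \beta\|\xb\|_{\Ub_k^{-1}}$ for every $\xb$ and $k$.

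Next I would run the optimism argument. Writing $r(\xb) = \xb^\top\btheta^* + \eta(\xb)$ with $|\eta(\xb)| \le \zeta$, the optimal reward is bounded by $r_k^* = (\xb_k^*)^\top\btheta^* + \eta(\xb_k^*) \le (\xb_k^*)^\top\btheta_k + \beta\|\xb_k^*\|_{\Ub_k^{-1}} + \zeta \le \xb_k^\top\btheta_k + \beta\|\xb_k\|_{\Ub_k^{-1}} + \zeta$, where the final step uses the greedy selection in Line~\ref{ln:decision}. Symmetrically the played arm obeys $\xb_k^\top\btheta_k \le r(\xb_k) + \zeta + \beta\|\xb_k\|_{\Ub_k^{-1}}$. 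Subtracting gives the first displayed inequality $\Delta_k(\xb_k) \le 2\zeta + 2\beta\|\xb_k\|_{\Ub_k^{-1}}$. For a skipped round $k \notin \cC_k$, Line~\ref{ln:select} forces $\|\xb_k\|_{\Ub_k^{-1}} < \Gamma$, so $\Delta_k(\xb_k) < 2\zeta + 2\beta\Gamma$; noting $\beta = \tilde\Theta(\sqrt{d})$ and $\sqrt{d}\Gamma = \Delta/(2\iota_1) = \tilde\Theta(\Delta)$ recovers the stated $\tilde\Theta(\zeta + \Delta + \sqrt{d}\Gamma)$ estimate.

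Finally I would close the argument quantitatively and then conceptually. Substituting $\Gamma = \Delta/(2\sqrt{d}\iota_1)$ gives $2\beta\Gamma = \beta\Delta/(\sqrt{d}\iota_1)$, and since $\beta/\sqrt{d} = d^{-1/2} + 4\sqrt{\iota_2} + R\sqrt{2\iota_3}$ is dominated by the linear-in-log leading factor $(24+18R)$ inside $\iota_1$, while $2\zeta \le \Delta/(\sqrt{d}\iota_1)$ is even smaller, the two contributions sum to strictly less than $\Delta$. Thus $\Delta_k(\xb_k) < \Delta$, and by Definition~\ref{def:bandit-gap} any positive sub-optimality gap is at least $\Delta$, so $\Delta_k(\xb_k)$ must equal $0$ and the skipped round incurs no regret. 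I expect the main obstacle to be the constant bookkeeping in this last step: one must verify that the $\sqrt{\iota_2}$ and $\sqrt{\iota_3}$ terms in $\beta/\sqrt{d}$ are genuinely absorbed by the $(24+18R)\log(\cdot)$ term of $\iota_1$, because the gap argument needs the strict threshold $2\zeta + 2\beta\Gamma < \Delta$ rather than a mere order-$\Delta$ bound.
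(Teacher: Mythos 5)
Your proof is correct and follows essentially the same route as the paper: the same optimism argument (confidence bound from Lemma~\ref{lm:decompose1} combined with Lemma~\ref{lm:finite1} to justify the radius $\beta$, then the greedy selection rule in Line~\ref{ln:decision}) yields $\Delta_k(\xb_k) \le 2\zeta + 2\beta\|\xb_k\|_{\Ub_k^{-1}}$, and the skipped-round threshold plus the gap definition closes the argument. The only piece you defer --- verifying $2 + 4\sqrt{\iota_2} + R\sqrt{2\iota_3} \le \iota_1$ so that $2\zeta + 2\beta\Gamma < \Delta$ strictly --- is exactly what the paper isolates as Lemma~\ref{lm:aux1}, so you have correctly identified the one remaining bookkeeping step rather than missed it.
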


\paragraph{Achieving the constant regret}
To wrap up, as~\eqref{eq:main1} suggests, for rounds $k \in \cC_K$, we can follow the gap-dependent regret analysis in~\citet{abbasi2011improved} and obtain an $\tilde \cO(d^2\log(|\cC_K|)/\Delta)$ gap-dependent regret bound, which is independent of $K$ according to Lemma~\ref{lm:finite1}. For rounds $k \notin \cC_K$, Lemma~\ref{lm:mis-informal} guarantees a zero instantaneous regret. Putting them together yields the claimed constant regret bound.

\section{Constant Regret Bound with Unknown Sub-Optimality Gap $\Delta$}

\subsection{Algorithm}
Although Algorithm~\ref{alg:main} can achieve a constant regret, it requires the knowledge of sub-optimality gap $\Delta$. To tackle this problem, we propose a new algorithm that does not require the knowledge of sub-optimality gap $\Delta$. 

The algorithm is described in Algorithm~\ref{alg:suplin}. It inherits the arm elimination method from SupLinUCB~\citep{chu2011contextual}. A similar algorithm is also presented for misspecified linear bandits in~\citet{takemura2021parameter}. 

Algorithm~\ref{alg:suplin} works as follows. At each round $k \in [K]$, the algorithm maintains $l$ levels of ridge regression with different set $\cC_{k-1}^l$, where the estimation error for the $l$-th level is about $\beta(l)2^{-l}$ (we will prove this in the latter analysis). Then starting from the first level $l=1$ and the received decision set $\cD_k$, if there exists an arm in the decision set with a large uncertainty (i.e., $\|\xb\|_{(\Ub_k^l)^{-1}} \ge 2^{-l}$), the algorithm directly selects that arm (Line~\ref{ln:large-uncertainty}). According to Lemma~\ref{lm:finite1} in the analysis of DS-OFUL, the number of selected contexts at each level should be bounded. If the uncertainty for all arms is smaller than the threshold $2^{-l}$, the algorithm follows the arm elimination rule, which reduces the decision set into 
\begin{align}
     \cD_k^{l+1} = \left\{\xb: \xb \in \cD_k^{l}, r_k^l(\xb_k^l) - r_k^l(\xb) \le 2\beta(l) 2^{-l}\right\}. \label{eq:arm-elimination}
\end{align}
Then the algorithm enters the next level $l + 1$ until it reaches $\log(k)$-th level as Line~\ref{ln:ends} suggests. For the level $l \ge \log(k)$, the algorithm directly selects the arm with highest optimistic reward on Line~\ref{ln:large-reward} and does not add the index $k$ to the regression set $\cC_k^l$ as on Line~\ref{ln:endif} since the uncertainty is small enough.


Algorithm~\ref{alg:suplin} can be viewed as the multi-level version of Algorithm~\ref{alg:main} boosted by the peeling technique. Algorithm~\ref{alg:suplin} does not require the knowledge of the sub-optimality gap $\Delta$: if $\Delta$ is known, one can directly jump to a specific level $l_{\Delta} = \tilde \cO(\log(d/ \Delta))$, where the prediction error is bounded by $2\beta(l_{\Delta})2^{-l_{\Delta}} = \tilde \cO(\Delta)$ and is sufficient to achieve zero-instantaneous regret. 
However, when the $\Delta$ is unknown, Algorithm~\ref{alg:suplin} has to do a grid search over $2^{-1}, 2^{-2}, \cdots 2^{-l_{\Delta}}, \cdots$ and waste some of the samples to learn the first $l_{\Delta} - 1$ levels. We will revisit and compare the difference between these two algorithms in the later regret analysis.
\begin{algorithm}[ht]
\caption{SupLinUCB}\label{alg:suplin}
\begin{algorithmic}[1]
\REQUIRE Regularization $\lambda$, confidence radius $\beta(\cdot)$
\STATE Initialize $\cC^l_0 = \emptyset$ for all $l \in [\lceil \log(K)\rceil]$
\FOR {$k = 1, 2, \cdots K$}
\STATE Set $\cD_k^1 = \cD_k$ and $l = 1$
\REPEAT
\STATE Set $\Ub_{k}^l = \lambda \Ib + \sum_{i \in \cC_{k-1}^l}\xb_i\xb_i^\top$
\STATE Set $\btheta_{k}^l = (\Ub_{k}^{l})^{-1}\sum_{i \in \cC_{k-1}^l}r_i\xb_i$
\STATE Set $r_k^l(\xb) = \xb^{\top} \btheta_{k}^l + \beta(l) \left \|\xb\right\|_{(\Ub_{k}^{l})^{-1}}$
\STATE Select action $\xb_k^l = \argmax_{\xb \in \cD_k^{l}}r_k^l(\xb)$
\IF {$\max_{\xb \in \cD_k^{l}}\left \| \xb \right\|_{(\Ub_{k}^{l})^{-1}}  \ge 2^{-l}$}
    \STATE Choose $\xb_k = \argmax_{\xb \in \cD_k^{l}} \left \| \xb \right\|_{(\Ub_{k}^{l})^{-1}}$ \label{ln:large-uncertainty}
    \STATE Update $\cC_k^l = \cC_{k-1}^l \cup \{k\}$ 
    \STATE Keep $\cC_k^{l'} = \cC_{k-1}^{l'}$ for all $l' \neq l$
\ELSIF {$k \le 4^{l}d$}\label{ln:ends}
    \STATE Choose $\xb_k = \xb_k^l$ \label{ln:large-reward}
    \STATE Keep $\cC_k^{l'} = \cC_{k-1}^{l'}$ for all $l'\geq 1$ \label{ln:endif}
\ELSE
    \STATE Set $\cD_k^{l+1}$ according to~\eqref{eq:arm-elimination}
    \STATE Increase $l = l+1$
\ENDIF
\UNTIL{$\xb_k$ is chosen}
\STATE Take action $\xb_k$ and receive reward $r_k$
\ENDFOR
\end{algorithmic}
\end{algorithm}
\subsection{Regret Bound}

This subsection provides the regret upper bound for Algorithm \ref{alg:suplin}.

\begin{theorem}[Upper Bound]\label{thm:main-sup}
For any $0 < \delta < 1$, let $\lambda = B^{-2}$. For every integer $l > 0$, set $\beta(l) = 1 + R\sqrt{2d\iota_2(l)}$ where $\iota_{2}(l) = \log((d2^{l} + 16 L^2 B^2 8^l\iota_{1}(l)) / (d \delta))$ and $\iota_{1}(l) = \log\left(3LB2^l\right)$. If the misspecification level is bounded by $4l_{\Delta}\zeta\left(1 + 4\sqrt{d\iota_{1}(l_{\Delta})}\right) < \Delta$ where $l_{\Delta}$ is the minimal solution to $l_{\Delta} > \log(8\beta(l_{\Delta})/ \Delta)$, then with probability at least $1 - \delta$, the cumulative regret of Algorithm~\ref{alg:main} is bounded by
\begin{align*}
    &\text{Regret}(K) \le \frac{2560 d \beta^2(l_{\Delta}) \iota_{1}(l_{\Delta})}{\Delta}.
\end{align*}
\end{theorem}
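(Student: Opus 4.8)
The plan is to mirror the proof structure of Theorem~\ref{thm:main}, but adapted to the multi-level setting where the unknown gap $\Delta$ is handled by the level index $l$ playing the role of an adaptive threshold $\Gamma = 2^{-l}$. First I would establish the per-level analogue of Lemma~\ref{lm:decompose1}: for each level $l$ and each round $k$, the prediction error satisfies $|\xb^\top(\btheta_k^l - \btheta^*)| \le (1 + R\sqrt{2d\iota} + \zeta\sqrt{|\cC_k^l|})\|\xb\|_{(\Ub_k^l)^{-1}}$. Because the arm-elimination step only advances to level $l+1$ once all arms have uncertainty below $2^{-l}$, and because Line~\ref{ln:large-uncertainty} adds an index to $\cC_k^l$ only when some arm has uncertainty at least $2^{-l}$ at that level, I would invoke Lemma~\ref{lm:finite1} with $\Gamma = 2^{-l}$ to conclude $|\cC_k^l| \le 16 d\, 4^l\, \iota_1(l)$. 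Substituting this bound on $|\cC_k^l|$ back into the prediction error and using the misspecification assumption $4 l_\Delta \zeta(1 + 4\sqrt{d\iota_1(l_\Delta)}) < \Delta$, the term $\zeta\sqrt{|\cC_k^l|}$ is controlled so that the effective confidence radius at level $l$ is $\tilde\Theta(\beta(l))$, exactly the quantity used in the arm-elimination rule~\eqref{eq:arm-elimination}.

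The second step is to show that the elimination rule is \emph{sound}: with high probability, the optimal arm $\xb_k^*$ is never eliminated at any level. This follows by a union bound over levels and rounds from the per-level confidence guarantee, together with the observation that at a level where all uncertainties are below $2^{-l}$, the prediction error for every surviving arm is at most $\beta(l)2^{-l}$ plus the misspecification contribution. Since the elimination threshold in~\eqref{eq:arm-elimination} is $2\beta(l)2^{-l}$, the optimal arm's estimated reward stays within the surviving set provided the misspecification is dominated by $\beta(l)2^{-l}$, which is guaranteed by the stated bound on $\zeta$ at all levels up to $l_\Delta$. Consequently, any arm surviving to level $l$ is $\tilde\Theta(\beta(l)2^{-l})$-optimal, so once the algorithm reaches level $l_\Delta$ defined by $l_\Delta > \log(8\beta(l_\Delta)/\Delta)$, the surviving arms have instantaneous regret below $\Delta$. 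By the minimal-gap definition (Definition~\ref{def:bandit-gap}), this forces zero instantaneous regret on every round that is decided at level $\ge l_\Delta$ — this is the multi-level analogue of the ``skipped rounds are optimal'' argument in Lemma~\ref{lm:mis-informal}.

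The third step aggregates the regret. I would decompose the total regret by the level at which the action was ultimately selected (either by the large-uncertainty branch on Line~\ref{ln:large-uncertainty} or by reaching the terminal branch on Line~\ref{ln:large-reward}). For a round decided at level $l < l_\Delta$ via Line~\ref{ln:large-uncertainty}, the arm survived all elimination steps up to level $l$, so its instantaneous regret is $\tilde\cO(\beta(l)2^{-l})$; the number of such rounds is bounded by $|\cC_K^l| \le 16 d\, 4^l\, \iota_1(l)$ from the finiteness lemma. Multiplying the per-round regret by the count at level $l$ gives a contribution of order $d\,\beta(l)\,2^l\,\iota_1(l)$ per level, and since $\beta(l)2^{-l}$ is geometrically decreasing while the count grows geometrically, the dominant term comes from $l = l_\Delta$, yielding $\tilde\cO(d\,\beta(l_\Delta)^2\,\iota_1(l_\Delta)/\Delta)$ after using $2^{l_\Delta} = \tilde\Theta(\beta(l_\Delta)/\Delta)$. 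Rounds decided at levels $\ge l_\Delta$ contribute zero by the second step, and the terminal-branch rounds (Line~\ref{ln:large-reward}, triggered when $k \le 4^l d$) are few enough to be absorbed into the same bound.

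The hard part will be step three: carefully bounding the number of rounds decided at each level and the per-round regret they incur, and then showing the geometric sum telescopes so the $l_\Delta$ term dominates and produces exactly the constant $2560\, d\, \beta^2(l_\Delta)\iota_1(l_\Delta)/\Delta$. The subtlety is that a single round traverses multiple levels before an action is chosen, so I must attribute each round's regret to precisely one level and ensure the counting via $|\cC_K^l|$ (which counts indices \emph{added} at level $l$) correctly upper-bounds the number of large-uncertainty selections, while the elimination-branch traversals do not themselves incur extra regret beyond what the selecting level accounts for. Getting the terminal condition $k \le 4^l d$ on Line~\ref{ln:ends} to mesh with the finiteness bound $|\cC_k^l| \le 16d\,4^l\iota_1(l)$ — so that no level can be ``stuck'' accumulating unbounded large-uncertainty picks — is the key consistency check that makes the whole telescoping argument go through.
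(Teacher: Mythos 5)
Your proposal follows essentially the same route as the paper's proof: the per-level finiteness bound $|\cC_k^l|\le 16d\,4^l\iota_1(l)$ obtained from the single-level lemma with $\Gamma=2^{-l}$, the per-level confidence bound combined via a union bound over levels, an induction showing the optimal arm survives elimination, the conclusion that all arms surviving past level $l_\Delta$ incur zero instantaneous regret, and a level-wise regret decomposition whose geometric sum is dominated by the $l_\Delta$ term. The one point you flag as a subtlety — attributing regret for rounds decided on the terminal branch (Line~\ref{ln:large-reward}) — is actually handled more carefully in your outline than in the paper, which silently writes the regret as $\sum_{l\ge 1}\sum_{k\in\cC_K^l}(\cdot)$ and omits those rounds from the decomposition.
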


\begin{remark}
    Since $\beta(l) = \tilde \cO(\sqrt{dl})$ and $l_{\Delta} = \tilde \cO(\log(d/ \Delta))$, Theorem~\ref{thm:main-sup} suggests that SupLinUCB enjoys a constant regret bound $\tilde \cO(d^2 \Delta^{-1})$ when $\zeta \leq \tilde \cO(\Delta/\sqrt{d})$, which is independent of the total number of rounds $K$.
    Note that in Algorithm~\ref{alg:suplin}, the choices of $\lambda$ and $\beta_{l}$ do not depend on the sub-optimality gaps $\Delta$ and misspecification level $\zeta$. 
\end{remark}

\begin{remark}
When $\zeta \ge \Delta / \sqrt{d}$, it is hard to provide a gap-dependent regret bound due to the large misspecification level $\zeta$. However, a gap-independent regret bound of $\tilde \cO(\sqrt{dK} + \sqrt{d}\zeta K\log(K))$ is proved in~\citet{takemura2021parameter}, which suggests the performance of SupLinUCB algorithm will not significantly decrease when the condition on misspecification does not hold.     
\end{remark}

\begin{remark}\label{rm:3}
Comparing the constant factors of DS-OFUL (Algorithm~\ref{alg:main}) and SupLinUCB (Algorithm~\ref{alg:suplin}) on the dominating terms $\tilde \cO(\beta^2d / \Delta)$, one can find that the constant factors of SupLinUCB is significantly larger than DS-OFUL. This is because it takes more samples to learn the first $l_{\Delta} - 1$ levels in SupLinUCB while DS-OFUL directly learns the $l_{\Delta}$-th level. Therefore, despite having the same order of constant regret bound (in big-O notation), one can expect that SupLinUCB has a worse performance than DS-OFUL (when $\Delta$ is known or can be estimated by grid search).
\end{remark}
\subsection{Key Proof Techniques}
Here we provide additional proof techniques besides the techniques discussed in Section~\ref{sec:key-technique}. First of all, Lemmas~\ref{lm:finite1} and~\ref{lm:decompose1}, which are built on a single level selected by $\|\xb\|_{\Ub_k^{-1}} \ge \Gamma$, can be generalized to the following lemmas for all levels $l$. The detailed proof are deferred to Appendix~\ref{sec:bandit-sup}.
\begin{lemma}\label{lm:set1}
Set $\lambda = B^{-2}$, for any $k \in [K]$ and $l > 0$, $|\cC_k^l| \le 16d4^l\iota_1(l)$, where $\iota_1(l) = \log\left(3LB2^l\right)$.
\end{lemma}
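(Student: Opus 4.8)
The plan is to reduce Lemma~\ref{lm:set1} to the single-level statement of Lemma~\ref{lm:finite1} by observing that, restricted to any fixed level $l$, the bookkeeping of SupLinUCB is identical to a standalone run of DS-OFUL with constant threshold $\Gamma = 2^{-l}$. First I would isolate the subsequence of rounds that ever get added to $\cC^l$. By inspecting Algorithm~\ref{alg:suplin}, an index $k$ enters $\cC_k^l$ only through the branch guarded by $\max_{\xb\in\cD_k^l}\|\xb\|_{(\Ub_k^l)^{-1}}\ge 2^{-l}$; in that branch the selected arm $\xb_k$ is the uncertainty maximizer and hence satisfies $\|\xb_k\|_{(\Ub_k^l)^{-1}}\ge 2^{-l}$. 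Moreover, the level-$l$ Gram matrix is built purely from level-$l$ data, $\Ub_k^l=\lambda\Ib+\sum_{i\in\cC_{k-1}^l}\xb_i\xb_i^\top$, so the inserted arms accumulate into $\Ub^l$ in exactly the sequential fashion of DS-OFUL. The fact that most rounds never reach level $l$ is irrelevant, since the argument only uses that each \emph{selected} arm has uncertainty at least $\Gamma$ against the current Gram matrix, which in turn sees only those selected arms.

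With this identification in hand, the selection rule governing $(\cC^l,\Ub^l)$ is precisely that of Algorithm~\ref{alg:main} with $\Gamma=2^{-l}$. Since $l\ge 1$ forces $\Gamma=2^{-l}\le 1$, the hypothesis $0<\Gamma\le 1$ of Lemma~\ref{lm:finite1} is met, and I can invoke it verbatim. Substituting $\Gamma=2^{-l}$ into its conclusion yields
\begin{align*}
    |\cC_k^l| \le 16d\Gamma^{-2}\log(3LB\Gamma^{-1}) = 16d\,4^l\log\!\big(3LB\,2^l\big) = 16d\,4^l\iota_1(l),
\end{align*}
which is exactly the claimed bound.

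For completeness I would recall the mechanism underlying Lemma~\ref{lm:finite1}, since that is where the only genuine content resides. Each inserted arm contributes $\min(1,\|\xb_i\|_{(\Ub_i^l)^{-1}}^2)\ge \Gamma^2$ (using $\Gamma\le 1$), while the elliptical potential / log-determinant lemma bounds the total as $\sum_{i\in\cC_k^l}\min(1,\|\xb_i\|_{(\Ub_i^l)^{-1}}^2)\le 2\log(\det\Ub_k^l/\det\Ub_0^l)$, which after applying $\lambda=B^{-2}$ and the trace bound $\|\xb\|_2\le L$ (via AM–GM on the eigenvalues) is at most $2d\log(1+|\cC_k^l|L^2B^2/d)$. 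Combining the two estimates gives the transcendental inequality $|\cC_k^l|\,\Gamma^2\le 2d\log(1+|\cC_k^l|L^2B^2/d)$.

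The main obstacle, and the only step that is not purely mechanical, is solving this self-bounding inequality to extract the explicit bound. I would handle it by evaluating the right-hand side at the candidate $N_0=16d\Gamma^{-2}\log(3LB\Gamma^{-1})$: writing $u=3LB\Gamma^{-1}\ge 3$, the required inequality $2d\log(1+N_0L^2B^2/d)\le N_0\Gamma^2$ reduces to the elementary estimate $1+\tfrac{16}{9}u^2\log u\le u^8$, which holds for all $u\ge 3$ since $\log u\le u$. Because $N\mapsto 2d\log(1+NL^2B^2/d)-N\Gamma^2$ is concave and vanishes at $N=0$, the set of $N$ satisfying the inequality is an interval $[0,N^\ast]$; certifying that the function is nonpositive at $N_0$ forces $N^\ast\le N_0$, so every admissible $|\cC_k^l|$ obeys $|\cC_k^l|\le N_0$, completing the argument.
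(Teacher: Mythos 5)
Your proposal is correct and follows essentially the same route as the paper, which proves this lemma by reducing it verbatim to Lemma~\ref{lm:finite1} with $\Gamma = 2^{-l}$ (noting, as you do, that the level-$l$ Gram matrix is built only from level-$l$ selections, each of which has bonus at least $2^{-l}$). The only cosmetic difference is in your recap of the inner mechanism: you resolve the self-bounding inequality by substituting the candidate bound and invoking concavity, whereas the paper invokes a generic $x \ge a\log(x)+b$ lemma; both are valid and yield the same constant.
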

\begin{lemma}\label{lm:concent1}
Set $\lambda = B^{-2}$. For any level $l > 0$, for any $\delta > 0$, with probability at least $1 - \delta$, for all $k \in [K]$, the prediction error is bounded by
\begin{align*}
    \left|\xb^\top (\btheta_{k}^l - \btheta^*)\right| \le \left(1 + R\sqrt{2d\iota_{2}(l)} + \zeta \sqrt{\left|\cC_{k}^l\right|}\right)\|\xb\|_{(\Ub_k^l)^{-1}},
\end{align*}
for all $\xb$ such that $\|\xb\|_2 \le L$, where $\iota_{2}(l) = \log((d + |\cC_{k}^l| L^2 B^2) / (d \delta))$.
\end{lemma}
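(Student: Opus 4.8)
The plan is to mirror the single-level argument of Lemma~\ref{lm:decompose1}, applied separately to each fixed level $l$, and to split the prediction error into three contributions: a regularization bias, a misspecification bias, and a stochastic noise term. First I would substitute $r_i = \xb_i^\top\btheta^* + \eta(\xb_i) + \eps_i$ into the ridge solution $\btheta_k^l = (\Ub_k^l)^{-1}\sum_{i\in\cC_{k-1}^l}r_i\xb_i$ and use $\sum_{i\in\cC_{k-1}^l}\xb_i\xb_i^\top = \Ub_k^l - \lambda\Ib$ to obtain the identity
\begin{align*}
\btheta_k^l - \btheta^* = -\lambda(\Ub_k^l)^{-1}\btheta^* + (\Ub_k^l)^{-1}\!\!\sum_{i\in\cC_{k-1}^l}\eta(\xb_i)\xb_i + (\Ub_k^l)^{-1}\!\!\sum_{i\in\cC_{k-1}^l}\eps_i\xb_i.
\end{align*}
Applying $\xb^\top$ and Cauchy--Schwarz in the $(\Ub_k^l)^{-1}$-norm to each term factors out $\|\xb\|_{(\Ub_k^l)^{-1}}$, so it suffices to bound the $(\Ub_k^l)^{-1}$-norm of the three vectors on the right-hand side.

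For the regularization term, $\|\btheta^*\|_{(\Ub_k^l)^{-1}} \le \lambda^{-1/2}\|\btheta^*\|_2 \le \lambda^{-1/2}B$ because $\Ub_k^l \succeq \lambda\Ib$; with $\lambda = B^{-2}$ this yields $\lambda\|\btheta^*\|_{(\Ub_k^l)^{-1}} \le \sqrt{\lambda}\,B = 1$, the leading constant in the stated bound. For the misspecification term, I would stack the selected contexts into a design matrix $\Xb$ (rows $\xb_i^\top$, $i\in\cC_{k-1}^l$) and collect the errors into $\eb$ with $|\eb_i| = |\eta(\xb_i)|\le\zeta$. Then $\|\sum_i\eta(\xb_i)\xb_i\|_{(\Ub_k^l)^{-1}}^2 = \eb^\top\Xb(\Ub_k^l)^{-1}\Xb^\top\eb \le \|\eb\|_2^2 \le \zeta^2|\cC_k^l|$, where the middle inequality uses the standard fact $\Xb(\lambda\Ib + \Xb^\top\Xb)^{-1}\Xb^\top \preceq \Ib$ together with $\cC_{k-1}^l\subseteq\cC_k^l$. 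This produces the $\zeta\sqrt{|\cC_k^l|}$ term.

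The noise term is the crux. I would invoke the self-normalized concentration bound for vector-valued martingales \citep{abbasi2011improved}, which controls $\|\sum_i\eps_i\xb_i\|_{(\Ub_k^l)^{-1}}$ uniformly over all $k$ (it is an anytime bound, so no union over $k\in[K]$ is needed) by $2R^2\log\big(\det(\Ub_k^l)^{1/2}\lambda^{-d/2}/\delta\big)$. Bounding the determinant by AM--GM on its eigenvalues, $\det(\Ub_k^l)\le(\lambda + |\cC_k^l|L^2/d)^d$, and simplifying with $\lambda=B^{-2}$ and $d\ge1$ gives exactly the claimed $R\sqrt{2d\iota_2(l)}$ with $\iota_2(l) = \log((d+|\cC_k^l|L^2B^2)/(d\delta))$. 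The subtlety, and the main obstacle, is that the sum runs over the selected subset $\cC_{k-1}^l$ rather than all rounds, so I must verify that the martingale structure survives the data selection. The resolution is that whether round $i$ is added to level $l$, and which arm $\xb_i$ is then played, are both decided on Line~\ref{ln:large-uncertainty} from the contexts and $\Ub_i^l$ \emph{before} the reward (hence $\eps_i$) is observed; thus the masked vectors $\tilde\xb_i := \xb_i\,\mathbf{1}[i\text{ added to level }l]$ are predictable, and replacing $\xb_i$ by $\tilde\xb_i$ lets the self-normalized bound apply verbatim. Combining the three bounds and recombining the factor $\|\xb\|_{(\Ub_k^l)^{-1}}$ completes the proof; the only genuinely new ingredients relative to standard OFUL are the misspecification term and this predictability check, with everything else reducing to the routine determinant/trace estimate.
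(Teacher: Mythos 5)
Your proposal is correct and takes essentially the same route as the paper, which proves this lemma by porting the argument of Lemma~\ref{lm:decompose} to each fixed level $l$: the identical three-way decomposition into regularization, misspecification, and noise terms, the same projection bound $\Xb(\lambda\Ib+\Xb^\top\Xb)^{-1}\Xb^\top\preceq\Ib$ (Lemma~\ref{lm:lemma8}) for the $\zeta\sqrt{|\cC_k^l|}$ contribution, and the same self-normalized martingale bound combined with the determinant--trace inequality for the noise, with the data selection handled by exactly the indicator-masking (predictability) argument you describe. No gaps.
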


The following two proof techniques are crucial to prove constant regret bound of Algorithm \ref{alg:suplin}.

\paragraph{Optimal arm is never eliminated}
Considering the optimal arm in the eliminated set, which is defined by $\xb_k^{l, *} = \argmax_{\xb \in \cD_l}r(\xb)$. Obviously $\xb_k^{1, *} = \xb_k^*$. The following (informal) lemma says that the decision set always contains a nearly optimal action $\xb_k^{l, *}$:
\begin{lemma}[informal]\label{lm:optimal1}
    For any level $l > 0$, assume some good events hold, then there exists $\xb_k^{l,*} \in \cD_k^l$, such that $r(\xb^*_k) - r(\xb_k^{l,*}) \leq 2(l-1)\zeta\left(1 + 4 \sqrt{d\iota_1(l)}\right)$ where $\iota_1(l) = \log\left(3LB2^l\right)$.
\end{lemma}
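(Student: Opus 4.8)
The plan is to induct on the level index $l$, tracking the best \emph{true} reward available in the surviving decision set, $m_l := r(\xb_k^{l,*}) = \max_{\xb \in \cD_k^l} r(\xb)$, and to show that a single elimination step degrades this quantity by at most $2\bar\zeta_l$, where $\bar\zeta_l := \zeta(1 + 4\sqrt{d\iota_1(l)})$ collects the misspecification contribution at level $l$. The relevant good event is the concentration event of Lemma~\ref{lm:concent1} together with the deterministic cardinality bound of Lemma~\ref{lm:set1}. Since reaching level $l+1$ means the algorithm performed arm elimination at every earlier level $l' \le l$, at each such level we have $\max_{\xb \in \cD_k^{l'}}\|\xb\|_{(\Ub_k^{l'})^{-1}} < 2^{-l'}$ (otherwise the \texttt{REPEAT} loop would have exited at Line~\ref{ln:large-uncertainty}). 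Combined with $|\cC_k^{l}| \le 16 d 4^{l}\iota_1(l)$ from Lemma~\ref{lm:set1}, this gives $\zeta\sqrt{|\cC_k^{l}|}\,\|\xb\|_{(\Ub_k^{l})^{-1}} \le 4\zeta\sqrt{d\iota_1(l)}$ for every $\xb \in \cD_k^l$, which is the mechanism keeping the misspecification controlled at level $l$.

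Using Lemma~\ref{lm:concent1} in the form $|\xb^\top(\btheta_k^l - \btheta^*)| \le (\beta(l) + \zeta\sqrt{|\cC_k^l|})\|\xb\|_{(\Ub_k^l)^{-1}}$ and $r(\xb) = \xb^\top\btheta^* + \eta(\xb)$ with $|\eta(\xb)| \le \zeta$, I would first establish two one-sided estimates valid for all $\xb \in \cD_k^l$ during elimination: a lower (optimism) bound $r_k^l(\xb) \ge r(\xb) - \bar\zeta_l$, where the explicit bonus $\beta(l)\|\xb\|_{(\Ub_k^l)^{-1}}$ cancels the confidence term; and an upper bound $r_k^l(\xb) - r(\xb) \le 2\beta(l)\|\xb\|_{(\Ub_k^l)^{-1}} + \bar\zeta_l \le 2\beta(l)2^{-l} + \bar\zeta_l$. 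Both use $\|\xb\|_{(\Ub_k^l)^{-1}} < 2^{-l}$ and $\zeta\sqrt{|\cC_k^l|}\|\xb\|_{(\Ub_k^l)^{-1}} + \zeta \le \bar\zeta_l$.

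The inductive step splits on whether $\xb_k^{l,*}$ survives the elimination~\eqref{eq:arm-elimination}. If it survives, then $\xb_k^{l,*} \in \cD_k^{l+1}$ and $m_{l+1} \ge m_l$. If it is eliminated, the defining inequality gives $r_k^l(\xb_k^l) - r_k^l(\xb_k^{l,*}) > 2\beta(l)2^{-l}$, and I would use the always-surviving arm $\xb_k^l$ (for which the elimination test is trivially met) as the reference: chaining the upper bound on $r_k^l(\xb_k^l) - r(\xb_k^l)$, this elimination inequality, and the optimism bound at $\xb_k^{l,*}$ yields $r(\xb_k^l) > r_k^l(\xb_k^l) - 2\beta(l)2^{-l} - \bar\zeta_l > r_k^l(\xb_k^{l,*}) - \bar\zeta_l \ge m_l - 2\bar\zeta_l$. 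The crucial and most delicate point — the step I expect to be the main obstacle — is precisely this cancellation: the $2\beta(l)2^{-l}$ lost when converting the optimistic estimate $r_k^l(\xb_k^l)$ to the true reward $r(\xb_k^l)$ is exactly reimbursed by the $2\beta(l)2^{-l}$ surplus guaranteed by the elimination event, so the net per-level loss carries only $2\bar\zeta_l$ and no residual $\beta(l)2^{-l}$ term. Hence $m_{l+1} \ge r(\xb_k^l) > m_l - 2\bar\zeta_l$ in this case as well.

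Finally, with $m_{l+1} \ge m_l - 2\bar\zeta_l$ in both cases and the base case $m_1 = r(\xb_k^*)$ (as $\cD_k^1 = \cD_k$ forces $\xb_k^{1,*} = \xb_k^*$), telescoping gives $r(\xb_k^*) - r(\xb_k^{l,*}) \le \sum_{j=1}^{l-1} 2\bar\zeta_j$. Because $\iota_1(j) = \log(3LB2^j)$ is nondecreasing in $j$, each $\bar\zeta_j \le \bar\zeta_l$, so the sum is bounded by $2(l-1)\bar\zeta_l = 2(l-1)\zeta(1 + 4\sqrt{d\iota_1(l)})$, which is the claimed bound.
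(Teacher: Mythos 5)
Your proposal is correct and follows essentially the same route as the paper's proof of this lemma: induction on the level, a case split on whether the current near-optimal arm survives elimination, and the key cancellation in which the $2\beta(l)2^{-l}$ elimination margin absorbs the confidence-width terms so that only the misspecification contribution $2\zeta\bigl(1+4\sqrt{d\iota_1(l)}\bigr)$ is lost per level. Your packaging via the quantity $m_l$ and a final telescoping step is only a cosmetic variant of the paper's argument, which folds the induction hypothesis directly into each step.
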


Given the result of Lemma~\ref{lm:optimal1} and the existence of the sub-optimality gap $\Delta$, we have $\xb_k^{l, *} = \xb_k^*$ when $l$ is not too large. This means that the optimal arm is never eliminated from the decision set $\cD^l$. 

\paragraph{Sub-optimal arms are all eliminated}
Intuitively speaking, at level $l$, the prediction error is bounded by $\tilde \cO(\beta(l)\cdot 2^{-l})$ with some additional misspecification term $\zeta$. Therefore, when we eliminate the arms at level $l$, the sub-optimality of the arms in $\cD^l$ is bounded by the following (informal) lemma:
\begin{lemma}[informal]\label{lm:decision-set1}
    For any level $l > 0$, for any arm $\xb \in \cD_k^l$, $r(\xb^*_k) - r(\xb) \leq 4\beta(l) 2^{-l} + 2l\zeta\left(1 + 4 \sqrt{d\iota_1(l)}\right)$ where $\iota_1(l) = \log\left(3LB2^l\right)$.
\end{lemma}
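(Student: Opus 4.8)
The plan is to argue level by level, combining the per-level confidence bound (Lemma~\ref{lm:concent1}), the cardinality bound (Lemma~\ref{lm:set1}), and the optimal-arm-survival guarantee (Lemma~\ref{lm:optimal1}); the base case $l=1$ is immediate, since $\cD_k^1 = \cD_k$ and $r(\cdot)\in[0,1]$ force $r(\xb_k^*) - r(\xb) \le 1 \le 4\beta(1)2^{-1}$ because $\beta(1)\ge 1$. The substance is to convert the abstract estimation width into a clean bound on the deviation between the optimistic score $r_k^l$ and the true reward $r$, and then to feed the elimination rule~\eqref{eq:arm-elimination} through it.

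The first step is to isolate a residual misspecification scale. From Lemma~\ref{lm:concent1} and $\beta(l) = 1 + R\sqrt{2d\iota_2(l)}$ I have $|\xb^\top(\btheta_k^l - \btheta^*)| \le (\beta(l) + \zeta\sqrt{|\cC_k^l|})\|\xb\|_{(\Ub_k^l)^{-1}}$. The decisive observation is that the elimination branch of Algorithm~\ref{alg:suplin} is entered only when every surviving arm satisfies $\|\xb\|_{(\Ub_k^l)^{-1}} < 2^{-l}$, so plugging the cardinality bound $|\cC_k^l| \le 16d4^l\iota_1(l)$ of Lemma~\ref{lm:set1} gives $\zeta\sqrt{|\cC_k^l|}\,\|\xb\|_{(\Ub_k^l)^{-1}} < \zeta\cdot 4\cdot 2^l\sqrt{d\iota_1(l)}\cdot 2^{-l} = 4\zeta\sqrt{d\iota_1(l)}$. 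Combining this with $r(\xb) = \xb^\top\btheta^* + \eta(\xb)$, $|\eta(\xb)|\le\zeta$, and the definition of $r_k^l$, I obtain for every $\xb\in\cD_k^l$ in the elimination branch the optimism bound $r_k^l(\xb) \ge r(\xb) - M_l$ and the upper bound $r_k^l(\xb) \le r(\xb) + 2\beta(l)2^{-l} + M_l$, where $M_l := \zeta(1 + 4\sqrt{d\iota_1(l)})$ is exactly the per-level misspecification scale appearing in the statement.

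With these two one-sided bounds, the second step is the standard elimination chaining, now carrying the residual $M_l$. Writing $\xb_k^{l,*} := \argmax_{\xb\in\cD_k^l} r(\xb)$ for the best surviving arm, for any arm that survives into the next decision set I chain: optimism applied to $\xb_k^{l,*}$, the greedy inequality $r_k^l(\xb_k^l) \ge r_k^l(\xb_k^{l,*})$, the survival inequality $r_k^l(\xb_k^l) - r_k^l(\xb) \le 2\beta(l)2^{-l}$ from~\eqref{eq:arm-elimination}, and the upper bound on $r_k^l(\xb)$. This yields $r(\xb_k^{l,*}) - r(\xb) \le 4\beta(l)2^{-l} + 2M_l$, i.e.\ the confidence part is pinned to the elimination threshold while each level leaks an additive $O(M_l)$. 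Finally I transfer sub-optimality from the in-set optimum to the global optimum via Lemma~\ref{lm:optimal1}, which supplies $r(\xb_k^*) - r(\xb_k^{l,*}) \le 2(l-1)M_l$; adding the two pieces gives $r(\xb_k^*) - r(\xb) \le 4\beta(l)2^{-l} + 2lM_l$, which is the claimed bound (the informal statement follows by the harmless re-indexing between $\cD_k^l$ and $\cD_k^{l+1}$ together with the monotonicity $\beta(l-1)\le\beta(l)$, $2^{-(l-1)} = 2\cdot 2^{-l}$, at the cost of an absorbed constant).

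The main obstacle I anticipate is not the concentration or the elimination chaining, both of which mirror the well-specified SupLinUCB analysis, but the bookkeeping of how misspecification accumulates. Every level leaks an $M_l$ both through the width bound and through the optimal-arm-survival guarantee, and these leaks add up linearly in $l$, producing the $2l\zeta(1 + 4\sqrt{d\iota_1(l)})$ factor. The genuinely delicate point is guaranteeing that the raw misspecification term $\zeta\sqrt{|\cC_k^l|}\,\|\xb\|_{(\Ub_k^l)^{-1}}$ does not grow with the number of collected samples: this is precisely where the uncertainty threshold $\|\xb\|_{(\Ub_k^l)^{-1}}<2^{-l}$ must exactly offset the $4^l$ growth of $|\cC_k^l|$ from Lemma~\ref{lm:set1}, pinning their product at scale $\sqrt{d\iota_1(l)}$ rather than scaling with $K$. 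Securing this cancellation is what lets the bound stay gap-dependent and, ultimately, constant.
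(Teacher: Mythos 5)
Your proposal is correct and follows essentially the same route as the paper's proof: you combine the elimination threshold $\|\xb\|_{(\Ub_k^l)^{-1}} \le 2^{-l}$ with the cardinality bound $|\cC_k^l| \le 16d4^l\iota_1(l)$ to pin the misspecification term at $4\zeta\sqrt{d\iota_1(l)}$ (the paper's Corollary~\ref{col:beta-sup}), chain the one-sided deviation bounds through the elimination rule to get $4\beta(l)2^{-l}+2\zeta(1+4\sqrt{d\iota_1(l)})$ relative to the in-set optimum, and then add the $2(l-1)\zeta(1+4\sqrt{d\iota_1(l)})$ leakage from Lemma~\ref{lm:optimal1}. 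The minor index shift between $\cD_k^l$ and $\cD_k^{l+1}$ that you flag is also present (and glossed over the same way) in the paper's own argument.
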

Given Lemma~\ref{lm:decision-set1}, we know that when $l$ is sufficiently large (e.g., larger than $l_{\Delta}$), all $\xb \in \cD_k^l$ enjoys a sub-optimality less than $\Delta$. Combining with the existence of sub-optimality gap $\Delta$, we know that all of the sub-optimal arms are eliminated after level $l_{\Delta}$.

\paragraph{Regret decomposition}
Given Lemma~\ref{lm:set1} and Lemma~\ref{lm:decision-set1}, the regret over all $K$ rounds can be decomposed into
\begin{align}
    \text{Regret}(K) &= \sum_{k=1}^K \left(r(\xb_k^*) - r(\xb_k)\right)  = \sum_{l\ge 1}\sum_{k \in \cC_K^l}\left(r(\xb_k^*) - r(\xb_k)\right) = \sum_{l = 1}^{l_{\Delta}}\sum_{k \in \cC_K^l}\left(r(\xb_k^*) - r(\xb_k)\right)\notag,
\end{align}

where the last equality is due to the fact that no regret occurs after $l > l_{\Delta}$. For each level $l \le l_{\Delta}$, the summation of the instantaneous regret within $k \in \cC_K^l$ can be bounded following the gap-dependent regret bound of~\citet{abbasi2011improved} to obtain a $\tilde \cO(d^2\log|\cC_K^l|/\Delta)$ regret bound which is independent from $K$. Then taking the summation over $l \le l_{\Delta}$ yields the claimed constant regret bound.

\section{Lower Bound}
Following a similar idea in~\citet{lattimore2020learning}, we prove a gap-dependent lower bound for misspecified stochastic linear bandits. Note that stochastic linear bandit can be seen as a special case of linear contextual bandits with a fixed decision set $\cD_k=\cD$ across all round $k\in[K]$. 
Similar results and proof can be found in~\citet{du2019good} for episodic reinforcement learning. 

\begin{theorem}[Lower Bound]\label{thm:lower}
Given the dimension $d$ and the number of arms $|\cD|$, for any $\Delta \le 1$ and $\zeta \ge 3\Delta\sqrt{8\log(|\cD|)/(d - 1)}$, there exists a set of stochastic linear bandit problems $\bTheta$ with minimal sub-optimality gap $\Delta$ and misspecification error level $\zeta$, such that for any algorithm that has a sublinear expected regret bound for all $\btheta \in \bTheta$, i.e., $\EE[\text{Regret}_{\btheta}(K)] \le CK^\alpha$ with $C > 0$ and $ 0 \le \alpha < 1$, we have 
\begin{itemize}[leftmargin = *]
    \item When $K \leq \cO(|\cD|)$, the expected regret is lower bounded by $\EE_{\btheta \sim \text{Unif.}(\bTheta)}[\text{Regret}_{\btheta}(K)] \ge K\Delta$.
    \item When $K \geq \Omega(|\cD|)$, the expected regret is lower bounded by $\sup_{\btheta \in \bTheta}\EE[\text{Regret}_{\btheta}(K)] \ge \tilde \Omega(|\cD|\log(K)\Delta^{-1})$.
\end{itemize}
\end{theorem}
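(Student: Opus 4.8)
The plan is to adapt the ``good-feature / hard-instance'' construction of~\citet{lattimore2020learning} and \citet{du2019good} to the gap-dependent contextual setting. First I would build the hard family $\bTheta$ explicitly. Choose $|\cD|$ arms $\{\xb_1,\dots,\xb_{|\cD|}\}$ in $\RR^d$ whose feature vectors are nearly orthogonal after projecting out a common direction: concretely, take each $\xb_a$ to share a fixed first coordinate and have its remaining $d-1$ coordinates drawn from a set of near-orthogonal unit vectors (a Johnson--Lindenstrauss / random $\pm1$ construction gives $|\cD|$ vectors with pairwise inner products $O(\sqrt{\log|\cD|/(d-1)})$). For each ``special'' arm $a^\star\in[|\cD|]$ define a parameter $\btheta_{a^\star}$ so that the \emph{linear} part $\xb_a^\top\btheta_{a^\star}$ is essentially the same for all $a$, but the misspecification function $\eta(\cdot)$ secretly boosts the reward of $\xb_{a^\star}$ by exactly $\Delta$. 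The point of the construction is that the reward gap between the best and second-best arm is exactly $\Delta$, yet the linear signal distinguishing $a^\star$ from the others is of order $\Delta\cdot\sqrt{\log|\cD|/(d-1)}$, which the hypothesis $\zeta\ge 3\Delta\sqrt{8\log|\cD|/(d-1)}$ forces to be absorbable into the allowed misspecification budget $\zeta$. Thus each of the $|\cD|$ choices of $a^\star$ yields a valid instance with minimal gap $\Delta$ and misspecification level $\le\zeta$, and critically these instances are \emph{statistically nearly indistinguishable} to any algorithm that does not pull $\xb_{a^\star}$ itself.

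Next I would prove the two regimes separately. For the small-horizon regime $K\le\cO(|\cD|)$, I would run a counting/pigeonhole argument: since the instances differ only through the reward of the special arm, any algorithm that has pulled fewer than $|\cD|$ distinct arms cannot have identified $a^\star$, and averaging over a uniform choice of $a^\star\sim\mathrm{Unif}(\bTheta)$ shows that in expectation the algorithm pulls the special arm only a $1/|\cD|$ fraction of the time. Hence for $K\le\cO(|\cD|)$ it pulls a suboptimal arm on a constant fraction of rounds, each incurring instantaneous regret $\Delta$, giving $\EE_{\btheta\sim\mathrm{Unif}}[\text{Regret}] \ge K\Delta$ (up to the constant hidden in $\cO(|\cD|)$). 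This part is essentially combinatorial and uses that the reward distributions on the non-special arms are \emph{identical} across instances, so no information about $a^\star$ leaks except by playing $\xb_{a^\star}$.

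For the large-horizon regime $K\ge\Omega(|\cD|)$ I would switch to an information-theoretic change-of-measure argument. Fix the sublinear-regret hypothesis $\EE[\text{Regret}_{\btheta}(K)]\le CK^\alpha$ with $\alpha<1$. Using a standard divergence decomposition (Bretagnolle--Huber or the regret-to-KL lemma from~\citet{lattimore2020learning}), relate the expected number of pulls $N_{a^\star}$ of the special arm under the base instance to the KL divergence between the base and the $a^\star$-instance, which scales like $N_{a^\star}\cdot(\text{signal gap})^2/R^2$. Because the sublinear-regret algorithm must, on each instance, eventually play the special arm $\Omega(K-CK^\alpha)$ times to avoid linear regret, summing the information cost over all $|\cD|$ candidate special arms and invoking a Birge/Fano-type lower bound yields that \emph{some} instance forces $\Omega(|\cD|\log K/\Delta)$ regret; taking the supremum over $\btheta\in\bTheta$ gives the stated $\tilde\Omega(|\cD|\log(K)\Delta^{-1})$ bound, matching the gap-dependent $\log K$ scaling.

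The main obstacle I anticipate is the calibration of the construction so that \emph{all three} quantities---the minimal sub-optimality gap (exactly $\Delta$), the misspecification level (at most $\zeta$), and the near-indistinguishability of the $|\cD|$ instances---hold simultaneously under the single coupling constraint $\zeta\ge 3\Delta\sqrt{8\log|\cD|/(d-1)}$. In particular, the near-orthogonality bound on the feature inner products has to be tight enough that the residual linear signal the algorithm \emph{can} exploit is provably smaller than $\zeta$, while still leaving a genuine $\Delta$ reward gap that is realized purely through $\eta(\cdot)$; getting the constant $3\sqrt{8}$ to come out correctly is exactly where the probabilistic feature construction and the KL/Fano bookkeeping must be balanced against each other, and I expect the bulk of the technical work to live there rather than in either of the two regret regimes individually.
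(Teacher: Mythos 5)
Your overall architecture matches the paper's: a near-orthogonal feature set of size $|\cD|$ (the paper invokes Lemma 3.1 of \citet{lattimore2020learning} with $\eps=\sqrt{8\log(|\cD|)/(d-1)}$), a counting argument for the regime $K\le\cO(|\cD|)$, and a Bretagnolle--Huber/KL-decomposition argument for the $\log K$ regime. However, there are two concrete calibration errors in your construction that would prevent the proof from going through as described.

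First, you place the $\Delta$ reward boost of the special arm \emph{inside the misspecification function} $\eta$, with the linear part flat across arms. That forces $\|\eta\|_\infty\ge\Delta$, i.e.\ the instances only have misspecification level $\zeta$ when $\zeta\ge\Delta$ --- a far stronger requirement than the theorem's $\zeta\ge 3\Delta\sqrt{8\log(|\cD|)/(d-1)}$, which for large $d$ is much smaller than $\Delta$. The paper does the opposite: it sets $\btheta_i=\Delta\xb_i$ so that the \emph{linear} part already equals $\Delta$ at the special arm (since $\|\xb_i\|_2=1$) and is only $O(\Delta\eps)$ at every other arm; the true rewards are the tabular values ($\Delta$ at $\xb_i$, $0$ elsewhere), so $\eta$ only has to cancel the small cross-terms $\Delta\,\xb_i^\top\xb_j=O(\Delta\eps)\le\zeta$. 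Near-orthogonality is what makes a genuinely linear signal \emph{look} flat to an algorithm that has not pulled the special arm, while keeping the misspecification at $O(\Delta\eps)$ rather than $\Delta$.

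Second, a family with one instance per special arm does not support the change-of-measure needed for the $\tilde\Omega(|\cD|\log(K)\Delta^{-1})$ regime: moving the special arm from $i$ to $j$ changes the reward distribution at \emph{two} arms, so the divergence decomposition picks up $\EE[T_i(k)]$, which is $\Theta(k)$ under instance $i$ and kills the bound. The paper resolves this with a two-level family, adding parameters $\btheta_{(i,j)}=\Delta\xb_i+2\Delta\xb_j$ whose tabular rewards keep arm $i$ at $\Delta$ and raise arm $j$ to $2\Delta$; then $\PP_{\btheta_i}$ and $\PP_{\btheta_{(i,j)}}$ differ only at arm $j$, the KL decomposition isolates $\EE_{\btheta_i}[T_j(k)]$, and summing over $j\ne i$ gives the $(|\cD|-1)\log(K)/\Delta$ bound. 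This two-level structure is also where the constant $3$ in $\zeta\ge 3\Delta\eps$ comes from (the worst-case residual $|2\Delta\xb_j^\top\xb+\Delta\xb_i^\top\xb|\le 3\Delta\eps$), so the constant you flagged as needing care is tied to precisely the structural ingredient your family omits. Your ``base instance'' remark gestures at the right fix, but a base instance with no special arm has no suboptimal arm and hence no gap $\Delta$, so it cannot be a member of $\bTheta$.
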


\begin{remark}\label{rm:1}
Theorem~\ref{thm:lower} shows two regimes under the case $\zeta \ge \tilde \Omega(\Delta / \sqrt{d})$. In the first regime $K \le \cO(|\cD|)$ where the decision set is large (e.g., $|\cD| = d^{100}$), any algorithm will suffer from a linear regret $\tilde \cO(\Delta K)$, which suggests that the regime cannot be efficiently learnable. In the second regime $K \ge \Omega(|\cD|)$, Theorem~\ref{thm:lower} suggests an $\tilde \Omega(|\cD|\Delta^{-1}\log(K))$ regret lower bound, which is matched by the multi-armed bandit algorithm with an upper bound $\tilde \cO(|\cD|\Delta^{-1}\log(K))$~\citep{lattimore2020bandit}. Therefore, in this easier regime, linear function approximation cannot provide any performance improvement and one can simply adopt the multi-armed bandit algorithm to learn the bandit model.
\end{remark}
\begin{remark}
Theorems~\ref{thm:main} and~\ref{thm:lower} provide a holistic picture about the role of misspecification in linear contextual bandits. Here we focus on the more difficult regime $K \le |\cD|$. In the regime $K \le |\cD|$, when $\zeta \le \tilde \cO(\Delta / \sqrt{d})$, Theorem~\ref{thm:main} suggests that the bandit problem is efficiently learnable, and our algorithm DS-OFUL can achieve a constant regret, which improves upon the logarithmic regret bound in the well-specified setting~\citep{abbasi2011improved}. On the other hand, when $\zeta \ge \tilde \Omega(\Delta / \sqrt{d})$, Theorem~\ref{thm:lower} provides a linear regret lower bound suggesting that the bandit model can not be efficiently learned.
\end{remark}
\section{Experiments}

\begin{table*}[htbp!]
    \centering
    \caption{Averaged cumulative regret and elapsed time of DS-OFUL over 8 runs. The \textbf{bold face} value indicates the best (low regret or low elapsed time) for all the algorithm configurations}
    \vspace{1em}
    \label{tab:1}
    \begin{tabular}{C{19em}|C{7em}|C{7em}|C{8em}}
    \toprule
     Algorithm Configuration,  ($\Gamma$) & Regret (mean$\pm$std.) & Regret in last 1k steps & Elapsed Time(sec)\\
     \midrule
     OFUL~\citep{abbasi2011improved}, $\Gamma=0$ & $405.4\pm76.5$ & $4.94$ & $15.06$\\ 
     DS-OFUL (Algorithm~\ref{alg:main}), $\Gamma = 0.02$ &  $326.5\pm68.0$ & $\mathbf{0.0}$ & $8.59$ \\
     DS-OFUL (Algorithm~\ref{alg:main}), $\Gamma = 0.05$ & $\mathbf{235.75 \pm 40.3}$ & $\mathbf{0.0}$ & $6.30$\\
     DS-OFUL (Algorithm~\ref{alg:main}), $\Gamma = 0.08$ & $411.6\pm566.7$ & $22.44$ & $5.97$\\
     DS-OFUL (Algorithm~\ref{alg:main}), $\Gamma = 0.13$ & $1789.5\pm1918.8$ & $173.67$ & $\mathbf{5.56}$\\
     Eq. (6) in \citet{lattimore2020learning} & $433.36 \pm 64$ & $1.79$ & $\ge 7\ \text{hrs.}$\\
     Robust Linear Bandit \citep{ghosh2017misspecified} & $831.5 \pm 880.4$ & $42.58$ & $12.85$\\
     SupLinUCB (Algorithm~\ref{alg:suplin}) & $747.9 \pm 329.5$ & $\mathbf{0.0}$ & $31.86$\\
     \bottomrule
    \end{tabular}\vspace{-0.7em}
\end{table*}
To verify the performance improvement by data selection using the UCB bonus in Algorithm~\ref{alg:main} and the effectiveness of the parameter-free algorithm Algorithm~\ref{alg:suplin}, we conduct experiments for bandit tasks on both synthetic and real-world datasets, which we will describe in detail below.

\subsection{Synthetic Dataset}
The synthetic dataset is composed as follows: we set $d = 16$ and generate parameter $\btheta^* \sim \cN(\zero, \Ib_d)$ and contextual vectors $\{\xb_i\}_{i=1}^N \sim \cN(\zero, \Ib_d)$ where $N = 100$. The generated parameter and vectors are later normalized to be $\|\btheta^*\|_2 = \|\xb_i\|_2 = 1$. The reward function is calculated by $r_i = \la \btheta^*, \xb_i\ra + \eta_i$ where $\eta_i \sim \text{Unif}\{-\zeta, \zeta\}$. 
The contextual vectors and reward function is fixed after generated. The random noise on the receiving rewards $\eps_t$ are sampled from the standard normal distribution. 

We set the misspecification level $\zeta = 0.02$ and verified that the sub-optimality gap over the $N$ contextual vectors $\Delta \approx 0.18$. We do a grid search for $\beta = \{1, 3, 10\}$, $\lambda = \{1, 3, 10\}$ and report the cumulative regret of Algorithm~\ref{alg:main} with different parameter $\Gamma = \{0, 0.02, {0.05}, 0.08, 0.18\}$ over 8 independent trials with total rounds $K = 10000$. It is obvious that when $\Gamma = 0$, our algorithm degrades to the standard OFUL algorithm \citep{abbasi2011improved} which uses data from all rounds into regression.

Besides the OFUL algorithm, we also compare with the algorithm (LSW) in Equation (6) of~\citet{lattimore2020learning} and the RLB in~\citet{ghosh2017misspecified} in Figure~\ref{fig:1} and Table~\ref{tab:1}. For \citet{lattimore2020learning}, the estimated reward is updated by $r(\xb) = \xb^\top\btheta_k + \beta \|\xb\|_{\Ub_k^{-1}} + \eps\sum_{s=1}^k|\xb^\top \Ub_k^{-1}\xb_s^{-1}|$. However, since the time complexity of the LSW algorithm is $\tilde \cO(K^2)$ due to the hardness of calculating $\eps\sum_{s=1}^k|\xb^\top \Ub_k^{-1}\xb_s^{-1}|$ incrementally w.r.t. $k$. In our setting it takes more than 7 hours for 10000 rounds. 

For the RLB algorithm in \citet{ghosh2017misspecified}, we did the hypothesis test for $k = 10$ rounds and then decided whether to use OFUL or multi-armed UCB. The results show that both LSW and RLB achieve a worse regret than OFUL since in our setting $\zeta$ is relatively small.

The result is shown in Figure~\ref{fig:1} and the average cumulative regret on the last round is reported in Table~\ref{tab:1} with its variance over 8 trials. We can see that by setting $\Gamma \approx \Delta  / \sqrt{d}  \approx 0.18 / \sqrt{16} \approx  0.05$, Algorithm~\ref{alg:main} can achieve less cumulative regret compared with OFUL ($\Gamma = 0$). The algorithm with a proper choice of $\Gamma$ also convergences to zero instantaneous regret faster than OFUL. It is also evident that a too large $\Gamma = 0.18 \approx \Delta$ will cause the algorithm to fail to learn the contextual vectors and induce a linear regret. Also, our algorithm shows that using a larger $\Gamma$ can significantly boost the speed of the algorithm by reducing the number of regressions needed in the algorithm.

Besides the performance improvement achieved by Algorithm~\ref{alg:main}, the experiments also demonstrates the effectiveness of Algorithm~\ref{alg:suplin}. As Table~\ref{tab:1} suggests, SupLinUCB achieves a zero cumulative regret over the last 1000 steps. However, as discussed in Remark~\ref{rm:3}, the total regret of SupLinUCB is much higher than the DS-OFUL and OFUL since it takes more samples to learn the first $l_{\Delta} - 1$ levels which is not used by DS-OFUL. This constant larger sample complexity could also be verified by a longer elapsed time for executing the SubLinUCB comparing to DS-OFUL.


\begin{figure}[htbp]
\centering
\subfigure[Cumulative regret comparison of DS-OFUL (with difference choices of $\Gamma$), SupLinUCB, \citet{lattimore2020learning} and Robust Linear Bandit~\citet{ghosh2017misspecified} over 10000 rounds. Results are averaged over 8 replicates.]{\label{fig:1}\includegraphics[width=0.49\textwidth]{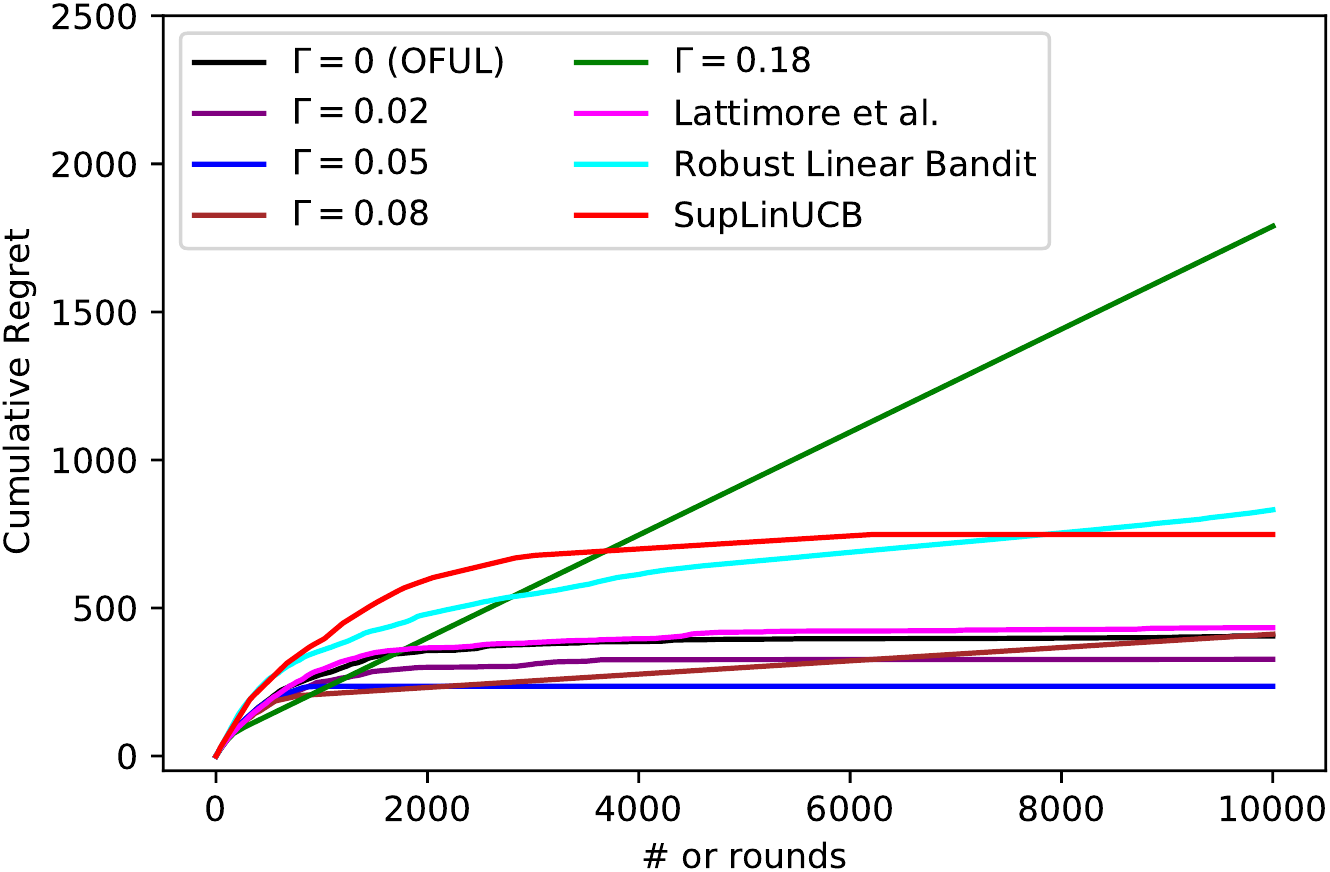}}
\subfigure[Cumulative regret of DS-OFUL on the Asirra dataset over 1M rounds with different $\Gamma$  under misspecification level $\zeta = 0.01$. Results are averaged over 8 runs. The cumulative regret of DS-OFUL (as well as OFUL) can be read from the y-axis on the left. The cumulative regret of SupLinUCB algorithm can be read from the y-axis on the right.]{\label{fig:2}\includegraphics[width=0.49\textwidth]{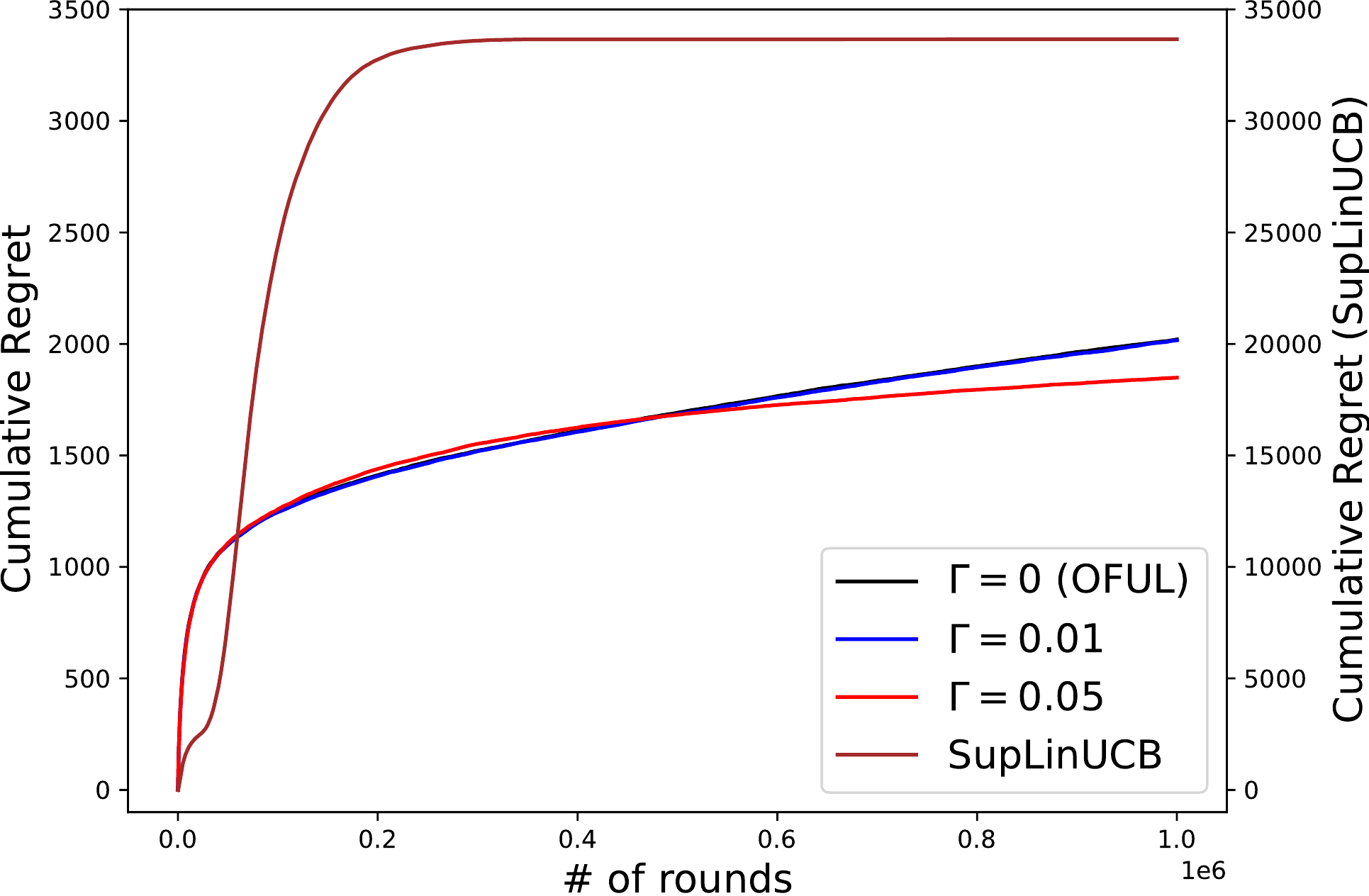}}
\caption{Cumulative regret on (a) synthetic dataset and (b) Asirra dataset }
\end{figure}

\subsection{Real-world Dataset}\label{sec:exp}
To demonstrate that the proposed algorithm can be easily applied to modern machine learning tasks, we carried out experiments on the Asirra dataset~\citep{asirra-a-captcha-that-exploits-interest-aligned-manual-image-categorization}. The task of agent is to distinguish the image of cats from the image of dogs. At each round $k$, the agent receives the feature vector $\bphi_{1, k} \in \RR^{512}$ of  a cat image and another feature vector $\bphi_{2, k} \in \RR^{512}$ of a dog image. Both feature vectors are generated using ResNet-18~\citep{he2016deep} pretrained on ImageNet~\citep{5206848}. We normalize $\|\bphi_{1, k}\|_2 = \|\bphi_{2, k}\|_2 = 1$. The agent is required to select the cat from these two vectors. It receives reward $r_t = 1$ if it selects the correct feature vector,  and receives $r_t = 0$ otherwise. It is trivial that the sub-optimality gap of this task is $\Delta = 1$. To better demonstrate the influence of misspecification on the performance of the algorithm, we only select the data with $|\bphi_i^\top\btheta^* - r_i| \le \zeta$ with $r_i = 1$ if it is a cat and $r_i = 0$ otherwise. $\btheta^*$ is a pretrained parameter on the whole dataset using linear regression $\btheta^* = \argmin_{\btheta} \sum_{i=1}^N (\bphi_i^\top\btheta - r_i)^2$, which the agent does not know.
For hyper-parameter tuning, we select $\beta = \{0.1, 0.3, 1\}$ and $\lambda = \{1, 3, 10\}$ by doing a grid search and repeat the experiments for 8 times over 1M rounds for each parameter configuration. As shown in Figure~\ref{fig:2}, when $\zeta = 0.01$, setting $\Gamma = 0.05 \approx \Delta / \sqrt{d}$ will eventually have a better performance comapred with OFUL algorithm (setting $\Gamma = 0$). On the other hand, the SupLinUCB algorithm (Algorithm~\ref{alg:suplin}) will suffer from a much higher, but constant regret bound, which is well aligned with our theoretical result especially Remark~\ref{rm:3}. We skip the Robust Linear Bandit~\citep{ghosh2017misspecified} algorithm since it is for stochastic linear bandit with fixed contextual features for each arm while here the contextual features are sampled and not fixed. The LSW (Equation~(6) in~\citet{lattimore2020learning}) is skipped due to the infeasible executing time.  

As a sensitivity analysis, we also set $\zeta = \{0.5, 0.1, 0.05\}$ to test the impact of misspecification on the performance of algorithm choices of $\Gamma$. More experiment configurations and results are deferred to Appendix~\ref{app:exp}.

\section{Conclusion and Future Work}
We study the misspecified linear contextual bandit from a gap-dependent perceptive. We propose an algorithm and show that if the misspecification level $\zeta \le \tilde \cO(\Delta / \sqrt d)$, the proposed algorithm, DS-ODUL, can achieve the same gap-dependent regret bound as in the well-specified case. Along with~\citet{lattimore2020learning, du2019good}, we provide a complete picture on the interplay between misspecification and sub-optimality gap, in which $\Delta / \sqrt{d}$ plays an important role on the phase transition of $\zeta$ to decide if the bandit model can be efficiently learned. 

Besides the aforementioned constant regret result, DS-OFUL algorithm requires the knowledge of sub-optimality ap $\Delta$. We prove that the SupLinUCB algorithm~\citep{chu2011contextual} can be viewed as a multi-level version of our algorithm and can also achieve a constant regret with our fine-grained analysis without the knowledge of $\Delta$. Experiments are conducted to demonstrate the performance of the DS-OFUL algorithm and verify the effectiveness of SupLinUCB algorithm.

The promising result suggests a few interesting directions for future research. For example, it would be interesting to incorporate the Lipschitz continuity or smoothness properties of the reward function to derive fine-grained results.


\appendix
\section{Experiment Details and Additional Results}\label{app:exp}
\subsection{Experiment Configuration}
\begin{wraptable}{r}{.5\textwidth}
    \centering
    \caption{The number of remaining data samples after data processing with expected misspecification level}
    \label{tab:2}
    \begin{tabular}{ccc}
    \toprule
     $\zeta$ & \# of cats & \# of dogs\\
     \midrule
     $\infty$ (without preprocessing) & $12500$ & $12500$ \\ 
     $0.5$ (linear separable) & $10316$ & $10511$ \\
     $0.1$ & $3182$ & $3248$ \\
     $0.05$ & $2408$ & $2442$ \\
     $0.01$ & $1886$ & $1905$ \\
     \bottomrule
    \end{tabular}
\end{wraptable}
The experiment on synthetic dataset is conducted on Google Colab with a 2-core Intel\textsuperscript{\textregistered} Xeon\textsuperscript{\textregistered} CPU @ 2.20GHz. The experiment on the real-world Asirra dataset~\citep{asirra-a-captcha-that-exploits-interest-aligned-manual-image-categorization} is conducted on an AWS p2-xlarge instance.

\subsection{Data Preprocessing for the Asirra Dataset}
To demonstrate how our algorithm can deal with different levels of misspecification, we do data preprocessing before feeding the data into the agent. As described in Section~\ref{sec:exp}, the remaining data with expected misspecification level $\zeta$ are shown in Table~\ref{tab:2}. It can be verified that even with the smallest misspecification level, there are still more than $10\%$ of the data is selected.

\subsection{Additional Result on the Asirra Dataset}
As a sensitivity analysis, we change the misspecification level in the preprocessing part in the Asirra dataset. The result is shown in Figure~\ref{fig:4}. This result suggests that when the misspecification is small enough, setting $\Gamma = \Delta /\sqrt{d}$ can deliver a reasonable result and SupLinUCB~\citet{chu2011contextual} can achieve a constant regret bound when $\zeta \le 0.1$. It is aligned with the parameter setting in our Theorem~\ref{thm:main} and the result in our Theorem~\ref{thm:main-sup}. Meanwhile, we found that when $\zeta = 0.5$, which means it is strictly larger than the threshold $\Delta / \sqrt{d}$, the algorithm cannot achieve 
a similar performance with of $\zeta < 0.1$, regardless of the setting of parameter $\Gamma$. This also verifies the theoretical understanding of how a large misspecification level will harm the performance of the algorithm.
\begin{figure}[htbp]
\centering
\subfigure[$\zeta = 0.5$]{\label{fig:a}\includegraphics[width=0.32\textwidth]{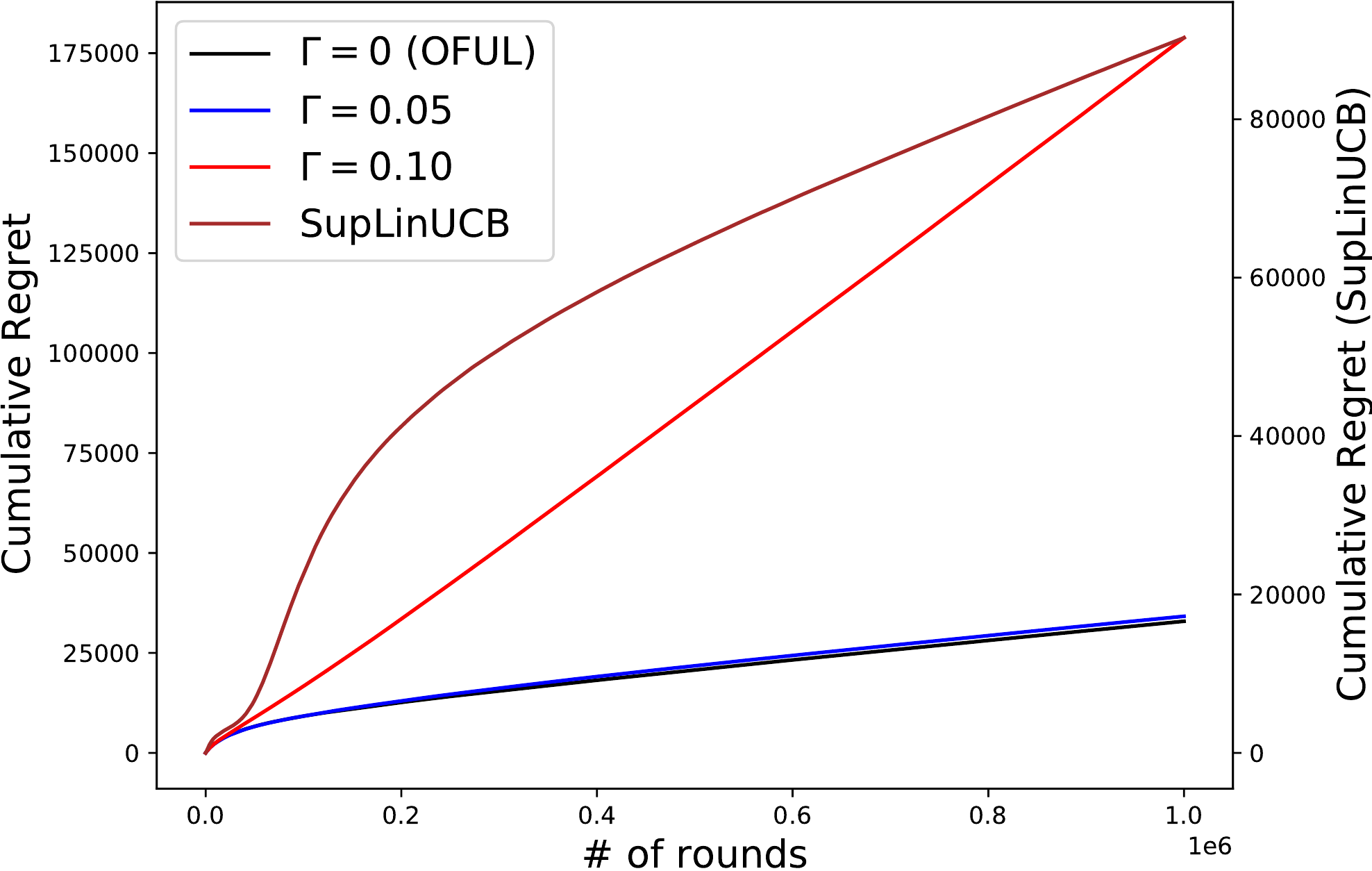}}
\subfigure[$\zeta = 0.1$]{\label{fig:b}\includegraphics[width=0.32\textwidth]{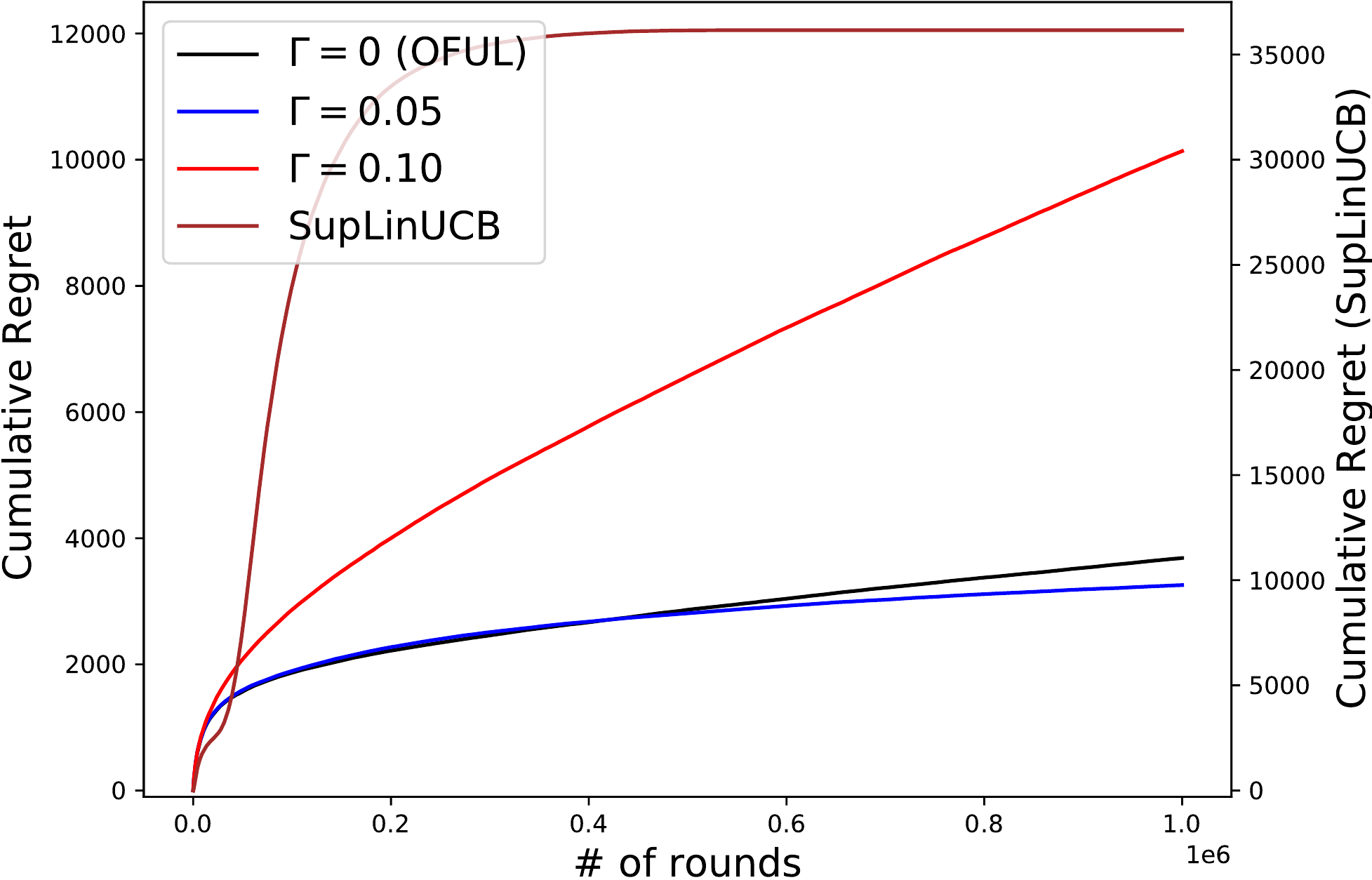}}
\subfigure[$\zeta = 0.05$]{\label{fig:c}\includegraphics[width=0.32\textwidth]{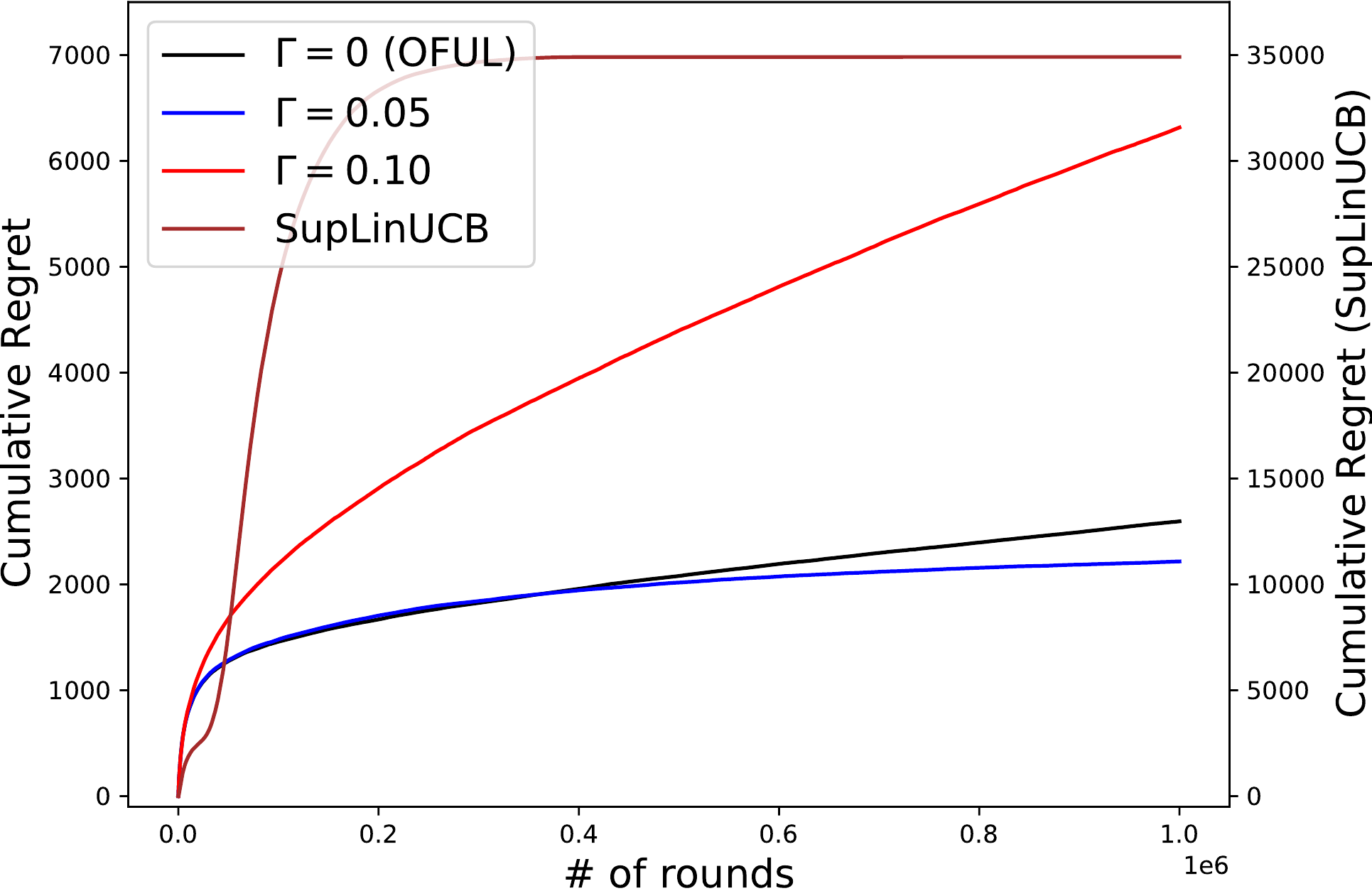}}
\caption{The performance of DS-OFUL under different misspecification levels $\zeta$. Results are averaged over 8 runs, with standard errors shown as shaded areas.}\label{fig:4}
\end{figure}

\section{Detailed Proof of Theorem~\ref{thm:main}}\label{app:proof0}
\label{sec:bandit}
In this section, we provide detailed proof for Theorem~\ref{thm:main}. First, we present a technical lemma to bound the total number of data used in the online linear regression in Algorithm \ref{alg:main}.
\begin{lemma}[Restatement of Lemma~\ref{lm:finite1}]\label{lm:finite}
Given $0 < \Gamma \le 1$, set $\lambda = B^{-2}$. For any $k \in [K]$, $|\cC_k| \le 16d\Gamma^{-2}\log(3LB\Gamma^{-1})$.
\end{lemma}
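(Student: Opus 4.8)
The plan is to use the standard elliptical-potential (log-determinant) argument, exploiting that every index admitted into $\cC_k$ contributes a contextual vector whose uncertainty is at least $\Gamma$. First I would pass to the selected subsequence: let $i_1 < i_2 < \cdots < i_m$ enumerate the elements of $\cC_k$, so $m = |\cC_k|$, and define $V_0 = \lambda \Ib$ and $V_j = \lambda \Ib + \sum_{j' \le j}\xb_{i_{j'}}\xb_{i_{j'}}^\top$. The key observation is that the regression matrix in force when index $i_j$ is tested in Line~\ref{ln:select} is exactly $V_{j-1}$ (only the previously selected indices $i_1, \dots, i_{j-1}$ have been added to the regression set), so the admission rule $\|\xb_{i_j}\|_{\Ub_{i_j}^{-1}} \ge \Gamma$ reads $\|\xb_{i_j}\|_{V_{j-1}^{-1}}^2 \ge \Gamma^2$.

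Next I would sandwich $\det V_m$. For the lower bound, the matrix determinant lemma gives $\det V_j = \det V_{j-1}\,(1 + \|\xb_{i_j}\|_{V_{j-1}^{-1}}^2) \ge (1 + \Gamma^2)\det V_{j-1}$, and telescoping yields $\det V_m \ge \lambda^d(1 + \Gamma^2)^m$. For the upper bound, AM-GM on the eigenvalues together with the trace bound $\mathrm{tr}(V_m) = d\lambda + \sum_{j=1}^m \|\xb_{i_j}\|_2^2 \le d\lambda + mL^2$ gives $\det V_m \le (\mathrm{tr}(V_m)/d)^d \le (\lambda + mL^2/d)^d$. Combining the two, taking logarithms, substituting $\lambda = B^{-2}$, and using $\log(1 + \Gamma^2) \ge \Gamma^2/2$ (valid for $0 < \Gamma \le 1$) produces the self-bounding inequality
\[
\frac{\Gamma^2}{2}\,m \le d\log\!\Big(1 + \frac{m L^2 B^2}{d}\Big).
\]

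Finally I would invert this transcendental inequality. The function $g(m) = \frac{\Gamma^2}{2}m - d\log(1 + mL^2B^2/d)$ is convex with $g(0) = 0$, so the constraint $g(m) \le 0$ confines $m = |\cC_k|$ to the interval $[0, m_{\max}]$ determined by the larger root of $g$. It therefore suffices to check that $g(m^\star) \ge 0$ at the claimed value $m^\star = 16 d\Gamma^{-2}\log(3LB\Gamma^{-1})$, since this forces $m_{\max} \le m^\star$ and hence $|\cC_k| \le m^\star$. Substituting $m^\star$ and simplifying reduces the task to the purely algebraic inequality $(3LB\Gamma^{-1})^8 \ge 1 + 16 L^2 B^2 \Gamma^{-2}\log(3LB\Gamma^{-1})$; writing $u = 3LB\Gamma^{-1} \ge 3$ (using $L, B \ge 1$ and $\Gamma \le 1$) this becomes $u^8 \ge 1 + \frac{16}{9}u^2\log u$, which holds comfortably for all $u \ge 3$. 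I expect the only real care to be needed in this last step, where the constants must be tracked so as to land exactly on the factor $16$ and the logarithmic argument $3LB\Gamma^{-1}$; the potential-function core of the argument is entirely routine.
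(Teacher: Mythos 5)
Your proposal is correct and follows essentially the same route as the paper: both restrict to the selected subsequence (where the admission rule forces $\|\xb_{i_j}\|_{V_{j-1}^{-1}} \ge \Gamma$), apply the elliptical-potential/log-determinant argument to obtain the self-bounding inequality $\tfrac{\Gamma^2}{2}m \le d\log(1 + mL^2B^2/d)$, and then invert it to land on $16d\Gamma^{-2}\log(3LB\Gamma^{-1})$. The only cosmetic differences are that you unroll the potential lemma by hand via determinant telescoping and invert the transcendental inequality by checking the sign of the convex function $g$ at the claimed value, whereas the paper cites the elliptical potential lemma of \citet{abbasi2011improved} and an auxiliary inversion lemma of the form $x \ge 4a\log(2a)+2b \Rightarrow x \ge a\log x + b$.
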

Lemma~\ref{lm:finite} suggests that up to $\tilde O(d\Gamma^{-2})$ contextual vectors have a UCB bonus greater than $\Gamma$. A similar result is also provided in~\citet{he2021uniformpac}, suggesting an $\tilde \cO(\Gamma^{-2})$ Uniform-PAC sample complexity. Lemma~\ref{lm:finite} also suggests that the numbers of data points added into the regression set $\cC$ is finite. Thus, the impact of the noise and the misspecification on the linear regression estimator can be well-controlled.

For a linear regression with up to $|\cC_k|$ data points, the  next lemma controls the prediction error under misspecification.

\begin{lemma}[Formal statement of Lemma~\ref{lm:decompose1}]\label{lm:decompose}
    Let $\lambda = B^{-2}$. For all $\delta > 0$, with probability at least $1 - \delta$, for all $\xb \in \RR^d, k \in [K]$, the prediction error is bounded by:
    \begin{align*}
        |\xb^\top(\btheta_k - \btheta^*)| & \le \left(1 + R\sqrt{2d\iota} + \zeta\sqrt{|\cC_k|}\right)\|\xb\|_{\Ub_k^{-1}},
    \end{align*}
    where $\iota = \log((d + |\cC_k|L^2B^2) / (d\delta))$ and $|\cC_k|$ is the total number of data used in regression at the $k$-th round.
\end{lemma}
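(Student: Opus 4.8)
The plan is to expand the ridge estimator explicitly and split the error $\btheta_k-\btheta^*$ into three pieces — a regularization (bias) term, a misspecification term, and a stochastic noise term — and then bound each one against $\|\xb\|_{\Ub_k^{-1}}$ by Cauchy--Schwarz in the $\Ub_k^{-1}$ inner product. Concretely, I would substitute the reward model $r_i = \xb_i^\top\btheta^* + \eta(\xb_i) + \eps_i$ into $\btheta_k = \Ub_k^{-1}\sum_{i\in\cC_{k-1}} r_i\xb_i$ and use the identity $\sum_{i\in\cC_{k-1}}\xb_i\xb_i^\top = \Ub_k - \lambda\Ib$ to obtain
\begin{align*}
    \btheta_k - \btheta^* = -\lambda\Ub_k^{-1}\btheta^* + \Ub_k^{-1}\!\!\sum_{i\in\cC_{k-1}}\!\!\eta(\xb_i)\xb_i + \Ub_k^{-1}\!\!\sum_{i\in\cC_{k-1}}\!\!\eps_i\xb_i.
\end{align*}
Pairing with $\xb$ and applying $|\xb^\top\Ub_k^{-1}\mathbf{v}|\le\|\xb\|_{\Ub_k^{-1}}\|\mathbf{v}\|_{\Ub_k^{-1}}$ then reduces the claim to bounding the $\Ub_k^{-1}$-norm of each of the three vectors on the right. (Since $|\cC_{k-1}|\le|\cC_k|$, any bound stated in terms of $|\cC_{k-1}|$ upgrades immediately to $|\cC_k|$.)

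The bias term is elementary: since $\Ub_k^{-1}\preceq\lambda^{-1}\Ib$ and $\|\btheta^*\|_2\le B$, we get $\lambda\|\btheta^*\|_{\Ub_k^{-1}}\le\lambda\cdot B/\sqrt{\lambda}=\sqrt{\lambda}\,B = 1$ using $\lambda=B^{-2}$, which produces the leading constant $1$. The misspecification term is the delicate step and the one I expect to carry the real content. Writing the stacked data matrix $\mathbf{Z}$ with rows $\xb_i^\top$ ($i\in\cC_{k-1}$) and the error vector $\mathbf{e}=(\eta(\xb_i))_i$, the quantity to control is $\|\mathbf{Z}^\top\mathbf{e}\|_{\Ub_k^{-1}}^2 = \mathbf{e}^\top\mathbf{Z}(\lambda\Ib+\mathbf{Z}^\top\mathbf{Z})^{-1}\mathbf{Z}^\top\mathbf{e}$. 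A crude triangle-inequality/Cauchy--Schwarz bound $\zeta\sum_i\|\xb_i\|_{\Ub_k^{-1}}$ would cost an extra $\sqrt{d}$ factor and break the claimed rate; instead, via the SVD of $\mathbf{Z}$ one checks that $\mathbf{Z}(\lambda\Ib+\mathbf{Z}^\top\mathbf{Z})^{-1}\mathbf{Z}^\top\preceq\Ib$ (its eigenvalues are $\sigma^2/(\lambda+\sigma^2)<1$), so that $\|\mathbf{Z}^\top\mathbf{e}\|_{\Ub_k^{-1}}^2\le\|\mathbf{e}\|_2^2 = \sum_{i\in\cC_{k-1}}\eta(\xb_i)^2\le|\cC_{k-1}|\zeta^2$. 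This yields exactly the $\zeta\sqrt{|\cC_k|}$ term without any dimensional loss.

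For the noise term, I would invoke the self-normalized concentration inequality of \citet{abbasi2011improved} for $\|\sum_{i\in\cC_{k-1}}\eps_i\xb_i\|_{\Ub_k^{-1}}$. This bound is valid for adaptively chosen designs and holds uniformly over all $k$ with probability $1-\delta$ (so no union bound over rounds is needed), and it applies to the \emph{selected} subsequence because the inclusion indicator $\mathbf{1}[i\in\cC_i]$ is predictable — it is determined by $\xb_i$ and $\Ub_i$, which are fixed before $\eps_i$ is revealed. It remains to bound the log-determinant factor: by the trace/AM--GM inequality $\det(\Ub_k)\le\big((\lambda d + |\cC_{k-1}|L^2)/d\big)^d$, so after dividing by $\det(\lambda\Ib)=\lambda^d$ and substituting $\lambda=B^{-2}$ one obtains $\det(\Ub_k)^{1/2}\lambda^{-d/2}\le\big((d+|\cC_{k-1}|L^2B^2)/d\big)^{d/2}$. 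Plugging this in gives $\|\sum_i\eps_i\xb_i\|_{\Ub_k^{-1}}^2\le 2R^2\big[\log(1/\delta)+\tfrac{d}{2}\log\frac{d+|\cC_k|L^2B^2}{d}\big]$, which I would relax (using $d\ge1$ and nonnegativity of both logarithmic terms) to $2dR^2\iota$ with $\iota=\log((d+|\cC_k|L^2B^2)/(d\delta))$, producing the $R\sqrt{2d\iota}$ term.

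Combining the three bounds via the triangle inequality gives $|\xb^\top(\btheta_k-\btheta^*)|\le\big(1+R\sqrt{2d\iota}+\zeta\sqrt{|\cC_k|}\big)\|\xb\|_{\Ub_k^{-1}}$, as claimed. The main obstacle is the tight handling of the misspecification term (establishing $\mathbf{Z}\Ub_k^{-1}\mathbf{Z}^\top\preceq\Ib$ so as to avoid the spurious $\sqrt d$), together with justifying that the self-normalized martingale bound applies verbatim to the adaptively subsampled index set $\cC_{k-1}$; the bias and determinant computations are routine.
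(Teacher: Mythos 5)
Your proposal is correct and follows essentially the same route as the paper: the same three-way decomposition of $\btheta_k-\btheta^*$ into bias, misspecification, and noise terms, the same Cauchy--Schwarz reduction, the same projection bound $\mathbf{Z}(\lambda\Ib+\mathbf{Z}^\top\mathbf{Z})^{-1}\mathbf{Z}^\top\preceq\Ib$ for the misspecification term (which the paper imports as Lemma 8 of \citet{Zanette2020LearningNO} rather than reproving via SVD), and the same self-normalized martingale bound plus determinant--trace inequality for the noise, with the predictability of the selection indicator handled just as the paper does via $\ind[k'\in\cC_{k-1}]\xb_{k'}$.
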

Lemma~\ref{lm:decompose} provides a similar confidence bound as the well-specified linear contextual bandits algorithms like OFUL~\citep{abbasi2011improved}. Comparing the confidence radius here $\tilde \cO(R\sqrt{d} + \zeta\sqrt{|\cC_{k-1}|})$ with the conventional radius in OFUL $\tilde \cO(R\sqrt{d})$, one can find that there is an additional term $\zeta\sqrt{|\cC_{k}|}$ that is caused by the misspecification. If we directly use all data to do the regression, the resulting confidence radius will be in the order of $\tilde \cO(\sqrt{K})$ and therefore will lead to a $\cO(K\sqrt{\log K})$ regret bound (see Lemma 11 in~\citet{abbasi2011improved}). This makes the regret bound vacuous. In our algorithm, however, the confidence radius is only $\sqrt{|\cC_{k}|}$ where $|\cC_k|$ is bounded by Lemma~\ref{lm:finite}. As a result, our regret bound will not be vacuous (i.e., superlinear in $K$). 

When the misspecification level is well bounded by $\zeta = \tilde \cO(\Delta / \sqrt{d})$, the following corollary is a direct result of Lemmas~\ref{lm:decompose} by replacing the term $|\cC_k|$ with its upper bound provided in Lemma~\ref{lm:finite}.
\begin{corollary}\label{col:beta}
    Suppose $2\sqrt{d}\zeta\iota_1 \le \Delta$, let $\lambda = B^{-2}$ and $0 < \Gamma \le 1$. Let $\beta = 1 + 2\Delta\Gamma^{-1}\sqrt{\iota_2}/\iota_1 + R\sqrt{2d\iota_3}$ where $\iota_2 = \log(3LB\Gamma^{-1})$, $\iota_3 = \log((1 + 16L^2B^2\Gamma^{-2}\iota_2)/{\delta})$, then with probability at least $1 - \delta$, for all $\xb \in \RR^d, k \in [K]$, the estimation error for all $k\in[K]$ is bounded by: $|\xb^\top(\btheta_k - \btheta^*)| \le \beta\|\xb\|_{\Ub_k^{-1}}$.
\end{corollary}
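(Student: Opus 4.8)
The plan is to treat Corollary~\ref{col:beta} as a direct consequence of Lemma~\ref{lm:decompose}, whose confidence radius $1 + R\sqrt{2d\iota} + \zeta\sqrt{|\cC_k|}$ depends on the round $k$ only through the cardinality $|\cC_k|$. Since Lemma~\ref{lm:finite} bounds $|\cC_k|$ uniformly over all $k \in [K]$ by $16d\Gamma^{-2}\iota_2$ with $\iota_2 = \log(3LB\Gamma^{-1})$, the whole strategy is to substitute this worst-case cardinality into each of the three summands of the radius and verify that the result collapses to the stated $\beta$. First I would condition on the $1-\delta$ event of Lemma~\ref{lm:decompose}, on which the prediction-error bound holds simultaneously for all $\xb \in \RR^d$ and all $k \in [K]$; the corollary then inherits exactly this high-probability event, with no further union bound needed.

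For the logarithmic term I would note that $\iota = \log((d + |\cC_k|L^2B^2)/(d\delta))$ is monotone increasing in $|\cC_k|$, so inserting $|\cC_k| \le 16d\Gamma^{-2}\iota_2$ gives $d + |\cC_k|L^2B^2 \le d\bigl(1 + 16L^2B^2\Gamma^{-2}\iota_2\bigr)$ and hence $\iota \le \log\bigl((1 + 16L^2B^2\Gamma^{-2}\iota_2)/\delta\bigr) = \iota_3$. This bounds $R\sqrt{2d\iota}$ by $R\sqrt{2d\iota_3}$, which is precisely the last summand of $\beta$. The constant first term $1$ is carried over unchanged.

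The only step with any content is the misspecification term. Substituting the cardinality bound yields $\zeta\sqrt{|\cC_k|} \le \zeta\sqrt{16d\Gamma^{-2}\iota_2} = 4\zeta\sqrt{d}\,\Gamma^{-1}\sqrt{\iota_2}$. Here I would invoke the standing assumption $2\sqrt{d}\zeta\iota_1 \le \Delta$, equivalently $\sqrt{d}\zeta \le \Delta/(2\iota_1)$, to replace $\sqrt{d}\zeta$ and obtain $4\zeta\sqrt{d}\,\Gamma^{-1}\sqrt{\iota_2} \le 2\Delta\Gamma^{-1}\sqrt{\iota_2}/\iota_1$, matching the middle summand of $\beta$. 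Summing the three bounds gives $1 + R\sqrt{2d\iota} + \zeta\sqrt{|\cC_k|} \le 1 + 2\Delta\Gamma^{-1}\sqrt{\iota_2}/\iota_1 + R\sqrt{2d\iota_3} = \beta$, which is the claim.

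I do not expect a genuine obstacle here: the argument is essentially bookkeeping of constants, and the only point that deserves care is that both $\iota$ and $\zeta\sqrt{|\cC_k|}$ are monotone in $|\cC_k|$, so the uniform cardinality bound of Lemma~\ref{lm:finite} can be inserted termwise without enlarging the failure probability beyond the $\delta$ already spent in Lemma~\ref{lm:decompose}.
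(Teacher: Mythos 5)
Your proposal is correct and follows essentially the same route as the paper: condition on the high-probability event of Lemma~\ref{lm:decompose}, substitute the uniform cardinality bound $|\cC_k| \le 16d\Gamma^{-2}\iota_2$ from Lemma~\ref{lm:finite} into each term of the radius, and use the assumption $2\sqrt{d}\zeta\iota_1 \le \Delta$ to convert $4\sqrt{d}\zeta\Gamma^{-1}\sqrt{\iota_2}$ into $2\Delta\Gamma^{-1}\sqrt{\iota_2}/\iota_1$. The bookkeeping of constants matches the paper's, and your observation that no additional union bound is needed is accurate.
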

\begin{proof}
By Lemma~\ref{lm:finite}, replacing $|\cC_k|$ with its upper bound yields
\begin{align*}
    |\xb^\top(\btheta_k - \btheta^*)| \le (1 + 4\sqrt d\zeta \Gamma^{-1}\sqrt{\iota_2} + R\sqrt{2d\iota_3})\|\xb\|_{\Ub_k^{-1}} \le \beta \|\xb\|_{\Ub_k^{-1}},
\end{align*}
where the second inequality is due to the condition $2\sqrt d\zeta \le \Delta / \iota_1$. 
\end{proof}
Next we introduce an auxiliary lemma controlling the instantaneous regret bound using the UCB bonus and the misspecification level.
\begin{lemma}[Formal statement of Lemma~\ref{lm:mis-informal}]\label{lm:mis}
Suppose Corollary~\ref{col:beta} holds, for all $k \in [K]$, the instantaneous regret at round $k$ is bounded by
\begin{align*}
    \Delta_k(\xb_k) = r^*_k - r(\xb_k) \le 2\zeta + 2\beta\|\xb_k\|_{\Ub_k^{-1}}.
\end{align*}
\end{lemma}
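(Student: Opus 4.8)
The plan is to run the standard optimism-in-the-face-of-uncertainty argument, but carefully tracking the misspecification error at the two arms involved: the true optimal arm $\xb_k^*$ and the arm $\xb_k$ actually selected by Line~\ref{ln:decision}. The instantaneous regret $r_k^* - r(\xb_k) = r(\xb_k^*) - r(\xb_k)$ will be controlled by converting both true rewards into linear inner products with $\btheta^*$ (each conversion costing at most $\zeta$ by the definition $r(\xb) = \xb^\top\btheta^* + \eta(\xb)$ with $|\eta(\xb)| \le \zeta$), then bridging $\btheta^*$ and the estimate $\btheta_k$ twice via the confidence bound of Corollary~\ref{col:beta}, with the greedy selection rule supplying the crucial optimism step in the middle.

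Concretely, first I would write $r(\xb_k^*) \le (\xb_k^*)^\top\btheta^* + \zeta$, and then apply Corollary~\ref{col:beta} to upper bound $(\xb_k^*)^\top\btheta^*$ by the empirical upper confidence value $(\xb_k^*)^\top\btheta_k + \beta\|\xb_k^*\|_{\Ub_k^{-1}}$. The next, and only genuinely structural, step is optimism: since $\xb_k$ is chosen in Line~\ref{ln:decision} to maximize $\xb^\top\btheta_k + \beta\|\xb\|_{\Ub_k^{-1}}$ over $\xb \in \cD_k$, and $\xb_k^* \in \cD_k$, this upper confidence value of $\xb_k^*$ is dominated by that of $\xb_k$, giving $r(\xb_k^*) \le \xb_k^\top\btheta_k + \beta\|\xb_k\|_{\Ub_k^{-1}} + \zeta$. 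I would then apply Corollary~\ref{col:beta} a second time, now at $\xb_k$, to replace $\xb_k^\top\btheta_k$ by $\xb_k^\top\btheta^* + \beta\|\xb_k\|_{\Ub_k^{-1}}$, accumulating the second $\beta\|\xb_k\|_{\Ub_k^{-1}}$ term. Finally, converting back with $\xb_k^\top\btheta^* = r(\xb_k) - \eta(\xb_k) \le r(\xb_k) + \zeta$ closes the chain and yields $r(\xb_k^*) - r(\xb_k) \le 2\zeta + 2\beta\|\xb_k\|_{\Ub_k^{-1}}$, which is exactly the claim.

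There is no real obstacle here; the argument is routine once Corollary~\ref{col:beta} is in hand. The only points requiring care are bookkeeping ones: ensuring the two misspecification conversions (one at $\xb_k^*$, one at $\xb_k$) are each charged $\zeta$ so that they combine to the stated $2\zeta$, and that the two applications of the confidence bound each contribute a $\beta\|\xb_k\|_{\Ub_k^{-1}}$ term evaluated at the \emph{selected} arm $\xb_k$ (the $\|\xb_k^*\|_{\Ub_k^{-1}}$ term disappears into the optimism inequality rather than persisting). It is worth emphasizing that the entire argument is conditioned on the high-probability event of Corollary~\ref{col:beta}, so the conclusion holds simultaneously for all $k \in [K]$ on that event; the subsequent derivation of zero instantaneous regret for skipped rounds (Lemma~\ref{lm:mis-informal}) then follows by substituting the threshold bound $\|\xb_k\|_{\Ub_k^{-1}} \le \Gamma$ and the choice $\Gamma = \Delta/(2\sqrt{d}\iota_1)$ together with the misspecification condition $2\sqrt{d}\zeta\iota_1 \le \Delta$.
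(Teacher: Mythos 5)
Your argument is correct and is essentially identical to the paper's proof: both charge $\zeta$ once at each of $\xb_k^*$ and $\xb_k$ for the misspecification, apply the confidence bound of Corollary~\ref{col:beta} twice, and use the greedy selection in Line~\ref{ln:decision} as the optimism step that absorbs the $\|\xb_k^*\|_{\Ub_k^{-1}}$ term. The only difference is presentational (you chain inequalities from $r(\xb_k^*)$ down to $r(\xb_k)$, while the paper decomposes the difference first), so there is nothing substantive to add.
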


The next technical lemma from~\citet{he2021logarithmic} bounds the summation of a subset of the bonuses.
\begin{lemma}[Lemma 6.6,~\citealt{he2021logarithmic}]\label{lm:he}
For any subset $\cG = \{c_1, \cdots, c_i\} \subseteq \cC_K$, we have 
\begin{align*}
    \sum_{k \in \cG} \left\|\xb_k\right\|_{\Ub_k^{-1}}^2 \le 2d \log(1 + |\cG|L^2 / \lambda).
\end{align*}
\end{lemma}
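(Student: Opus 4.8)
The plan is to prove Lemma~\ref{lm:he} by reducing it to the standard elliptical potential (determinant-trace) lemma, while carefully accounting for the fact that $\cG$ is an arbitrary subset of $\cC_K$ rather than the full sequence, and that the preconditioner $\Ub_k$ used in $\|\xb_k\|_{\Ub_k^{-1}}$ may differ from the matrix built only out of the elements of $\cG$. First I would fix some enumeration of $\cG = \{c_1, \ldots, c_i\}$ with $c_1 < \cdots < c_i$ and define an auxiliary sequence of matrices built solely from the arms in $\cG$, namely $\Vb_0 = \lambda \Ib$ and $\Vb_j = \Vb_{j-1} + \xb_{c_j}\xb_{c_j}^\top$. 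The key observation is that for each $c_j \in \cG \subseteq \cC_K$, the actual matrix $\Ub_{c_j} = \lambda \Ib + \sum_{m \in \cC_{c_j - 1}} \xb_m \xb_m^\top$ contains \emph{all} selected arms from earlier rounds, which is a superset of $\{\xb_{c_1}, \ldots, \xb_{c_{j-1}}\}$. Hence $\Ub_{c_j} \succeq \Vb_{j-1}$, which gives $\|\xb_{c_j}\|_{\Ub_{c_j}^{-1}}^2 \le \|\xb_{c_j}\|_{\Vb_{j-1}^{-1}}^2$.

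With this monotonicity in hand, the summation over $\cG$ is upper bounded by $\sum_{j=1}^{i} \|\xb_{c_j}\|_{\Vb_{j-1}^{-1}}^2$, which is now a genuine self-normalized telescoping sum against its own running covariance matrix. The next step is the standard argument: I would use the elementary inequality $u \le 2\log(1+u)$ valid for $u \in [0,1]$ (the bonuses satisfy $\|\xb_{c_j}\|_{\Vb_{j-1}^{-1}}^2 \le L^2/\lambda \le L^2 B^2$, but more importantly one uses $\lambda = B^{-2} \ge$ something ensuring the per-term bound, or simply rescales), together with the matrix determinant lemma identity $\det(\Vb_j) = \det(\Vb_{j-1})(1 + \|\xb_{c_j}\|_{\Vb_{j-1}^{-1}}^2)$, to convert the sum of bonuses into a telescoping sum of log-determinants:
\begin{align*}
\sum_{j=1}^{i} \|\xb_{c_j}\|_{\Vb_{j-1}^{-1}}^2 \le 2\sum_{j=1}^{i}\log\big(1 + \|\xb_{c_j}\|_{\Vb_{j-1}^{-1}}^2\big) = 2\log\frac{\det(\Vb_i)}{\det(\Vb_0)}.
\end{align*}
Finally, I would bound the determinant ratio via the trace inequality $\det(\Vb_i) \le (\operatorname{tr}(\Vb_i)/d)^d$ and $\operatorname{tr}(\Vb_i) \le d\lambda + |\cG| L^2$, yielding $\log(\det(\Vb_i)/\det(\Vb_0)) \le d\log(1 + |\cG|L^2/(d\lambda))$, and absorb the $1/d$ inside the log into the stated $d\log(1 + |\cG|L^2/\lambda)$ bound (the latter is weaker, hence valid).

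The main obstacle I anticipate is the subtlety in the first step: one must verify that the per-term rescaling needed to apply $u \le 2\log(1+u)$ actually holds, since this inequality requires $\|\xb_{c_j}\|_{\Vb_{j-1}^{-1}}^2 \le 1$, which is \emph{not} automatic — with $\lambda = B^{-2}$ and $\|\xb\|_2 \le L$, the bonus can be as large as $L^2 B^2 \ge 1$. The clean way around this is to first pass through the matrices $\Ub_{c_j}$ coming from the \emph{actual} algorithm run: because $\cG \subseteq \cC_K$, every arm in $\cG$ was \emph{selected} into $\cC$, and for the single-level DS-OFUL the selection criterion does not a priori cap the bonus at $1$. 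Thus the honest route is to not rely on a per-term $\le 1$ assumption but instead directly apply the abstract elliptical potential lemma (e.g. Lemma~11 of \citet{abbasi2011improved}) to the subsequence $\{\xb_{c_j}\}$ against $\{\Vb_{j-1}\}$, which already yields $\sum_j \min(1, \|\xb_{c_j}\|_{\Vb_{j-1}^{-1}}^2) \le 2\log(\det \Vb_i/\det \Vb_0)$; one then argues separately that the number of terms where $\|\xb_{c_j}\|_{\Vb_{j-1}^{-1}}^2 > 1$ is itself controlled by the same log-determinant quantity, or alternatively restricts attention to the regime where $\lambda$ is large enough. I expect reconciling these two cases — large bonuses versus small bonuses — to be the one place where the argument requires genuine care rather than bookkeeping.
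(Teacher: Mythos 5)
Your main line of attack is exactly the proof of the cited result: the paper itself does not reprove Lemma~\ref{lm:he} (it is imported verbatim from \citet{he2021logarithmic}), and the argument there is precisely your reduction --- order $\cG$ increasingly, observe that $\cC_{c_j-1}\supseteq\{c_1,\dots,c_{j-1}\}$ so that $\Ub_{c_j}\succeq \Ab_{j-1}$ where $\Ab_{j}=\lambda\Ib+\sum_{m\le j}\xb_{c_m}\xb_{c_m}^\top$, then telescope $\det(\Ab_j)=\det(\Ab_{j-1})\bigl(1+\|\xb_{c_j}\|_{\Ab_{j-1}^{-1}}^2\bigr)$ and finish with the determinant--trace bound. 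That part of your proposal is correct and is the intended proof.

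The place you flag as delicate is, however, a genuine gap that your proposed patch does not close. The step $u\le 2\log(1+u)$ requires $u$ bounded (it holds for $u\le 1$, and in fact only up to $u\approx 2.51$), and under this paper's normalization ($\lambda=B^{-2}$, $\|\xb\|_2\le L$ with $L,B\ge 1$) a single term can be as large as $L^2/\lambda=L^2B^2$. Your fallback --- bounding the \emph{number} of rounds with bonus exceeding $1$ --- controls a count, not a sum, so it cannot recover the stated inequality with the constant $2$: already for $|\cG|=1$ the claim reads $\|\xb_{c_1}\|_2^2/\lambda\le 2d\log(1+L^2/\lambda)$, which fails once $L^2/\lambda$ is large enough. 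The honest conclusion is that the lemma as literally transcribed here needs either the $\min\{1,\cdot\}$ truncation of Lemma~\ref{lm:11} (which is exactly how Lemma~\ref{lm:finite} sidesteps the issue) or an additional normalization such as $\lambda\ge L^2$ guaranteeing every term is at most $1$; in the source of the lemma the features and regularizer are normalized so that this per-term bound holds, which is precisely the ingredient missing from your write-up. So your diagnosis of where the care is needed is right, but your proposed resolution of the large-bonus case would not succeed, and the clean fix is to add the normalization (or truncation) rather than to argue the two cases separately.
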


The next auxiliary lemma is used to control the dominating terms.

\begin{lemma}\label{lm:aux1}
Let $\iota_1 = (24 + 18R)\log((72 + 54R)LB\sqrt{d}\Delta^{-1}) + \sqrt{8R^2\log(1 / \delta)}$, $\Gamma = \Delta / (2\sqrt{d}\iota_1)$, $\iota_2 = \log(3LB\Gamma^{-1}), \iota_3 = \log((1 + 16L^2B^2\Gamma^{-2}\iota_2) / \delta)$, we have $\iota_1 > 2 + 4\sqrt{\iota_2} + R\sqrt{2\iota_3}$.
\end{lemma}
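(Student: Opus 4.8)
The plan is to treat this as a self-referential but essentially one-variable inequality. Since $\Gamma^{-1} = 2\sqrt{d}\,\iota_1/\Delta$, both $\iota_2$ and $\iota_3$ are explicit increasing functions of $\iota_1$ that, after the square roots are taken on the right-hand side, grow only like $\sqrt{\log\iota_1}$, whereas $\iota_1$ itself is linear in the ``parameter logarithm'' $A := \log((72+54R)LB\sqrt{d}\,\Delta^{-1})$ with the large prefactor $24+18R$. So the whole lemma reduces to checking that this linear growth, together with the additive $\sqrt{8R^2\log(1/\delta)}$ term, beats the logarithmic and square-root terms on the right, with the constants chosen to leave slack.

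First I would record the basic normalizations: since $L,B\ge 1$, $d\ge1$ and $\Delta\le1$, the argument of $A$ exceeds $72>e$, so $A\ge1$ and hence $\iota_1\ge(24+18R)A\ge24$. Substituting $\Gamma^{-1}=2\sqrt{d}\,\iota_1/\Delta$ gives $\iota_2=\log(6LB\sqrt{d}\,\iota_1/\Delta)$, and because $6\le 72+54R$ this splits as $\iota_2\le A+\log\iota_1$. I would then bound $\log\iota_1$ without circularity by $\log\iota_1\le\log\!\big(1+(24+18R)A\big)+\log\!\big(1+2R\sqrt{2\log(1/\delta)}\big)$, using $\log(a+b)\le\log(1+a)+\log(1+b)$; the first summand is $O(\log A+\log R)$ and the second is $O(\log\log(1/\delta)+\log R)$, both genuinely sublinear in $A$ and in $\sqrt{\log(1/\delta)}$.

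Next, expanding $\iota_3=\log\!\big((1+64L^2B^2d\,\iota_1^2\iota_2/\Delta^2)/\delta\big)$, I would peel off $\log(1/\delta)$ and the explicit parameter logarithm $\log(64L^2B^2d/\Delta^2)\le \mathrm{const}+2A$, leaving only $2\log\iota_1+\log\iota_2$, again controlled by the previous step. Then, applying $\sqrt{x+y}\le\sqrt{x}+\sqrt{y}$ repeatedly, I would split $R\sqrt{2\iota_3}$ so that the $\delta$-part becomes $R\sqrt{2\log(1/\delta)}$, which is dominated by the left-hand side term $\sqrt{8R^2\log(1/\delta)}=2R\sqrt{2\log(1/\delta)}$ with a full factor of $2$ to spare; likewise $4\sqrt{\iota_2}$ and the parameter-part of $R\sqrt{2\iota_3}$ are each of the form $(\mathrm{const})\cdot\sqrt{A}$ or $(\mathrm{const})\cdot R\sqrt{A}$, which using $\sqrt{A}\le A$ and $24+18R\ge\max(24,18R)$ are each bounded by a small fraction of $(24+18R)A$.

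Collecting these fractions and confirming they sum to strictly less than one is the crux, and the main obstacle is precisely this bookkeeping: the self-reference forces $\log\iota_1$ (hence $\iota_2,\iota_3$) to be re-expressed in the original parameters before the constants can be compared, and one must track the $R$- and $\delta$-dependence consistently so that the $R$-linear pieces are charged against the $18R$ part of the prefactor and the $\delta$-pieces against the additive $\sqrt{8R^2\log(1/\delta)}$ term. The specific choices $24+18R$ and $72+54R=3(24+18R)$ are exactly what create the needed slack, so the remaining work is routine, constant-sensitive verification rather than any conceptual difficulty.
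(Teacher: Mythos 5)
Your proposal is correct in outline and shares the paper's main decomposition: both split $\sqrt{2\iota_3}$ via $\sqrt{a+b}\le\sqrt a+\sqrt b$ so that the $\delta$-dependence is isolated as $R\sqrt{2\log(1/\delta)}$ (dominated by the $\sqrt{8R^2\log(1/\delta)}=2R\sqrt{2\log(1/\delta)}$ term on the left with a factor of $2$ to spare), and both reduce the remaining terms to multiples of $\log(3LB\Gamma^{-1})$. Where you diverge is in how the self-reference through $\Gamma=\Delta/(2\sqrt d\,\iota_1)$ is resolved. The paper bounds $2+4\sqrt{\iota_2}+R\sqrt{2\iota_3}\le(6+3R)\log(3LB\Gamma^{-1})+R\sqrt{2\log(1/\delta)}$, substitutes $\Gamma$ to obtain an implicit inequality of the form $x\ge a\log x+b$ with $x=6LB\sqrt d\Delta^{-1}\iota_1$, and then invokes a packaged inversion lemma (Lemma~\ref{lm:xlogx}, ``$x\ge4a\log(2a)+2b\Rightarrow x\ge a\log x+b$''), which is precisely what dictates the constants $24+18R$ and $72+54R$. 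You instead unroll $\log\iota_1$ explicitly via $\log(a+b)\le\log(1+a)+\log(1+b)$ and charge each resulting sublinear piece against a fraction of $(24+18R)A$ or of the spare $R\sqrt{2\log(1/\delta)}$. Your route is more elementary (no external lemma) but pushes all the difficulty into the final fraction bookkeeping, which you correctly identify as the crux but do not carry out; be aware that some of those comparisons are tighter than they look --- e.g.\ the piece $2R\sqrt{\log\log(1/\delta)}$ arising from $\log\iota_1$ inside $\iota_3$ consumes essentially the entire spare $R\sqrt{2\log(1/\delta)}$ when $\log(1/\delta)\ge1$, and the small-$\log(1/\delta)$ regime must be handled separately by charging against $(24+18R)A$. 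The paper's inversion lemma sidesteps exactly this case analysis, which is why it is the cleaner way to finish; if you complete your bookkeeping carefully the slack in the constants does suffice, but I would recommend either importing that lemma or writing out the two regimes explicitly.
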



Equipped with these lemmas, we can start the proof of Theorem~\ref{thm:main}.
\begin{proof}[Proof of Theorem~\ref{thm:main}] 
First, note that by setting $\Gamma = \Delta / (2\sqrt{d}\iota_1)$, the confidence radius $\beta$ becomes $1 + 4\sqrt{d\iota_2} + R\sqrt{2d\iota_3}$. Then our proof starts by assuming that Corollary~\ref{col:beta} holds with probability at least $1 - \delta$. We decompose the index set $[K]$ into two subsets. The first set is the set of not selected data $[K] \setminus \cC_K$, and the second set is the set of selected data $\cC_K$. We will bound the cumulative regret within these two sets separately.

First, for those non-selected data $k \notin \cC_k$, i.e. $\|\xb_k\|_{\Ub_k^{-1}} < \Gamma$, combining Lemma~\ref{lm:mis} with Corollary~\ref{col:beta} yields
\begin{align}
    r^*_k - r(\xb_k) < 2\zeta + 2\beta\Gamma = 2\zeta + \frac{\Delta}{\sqrt{d}\iota_1} + \frac{\sqrt{2\iota_3}R\Delta}{\iota_1} + \frac{4\Delta\sqrt{\iota_2}}{\iota_1},\label{eq:step1}
\end{align}
where $\iota_1, \iota_2, \iota_3$ are the same as Theorem~\ref{thm:main}, and the equality is due to $\Gamma = \Delta / (2\sqrt{d}\iota_1)$. When misspecification condition $2\sqrt{d}\zeta \le \Delta / \iota_1$ holds, \eqref{eq:step1} suggests that 
\begin{align}
    r_k^* - r(\xb_k) < \frac{2\Delta}{\sqrt{d}\iota_1} + \frac{4\Delta\sqrt{\iota_2}}{\iota_1} + \frac{\sqrt{2\iota_3}R\Delta}{\iota_1}.\label{eq:rev1}
\end{align}
Lemma~\ref{lm:aux1} suggests that when $\iota_1 = (24 + 18R)\log((72 + 54R)LB\sqrt{d}\Delta^{-1}) + \sqrt{8R^2\log(1 / \delta)}$ $\iota_1 > 2 + 4\sqrt{\iota_2} + R\sqrt{2\iota_3}$, \eqref{eq:rev1} yields that the instantaneous regret $r_k^* - r(\xb_k) < \Delta$ at round $k$. By Definition~\ref{def:bandit-gap}, the instantaneous regret is zero for all $k \notin \cC_k$, indicating the non-selected data incur zero instantaneous regret. 


In addition, Lemma~\ref{lm:mis} suggests that the instantaneous regret for those $k \in \cC_K$ is bounded by
\begin{align}
    \sum_{k \in \cC_K} r^*_k - r(\xb_k) &\le \sum_{k \in \cC_K} \left(2\beta \|\bphi_k\|_{\Ub_k^{-1}} + 2\zeta\right)\notag \\
    &\le 2\beta\sqrt{|\cC_K|}\sqrt{\sum_{k \in \cC_K}\|\bphi_k\|_{\Ub_k^{-1}}^2}+ 2|\cC_K|\zeta \notag \\
    &\le 8\beta\Gamma^{-1}\sqrt{d\iota_2}  \sqrt{2d\log(1 + 16d\Gamma^{-2}\iota_2)} +  32\zeta d\Gamma^{-2}\iota_2 \notag \\
    &\le 16\beta\sqrt{2d^3\iota_2\log(1 + 16d\Gamma^{-2}\iota_2)}\iota_1 / \Delta + 64\sqrt{d^3}\iota_1\iota_2 / \Delta \notag \\
    &\le 32\beta\sqrt{2d^3\iota_2\log(1 + 16d\Gamma^{-2}\iota_2)}\iota_1 / \Delta,\label{eq:step-finalb}
\end{align}
where the second inequality follows the Cauchy-Schwarz inequality, the third one yields from Lemma~\ref{lm:he} while the fourth utilizes the fact that $\Gamma = \Delta / (2\sqrt{d}\iota_1)$ and $\zeta \le \Delta / (2\sqrt{d}\iota_1)$. The last one is due to the fact that the second term in the fourth inequality is dominated by the first one. 


To warp up, the cumulative regret can be decomposed by
\begin{align*}
    \text{Regret}(K) &= \sum_{k \notin \cC_K}(r_k^* - r(\xb_k)) + \sum_{k \in \cC_K }(r_k^* - r(\xb_k)) \le 0 + \frac{32\beta\sqrt{2d^3\iota_2\log(1 + 16d\Gamma^{-2}\iota_2)}\iota_1}\Delta,
\end{align*}
where the first two zeros are given by the fact that for $k \notin \cC_K$, we have $r_k^* - r(\xb_k) = 0$. the regret bound for $k \in \cG$ is given by \eqref{eq:step-finalb}.
\end{proof}
\section{Proof of Technical Lemmas in Appendix~\ref{sec:bandit}}\label{app:proof}
\subsection{Proof of Lemma~\ref{lm:finite}}
To prove this lemma, we introduce the well-known elliptical potential lemma~\citep{abbasi2011improved}
\begin{lemma}[Lemma 11,~\citealt{abbasi2011improved}]\label{lm:11}
    Let $\{\bphi_i\}_{i=1}^I$ be a sequence in $\RR^d$, define $\Ub_i = \lambda \Ib + \sum_{j=1}^i\bphi_j\bphi_j^\top$, then 
    \begin{align*}
        \sum_{i=1}^I \min\left\{1, \|\bphi_i\|_{\Ub_{i - 1}^{-1}}^2\right\} \le 2d \log\left(\frac{\lambda d + IL^2}{\lambda d}\right).
    \end{align*}
\end{lemma}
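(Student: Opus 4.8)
The plan is to prove this via the standard log-determinant potential argument. First I would establish the rank-one determinant recursion: since $\Ub_i = \Ub_{i-1} + \bphi_i\bphi_i^\top$ and $\Ub_{i-1} \succeq \lambda\Ib \succ 0$ is invertible, factoring out $\Ub_{i-1}^{1/2}$ and applying the matrix determinant lemma for a rank-one update gives
\begin{align*}
    \det(\Ub_i) = \det(\Ub_{i-1})\det\big(\Ib + \Ub_{i-1}^{-1/2}\bphi_i\bphi_i^\top\Ub_{i-1}^{-1/2}\big) = \det(\Ub_{i-1})\big(1 + \|\bphi_i\|_{\Ub_{i-1}^{-1}}^2\big).
\end{align*}
Telescoping this identity from $i=1$ to $I$, together with $\det(\Ub_0) = \det(\lambda\Ib) = \lambda^d$, yields
\begin{align*}
    \prod_{i=1}^I \big(1 + \|\bphi_i\|_{\Ub_{i-1}^{-1}}^2\big) = \frac{\det(\Ub_I)}{\lambda^d}.
\end{align*}

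Next I would bridge the left-hand side of the lemma to this product using the elementary inequality $\min\{1, x\} \le 2\log(1 + x)$, valid for all $x \ge 0$ (on $[0,1]$ because $2\log(1+x) - x$ vanishes at $0$ and has nonnegative derivative $(1-x)/(1+x)$; for $x \ge 1$ because the right side already exceeds $2\log 2 > 1$). Applying this term by term with $x = \|\bphi_i\|_{\Ub_{i-1}^{-1}}^2$ and then substituting the telescoped product gives
\begin{align*}
    \sum_{i=1}^I \min\big\{1, \|\bphi_i\|_{\Ub_{i-1}^{-1}}^2\big\} \le 2\sum_{i=1}^I \log\big(1 + \|\bphi_i\|_{\Ub_{i-1}^{-1}}^2\big) = 2\log\frac{\det(\Ub_I)}{\lambda^d}.
\end{align*}

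It remains to upper bound $\det(\Ub_I)$. Since $\Ub_I = \lambda\Ib + \sum_{j=1}^I\bphi_j\bphi_j^\top$ and each $\|\bphi_j\|_2 \le L$, its trace satisfies $\mathrm{tr}(\Ub_I) = \lambda d + \sum_{j=1}^I\|\bphi_j\|_2^2 \le \lambda d + IL^2$. Applying the AM-GM inequality to the nonnegative eigenvalues of the positive definite matrix $\Ub_I$ (the determinant being maximized, for a fixed trace, when all eigenvalues coincide) gives $\det(\Ub_I) \le (\mathrm{tr}(\Ub_I)/d)^d \le ((\lambda d + IL^2)/d)^d$. Substituting this into the previous display yields $2\log(\det(\Ub_I)/\lambda^d) \le 2d\log((\lambda d + IL^2)/(\lambda d))$, which is exactly the claimed bound.

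As for the main obstacle: the argument is a chain of standard identities, so there is no genuinely hard step. The only points requiring care are verifying the elementary inequality $\min\{1,x\} \le 2\log(1+x)$ across both regimes $x \le 1$ and $x > 1$, and correctly invoking AM-GM to pass from the trace bound to the determinant bound; everything else is a mechanical telescoping of the log-determinant potential.
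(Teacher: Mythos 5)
Your proof is correct, and it is exactly the standard argument behind the cited result: the paper imports this lemma from \citet{abbasi2011improved} without proof, and your chain --- the rank-one determinant identity, telescoping against $\det(\Ub_0)=\lambda^d$, the elementary bound $\min\{1,x\}\le 2\log(1+x)$, and the determinant--trace (AM--GM) inequality --- reproduces the original Lemmas 10--11 of that reference. Nothing is missing; note only that the bound $\|\bphi_i\|_2\le L$, which you invoke in the trace step, is an implicit standing assumption of the lemma as stated here.
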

The following auxiliary lemma and its corollary are useful
\begin{lemma}[Lemma A.2,~\citealt{shalev2014understanding}]\label{lm:xlogx}
Let $a \ge 1$ and $b > 0$. Then $x \ge 4a\log(2a) + 2b$ yields $x \ge a\log(x) + b$.
\end{lemma}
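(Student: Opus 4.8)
The plan is to reduce the claim to a single elementary inequality for the logarithm and then choose a convenient free parameter to make the constants line up. The key tool I would use is the tangent-line bound $\log t \le t - 1$, valid for all $t > 0$, which upon substituting $t = x/c$ gives $\log x \le \log c + x/c - 1$ for every $c > 0$. This converts the transcendental inequality $x \ge a\log x + b$ into a linear one in $x$, at the price of an adjustable constant $c$ that I will tune.

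First I would apply this bound with the specific choice $c = 2a$, yielding $\log x \le \log(2a) + x/(2a) - 1$. Multiplying through by $a \ge 1$ and adding $b$ produces
\begin{align*}
    a\log x + b \le a\log(2a) + \frac{x}{2} - a + b.
\end{align*}
Hence it suffices to show that the right-hand side is at most $x$, which rearranges to the purely linear requirement $x \ge 2a\log(2a) - 2a + 2b$.

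The final step is to check that the hypothesis $x \ge 4a\log(2a) + 2b$ implies this requirement. This amounts to verifying $2a\log(2a) - 2a + 2b \le 4a\log(2a) + 2b$, i.e.\ $-2a \le 2a\log(2a)$, i.e.\ $\log(2a) \ge -1$; since $a \ge 1$ gives $2a \ge 2$ and thus $\log(2a) \ge \log 2 > 0 \ge -1$, this holds. Chaining the inequalities then gives $a\log x + b \le x$, as desired.

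I do not expect any genuine obstacle here: the only real decision is the choice $c = 2a$, which is dictated by wanting the coefficient $1 - a/c$ of $x$ to equal the constant $1/2$, so that the leftover slack $a\log(2a) - a + b$ can be absorbed by half of the threshold. The remaining work is routine constant-chasing, and the mild hypothesis $a \ge 1$ is exactly what is needed to discard the negative $-2a$ term in the last comparison.
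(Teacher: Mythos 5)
Your proof is correct and complete: the tangent-line bound $\log t \le t-1$ with $t = x/(2a)$, followed by the linear comparison, is exactly the standard argument for this kind of $x \ge a\log x + b$ inversion, and the constant-chasing checks out (the hypothesis gives $x>0$ so $\log x$ is defined, and $\log(2a)\ge\log 2 > -1$ disposes of the $-2a$ slack). The paper itself does not prove this lemma --- it is imported verbatim as Lemma A.2 of \citet{shalev2014understanding} --- so your self-contained verification is, if anything, more than the paper supplies, and it matches the textbook proof in approach.
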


Lemma~\ref{lm:xlogx} can easily indicate the following lemma.

\begin{lemma}\label{col:xlogx} Let $a \ge 1$. Then $x \ge 4\log(2a) + a^{-1}$ yields $x \ge \log(1 + ax)$.
\end{lemma}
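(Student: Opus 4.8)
The final statement to prove is Lemma~\ref{col:xlogx}: for $a \ge 1$, the condition $x \ge 4\log(2a) + a^{-1}$ yields $x \ge \log(1 + ax)$.

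\medskip

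The plan is to reduce this to Lemma~\ref{lm:xlogx}, which is already stated in the excerpt and which I am free to invoke. Lemma~\ref{lm:xlogx} says that for parameters $a' \ge 1$ and $b' > 0$, the bound $x \ge 4a'\log(2a') + 2b'$ implies $x \ge a'\log(x) + b'$. My target inequality involves $\log(1+ax)$ rather than $a'\log(x)$, so the first task is to massage $\log(1+ax)$ into a form that Lemma~\ref{lm:xlogx} can control. The natural idea is the elementary bound $\log(1 + ax) \le \log(a) + \log(x) + x^{-1}$ for suitable ranges, or more cleanly to use $1 + ax \le a \cdot (x + 1) \le a \cdot 2x$ once $x \ge 1$, giving $\log(1+ax) \le \log(2a) + \log(x)$. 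This is the key algebraic step, and I expect it to be the main obstacle: I need to justify that under the hypothesis $x \ge 4\log(2a) + a^{-1}$ we indeed have $x$ large enough (in particular $x \ge 1$) for whatever crude bound on $\log(1+ax)$ I adopt.

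\medskip

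First I would verify $x \ge 1$ from the hypothesis. Since $a \ge 1$ we have $2a \ge 2$, so $\log(2a) \ge \log 2 > 0$, hence $x \ge 4\log(2a) + a^{-1} \ge 4\log 2 > 1$; this secures $x \ge 1$. Next, with $x \ge 1$, I bound $1 + ax \le ax + ax = 2ax$ (using $1 \le x \le ax$ since $a\ge1$), so that
\begin{align*}
    \log(1 + ax) \le \log(2ax) = \log(2a) + \log(x).
\end{align*}
Now I apply Lemma~\ref{lm:xlogx} with $a' = 1$ and $b' = \log(2a)$: the hypothesis of that lemma becomes $x \ge 4\cdot 1\cdot \log(2) + 2\log(2a) = 4\log 2 + 2\log(2a)$, and its conclusion is $x \ge \log(x) + \log(2a)$. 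Combining this conclusion with the display above immediately gives $x \ge \log(1+ax)$, as desired.

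\medskip

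The remaining gap is purely bookkeeping on the thresholds: I must confirm that the stated hypothesis $x \ge 4\log(2a) + a^{-1}$ is strong enough to imply the threshold $x \ge 4\log 2 + 2\log(2a)$ needed in the application of Lemma~\ref{lm:xlogx}. This is where the argument is slightly delicate, since $4\log(2a) + a^{-1}$ and $4\log 2 + 2\log(2a) = 2\log(2a) + 4\log 2$ are not directly comparable for all $a$; I would handle this by noting $4\log(2a) \ge 2\log(2a) + 2\log(2a) \ge 2\log(2a) + 2\log 2$ and checking the residual terms, possibly tightening the crude bound $1+ax \le 2ax$ to $1 + ax \le a(1+x)$ so that the additive constant matches exactly. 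The main conceptual obstacle is thus choosing the intermediate logarithmic bound on $\log(1+ax)$ so that its additive constant lines up cleanly with the threshold produced by Lemma~\ref{lm:xlogx}; once the right crude bound is selected, the rest is a short chain of inequalities.
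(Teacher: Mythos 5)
Your overall strategy---replace $\log(1+ax)$ by a cruder bound and then invoke Lemma~\ref{lm:xlogx}---could in principle work, but the specific reduction you chose does not close, and the threshold comparison you flagged at the end is a genuine gap, not mere bookkeeping. You apply Lemma~\ref{lm:xlogx} with $a'=1$ and $b'=\log(2a)$, which requires $x \ge 4\log 2 + 2\log(2a)$, whereas the hypothesis only supplies $x \ge 4\log(2a) + a^{-1}$. For $a$ near $1$ the latter is strictly weaker: at $a=1$ you are given $x \ge 4\log 2 + 1 \approx 3.77$ but need $x \ge 6\log 2 \approx 4.16$. Neither of your proposed patches repairs this. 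The first, $4\log(2a) + a^{-1} \ge 2\log(2a) + 2\log 2 + a^{-1}$, would require the residual inequality $a^{-1} \ge 2\log 2 \approx 1.39$, which fails for every $a \ge 1$. The second (tightening to $1+ax \le a(1+x)$) is underspecified: once you further bound $\log(1+x) \le \log(2x)$ you are back to the same constant mismatch, and making it work instead requires applying Lemma~\ref{lm:xlogx} to the shifted variable $1+x$ with a carefully adjusted $b'$---a substitution you never articulate. (The lemma itself remains true in the uncovered range, e.g.\ $x \ge \log(1+x)$ holds for all $x>0$, but your argument does not establish it there.)

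The paper's proof sidesteps all of this with a substitution that makes the reduction exact rather than lossy: set $y = 1+ax$. Multiplying the hypothesis by $a$ and adding $1$ shows that $x \ge 4\log(2a) + a^{-1}$ is \emph{equivalent} to $y \ge 4a\log(2a) + 2$, which is precisely the hypothesis of Lemma~\ref{lm:xlogx} with $a'=a$ and $b'=1$; the conclusion $y \ge a\log y + 1$ rearranges to $ax \ge a\log(1+ax)$, i.e.\ $x \ge \log(1+ax)$. The essential difference is that the paper keeps $a'=a$, so the coefficient $4a\log(2a)$ in the lemma's threshold lines up exactly with the scaled hypothesis, whereas flattening to $a'=1$ is where your constants leak. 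To salvage your route you would need either a sharper version of Lemma~\ref{lm:xlogx} or a separate treatment of $a$ close to $1$; the substitution is the cleaner fix.
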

\begin{proof}
Let $y = 1 + ax, x = (y - 1) / a$. Then $x \ge 4\log(2a) + a^{-1}$ is equivalent with $y \ge 4a\log(2a) + 2$. By Lemma~\ref{lm:xlogx}, this implies $y \ge a\log(y) + 1$ which is exactly $x \ge \log(1 + ax)$.
\end{proof}

Equipped with these technical lemmas, we can start our proof.
\begin{proof}[Proof of Lemma~\ref{lm:finite}]
Since the cardinality of set $\cC_k$ is monotonically increasing w.r.t. $k$, we fix $k$ to be $K$ in the proof and only provide the bound of $\cC_K$. For all selected data $k \in \cC_K$,  we have $\|\bphi_k\|_{\Ub_k^{-1}} \ge \Gamma$. Therefore, when $\Gamma \le 1$, the summation of the bonuses over data $k \in \cC_K$ is lower bounded by
\begin{align}
    \sum_{k \in \cC_K} \min\left\{1, \|\bphi_k\|_{\Ub_{k}^{-1}}^2\right\} \ge |\cC_K|\min\{1, \Gamma^2\} = |\cC_K|\Gamma^2.\label{eq:lowerbounds}
\end{align}
On the other hand, Lemma~\ref{lm:11} implies
\begin{align}
    \sum_{k \in \cC_K} \min\left\{1, \|\bphi_k\|_{\Ub_k^{-1}}^2\right\} \le 2d \log\left(\frac{\lambda d + |\cC_K|L^2}{\lambda d}\right)\label{eq:upperbounds}.
\end{align}
Combining~\eqref{eq:upperbounds} and~\eqref{eq:lowerbounds}, the total number of the selected data points $|\cC_K|$ is bounded by
\begin{align*}
    \Gamma^2|\cC_K| \le 2d \log\left(\frac{\lambda d + |\cC_K|L^2}{\lambda d}\right).
\end{align*}
This result can be re-organized as 
\begin{align}
    \frac{\Gamma^2|\cC_K|}{2d} \le \log\left(1 + \frac{2L^2}{\Gamma^2\lambda}\frac{\Gamma^2|\cC_K|}{2d}\right).\label{eq:thres}
\end{align}
Let $\lambda = B^{-2}$ and since $2L^2B^2 \ge 2 \ge \Gamma^2$, by Lemma~\ref{col:xlogx}, if
\begin{align*}
    \frac{\Gamma^2 |\cC_K|}{2d} > 4\log\left(\frac{4L^2B^2}{\Gamma^2}\right) + 1 \ge 4\log\left(\frac{4L^2B^2}{\Gamma^2}\right) + \frac{\Gamma^2}{2L^2B^2},
\end{align*}
then \eqref{eq:thres} will not hold. Thus the necessary condition for~\eqref{eq:thres} to hold is
\begin{align*}
    \frac{\Gamma^2 |\cC_K|}{2d} \le 4\log\left(\frac{4L^2B^2}{\Gamma^2}\right) + 1 = 8\log\left(\frac{2LB}{\Gamma}\right) + \log(e) =  8\log\left(\frac{2LBe^{\frac18}}{\Gamma}\right) < 8\log\left(\frac{3LB}{\Gamma}\right).
\end{align*}
By basic calculus we get the claimed bound for $|\cC_K|$ and complete the proof.
\end{proof}
\subsection{Proof of Lemma~\ref{lm:decompose}}
The proof follows the standard technique for linear bandits, we first introduce the self-normalized bound for vector-valued martingales from~\citet{abbasi2011improved}.
\begin{lemma}[Theorem 1,~\citealt{abbasi2011improved}]\label{lm:thm1}
Let $\{\cF_t\}_{t=0}^\infty$ be a filtration. Let $\{\eps_t\}_{t=1}^\infty$ be a real-valued stochastic process such that $\eps_t$ is $\cF_t$-measurable and $\eps_t$ is conditionally $R$-sub-Gaussian for some $R \ge 0$. Let $\{\bphi_t\}_{t=1}^\infty$ be an $\RR^d$-valued stochastic process such that $\bphi_t$ is $\cF_{t-1}$ measurable and $\|\bphi\|_2 \le L$ for all $t$. For any $t \ge 0$, define $\Ub_t = \lambda \Ib + \sum_{k=1}^t\bphi_k\bphi_k$. Then for any $\delta > 0$, with probability at least $1 - \delta$, for all $t \ge 0$
\begin{align*}
    \left\|\sum_{k=1}^t\bphi_k\eps_k\right\|_{\Ub_t^{-1}}^2 \le 2R^2\log\left(\frac{\sqrt{\det(\Ub_t)}}{\sqrt{\det(\Ub_0)}\delta}\right).
\end{align*}
\end{lemma}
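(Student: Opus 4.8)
The plan is to prove this self-normalized tail bound by the \emph{method of mixtures} (pseudo-maximization): I would construct a non-negative super-martingale whose value encodes the quadratic form $\|\sum_{k=1}^t\bphi_k\eps_k\|_{\Ub_t^{-1}}^2$, and then apply a maximal inequality to control it uniformly over all $t\ge 0$.

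First I would fix an auxiliary vector $\bmu\in\RR^d$ and define the exponential process
\begin{align*}
    M_t^{\bmu} = \exp\left(\bmu^\top\sum_{k=1}^t\bphi_k\eps_k - \frac{R^2}{2}\sum_{k=1}^t(\bmu^\top\bphi_k)^2\right),\qquad M_0^{\bmu}=1.
\end{align*}
Since $\bphi_k$ is $\cF_{k-1}$-measurable, $\bmu^\top\bphi_k$ is a fixed scalar given $\cF_{k-1}$, and the conditional $R$-sub-Gaussianity of $\eps_k$ gives $\EE[\exp(\bmu^\top\bphi_k\eps_k)\mid\cF_{k-1}]\le\exp(R^2(\bmu^\top\bphi_k)^2/2)$. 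Hence each one-step multiplicative factor has conditional mean at most one, so $M_t^{\bmu}$ is a non-negative super-martingale with $\EE[M_t^{\bmu}]\le 1$.

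Next I would remove the dependence on the fixed direction $\bmu$ by mixing: integrate $M_t^{\bmu}$ against the Gaussian prior $\bmu\sim\cN(\zero,(\lambda R^2)^{-1}\Ib)$, i.e. set $\bar M_t=\int_{\RR^d}M_t^{\bmu}\,dh(\bmu)$ with $h$ the corresponding density. Because the integrand is non-negative, Tonelli's theorem lets me interchange the conditional expectation with the integral, so $\bar M_t$ inherits the non-negative super-martingale property with $\EE[\bar M_t]\le 1$ and $\bar M_0=1$. The integral is an explicit Gaussian computation: completing the square in $\bmu$ merges the penalty $\frac{R^2}{2}\sum_k(\bmu^\top\bphi_k)^2$ with the prior term $\frac{\lambda R^2}{2}\|\bmu\|_2^2$ into $\frac{R^2}{2}\bmu^\top\Ub_t\bmu$ (using $\Ub_0=\lambda\Ib$ and $\Ub_t-\lambda\Ib=\sum_{k\le t}\bphi_k\bphi_k^\top$), and the normalization constants leave the factor $(\det\Ub_0/\det\Ub_t)^{1/2}$, so that
\begin{align*}
    \bar M_t=\left(\frac{\det(\Ub_0)}{\det(\Ub_t)}\right)^{1/2}\exp\left(\frac{1}{2R^2}\left\|\sum_{k=1}^t\bphi_k\eps_k\right\|_{\Ub_t^{-1}}^2\right).
\end{align*}

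Finally I would invoke Ville's maximal inequality for non-negative super-martingales, which guarantees that the event $\{\sup_{t\ge 0}\bar M_t\ge 1/\delta\}$ has probability at most $\delta\,\EE[\bar M_0]\le\delta$. On the complementary event one has $\bar M_t<1/\delta$ for every $t$; taking logarithms of the displayed identity for $\bar M_t$ and rearranging yields $\|\sum_{k=1}^t\bphi_k\eps_k\|_{\Ub_t^{-1}}^2<2R^2\log(\sqrt{\det\Ub_t}/(\sqrt{\det\Ub_0}\,\delta))$ simultaneously for all $t\ge 0$, which is exactly the claim. The main obstacle is obtaining the bound \emph{uniformly in $t$} rather than for a single fixed time: this is precisely what mixing plus the maximal inequality delivers, and the two places needing care are the interchange of integration and conditional expectation (justified by Tonelli since $M_t^{\bmu}\ge 0$) and the applicability of the maximal inequality, which relies on non-negativity, the super-martingale property, and the almost-sure convergence of $\bar M_t$ so that the stopped process remains controlled.
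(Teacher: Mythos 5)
Your proof is correct, and it is precisely the argument behind the result you are asked to prove: the paper itself offers no proof of this lemma, importing it verbatim as Theorem 1 of \citet{abbasi2011improved}, whose proof is exactly your method of mixtures — the exponential supermartingale with conditional sub-Gaussian one-step factors, Gaussian mixing whose completed square produces the $\left(\det(\Ub_0)/\det(\Ub_t)\right)^{1/2}$ factor (your prior covariance $(\lambda R^2)^{-1}\Ib$ correctly merges the penalty into $\tfrac{R^2}{2}\bmu^\top\Ub_t\bmu$), and Ville's maximal inequality for the uniform-in-$t$ statement. All steps, including the Tonelli interchange and the final rearrangement, check out, so this is correct and essentially the same approach as the cited source.
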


\begin{lemma}[Lemma 8, \citealt{Zanette2020LearningNO}]
\label{lm:lemma8}
Let $\{\mathbf{a}_i\}_{i=1}^d$ be any sequence of vectors in $\mathbb{R}^d$ and $\{b_i\}_{i=1}^d$ be any sequence of scalars such that $|b_i| \leq \zeta$. For any $\lambda > 0$:
\begin{align*}
    \left\| \sum_{i=1}^{n} \ab_i b_i \right\|^2_{\left[\sum_{i=1}^{n} \ab_i \ab_i^{\top} + \lambda \Ib \right]^{-1} } \leq n\zeta^2.
\end{align*}
\end{lemma}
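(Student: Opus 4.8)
The plan is to reduce the weighted-norm bound to a single operator inequality. First I would stack the data into a matrix $\Phi = [\ab_1, \ldots, \ab_n] \in \RR^{d \times n}$ and collect the scalars into the vector $\mathbf{b} = (b_1, \ldots, b_n)^\top$. Then $\sum_{i=1}^n \ab_i b_i = \Phi\mathbf{b}$ and $\sum_{i=1}^n \ab_i \ab_i^\top = \Phi\Phi^\top$, so writing $M = \Phi\Phi^\top + \lambda\Ib$ the quantity to bound is exactly the quadratic form $\mathbf{b}^\top \Phi^\top M^{-1} \Phi \,\mathbf{b}$.

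The key step, and the only nontrivial one, is to show that $\Phi^\top M^{-1}\Phi \preceq \Ib$ in the positive semidefinite order. I would establish this with the push-through identity $M^{-1}\Phi = (\Phi\Phi^\top + \lambda\Ib)^{-1}\Phi = \Phi(\Phi^\top\Phi + \lambda\Ib)^{-1}$, which yields $\Phi^\top M^{-1}\Phi = \Phi^\top\Phi\,(\Phi^\top\Phi + \lambda\Ib)^{-1} = \Ib - \lambda(\Phi^\top\Phi + \lambda\Ib)^{-1}$. Since $\lambda > 0$ and $\Phi^\top\Phi + \lambda\Ib \succ 0$, the subtracted matrix $\lambda(\Phi^\top\Phi + \lambda\Ib)^{-1}$ is positive semidefinite, giving $\Phi^\top M^{-1}\Phi \preceq \Ib$. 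Equivalently, one may diagonalize $\Phi$ via its singular value decomposition; the eigenvalues of $\Phi^\top M^{-1}\Phi$ are $\sigma_i^2/(\sigma_i^2 + \lambda) \in [0,1)$, where $\sigma_i$ are the singular values of $\Phi$, which makes the bound transparent.

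Finally I would combine the two steps: since $\Phi^\top M^{-1}\Phi \preceq \Ib$, the quadratic form obeys $\mathbf{b}^\top \Phi^\top M^{-1}\Phi\,\mathbf{b} \le \mathbf{b}^\top \mathbf{b} = \sum_{i=1}^n b_i^2$, and the hypothesis $|b_i| \le \zeta$ then gives $\sum_{i=1}^n b_i^2 \le n\zeta^2$, which is the claim. The only point worth flagging as an ``obstacle'' is getting this operator inequality right: the regularizer $\lambda\Ib$ is what keeps $M$ invertible, yet it is never used quantitatively, so the resulting constant $n\zeta^2$ is independent of $\lambda$. Everything else is bookkeeping in stacking the vectors and applying $|b_i|\le\zeta$ coordinatewise.
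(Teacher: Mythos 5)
Your proof is correct. Note that the paper does not prove this statement at all---it imports it verbatim as Lemma 8 of \citet{Zanette2020LearningNO}---so there is no in-paper argument to compare against; your derivation is a valid self-contained substitute. The push-through identity $(\Phi\Phi^\top+\lambda\Ib)^{-1}\Phi = \Phi(\Phi^\top\Phi+\lambda\Ib)^{-1}$ is applied correctly, the resulting matrix $\Phi^\top\Phi(\Phi^\top\Phi+\lambda\Ib)^{-1} = \Ib - \lambda(\Phi^\top\Phi+\lambda\Ib)^{-1}$ is indeed symmetric (the two factors commute) and $\preceq \Ib$, and the final step $\mathbf{b}^\top\mathbf{b}\le n\zeta^2$ is immediate. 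For comparison, the argument in the cited source proceeds by the variational characterization $\|\vb\|_{M^{-1}} = \max_{\|\xb\|_{M}\le 1}\langle \xb,\vb\rangle$ followed by Cauchy--Schwarz over the $n$ terms and the bound $\sum_i(\xb^\top\ab_i)^2 \le \xb^\top M\xb$; that route is more elementary but yields the same constant. Your operator-inequality version is arguably cleaner and makes transparent, via the singular values $\sigma_i^2/(\sigma_i^2+\lambda)$, why the bound is uniform in $\lambda>0$.
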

The next lemma is to bound the perturbation of the misspecification
\begin{lemma}\label{lm:c4}
Let $\{\eta_k\}_k$ be any sequence of scalars such that $|\eta_k| \le \zeta$ for any $k \in [K]$. For any index subset $\cC \subseteq [K]$, define $\Ub = \lambda \Ib + \sum_{k \in \cC} \xb_k\xb_k^\top$, then for any $\xb \in \RR^d$, we have
\begin{align*}
    \bigg|\xb^\top \Ub^{-1} \sum_{k \in \cC} \xb_k\eta_k\bigg| \le \zeta \sqrt{|\cC|}\|\xb\|_{\Ub^{-1}}.
\end{align*}
\end{lemma}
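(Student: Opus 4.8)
The plan is to reduce the desired bound to two ingredients: a generalized Cauchy--Schwarz inequality with respect to the inner product induced by the positive definite matrix $\Ub^{-1}$, together with the already-available bound on the weighted sum $\sum_{k\in\cC}\xb_k\eta_k$ supplied by Lemma~\ref{lm:lemma8}.

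First I would abbreviate $\vb = \sum_{k\in\cC}\xb_k\eta_k$, so that the target quantity is $|\xb^\top\Ub^{-1}\vb|$. Since $\Ub = \lambda\Ib + \sum_{k\in\cC}\xb_k\xb_k^\top$ with $\lambda>0$, the matrix $\Ub$ is symmetric positive definite, and hence so is $\Ub^{-1}$. Consequently $\langle \ab,\bb\rangle := \ab^\top\Ub^{-1}\bb$ defines a genuine inner product on $\RR^d$, with induced norm $\|\cdot\|_{\Ub^{-1}}$. Applying the Cauchy--Schwarz inequality in this inner product gives
\begin{align*}
    |\xb^\top\Ub^{-1}\vb| = |\langle \xb,\vb\rangle| \le \|\xb\|_{\Ub^{-1}}\,\|\vb\|_{\Ub^{-1}},
\end{align*}
where $\|\vb\|_{\Ub^{-1}}^2 = \vb^\top\Ub^{-1}\vb$.

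Second, I would control the factor $\|\vb\|_{\Ub^{-1}}$ by invoking Lemma~\ref{lm:lemma8} with the identification $\ab_i = \xb_k$ and $b_i = \eta_k$ as $k$ ranges over $\cC$, taking $n = |\cC|$. Because $|\eta_k|\le\zeta$ for every $k$, the lemma yields $\big\|\sum_{k\in\cC}\xb_k\eta_k\big\|_{\Ub^{-1}}^2 \le |\cC|\zeta^2$, that is, $\|\vb\|_{\Ub^{-1}}\le \zeta\sqrt{|\cC|}$. Substituting this into the Cauchy--Schwarz estimate above produces the claimed inequality $|\xb^\top\Ub^{-1}\vb| \le \zeta\sqrt{|\cC|}\,\|\xb\|_{\Ub^{-1}}$.

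There is essentially no hard step here: the substantive work is hidden inside Lemma~\ref{lm:lemma8}, which is quoted from prior work and may be used as a black box. The only point deserving a moment's care is verifying that $\Ub^{-1}$ is positive definite, so that the Cauchy--Schwarz step is legitimate and $\|\cdot\|_{\Ub^{-1}}$ is a bona fide norm; this follows immediately from $\lambda>0$.
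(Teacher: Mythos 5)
Your proposal is correct and follows essentially the same route as the paper's proof: Cauchy--Schwarz in the inner product induced by $\Ub^{-1}$, followed by an application of Lemma~\ref{lm:lemma8} to bound $\big\|\sum_{k\in\cC}\xb_k\eta_k\big\|_{\Ub^{-1}}$ by $\zeta\sqrt{|\cC|}$. The only difference is that you explicitly verify positive definiteness of $\Ub^{-1}$, which the paper leaves implicit.
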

\begin{proof}
By Cauchy-Schwartz inequality we have
    \begin{align*}
        \left| \xb^{\top}\Ub^{-1} \sum_{k \in \mathcal{C}} \xb_k \eta_k \right| \le \| \xb\|_{\Ub^{-1}} \left\| \sum_{k \in \mathcal{C}} \xb_k \eta_k \right\|_{\Ub^{-1}} \le \zeta \sqrt{|\mathcal{C}|} \|\xb\|_{\Ub^{-1}},
    \end{align*}
    where the second inequality dues to lemma \ref{lm:lemma8}.
\end{proof}
The next lemma is the Determinant-Trace inequality.
\begin{lemma}\label{lm:dt}
Suppose sequence $\{\xb_k\}_{k=1}^K \subset \RR^d$ and for any $k \in [K]$, $\|\xb_k\|_2 \le L$. For any index subset $\cC \subseteq [K]$, define $\Ub = \lambda \Ib + \sum_{k \in \cC} \xb_k\xb_k^\top$ for some $\lambda > 0$, then $\det(\Ub) \le (\lambda + |\cC|L^2/d)^d$.
\end{lemma}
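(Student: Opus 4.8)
The plan is to reduce this to the arithmetic-geometric mean (AM-GM) inequality applied to the eigenvalues of $\Ub$, combined with a trace computation. Since $\lambda > 0$ and each $\xb_k\xb_k^\top$ is positive semidefinite, the matrix $\Ub = \lambda\Ib + \sum_{k \in \cC}\xb_k\xb_k^\top$ is symmetric and positive definite, so it admits $d$ strictly positive eigenvalues $\sigma_1,\dots,\sigma_d$. The determinant is the product of these eigenvalues and the trace is their sum, which is exactly what lets AM-GM bridge the two.

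First I would compute the trace explicitly. Using linearity of the trace and the identity $\mathrm{tr}(\xb_k\xb_k^\top) = \|\xb_k\|_2^2$, I get
\begin{align*}
    \mathrm{tr}(\Ub) = \lambda\,\mathrm{tr}(\Ib) + \sum_{k \in \cC}\mathrm{tr}(\xb_k\xb_k^\top) = \lambda d + \sum_{k \in \cC}\|\xb_k\|_2^2 \le \lambda d + |\cC|L^2,
\end{align*}
where the final inequality uses the assumption $\|\xb_k\|_2 \le L$ for every $k$.

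Next I would invoke AM-GM on the positive eigenvalues: for a fixed sum, the product of nonnegative reals is maximized when they are all equal, so
\begin{align*}
    \det(\Ub) = \prod_{i=1}^d \sigma_i \le \left(\frac{1}{d}\sum_{i=1}^d \sigma_i\right)^d = \left(\frac{\mathrm{tr}(\Ub)}{d}\right)^d.
\end{align*}
Substituting the trace bound from the previous step yields $\det(\Ub) \le \big(\lambda + |\cC|L^2/d\big)^d$, which is the claim.

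There is no serious obstacle here; this is a standard determinant-trace estimate. The only points requiring a moment of care are verifying that $\Ub$ is genuinely positive definite (so that all $\sigma_i > 0$ and AM-GM applies to nonnegative quantities), which follows immediately from $\lambda > 0$, and correctly identifying $\mathrm{tr}(\xb_k\xb_k^\top) = \|\xb_k\|_2^2$ for the rank-one terms. Everything else is the direct chain of inequalities above.
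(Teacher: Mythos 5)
Your proof is correct and is essentially the same argument as the paper's, which simply defers to Lemma 10 of \citet{abbasi2011improved}; that cited lemma is proved by exactly your AM--GM-on-eigenvalues plus trace-bound computation, with the only change being that the sum runs over the subset $\cC$ rather than $[K]$. You have written out the details the paper omits, and they are all in order.
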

\begin{proof}
The proof of this lemma is almost the same as Lemma 10 in~\citet{abbasi2011improved} by replacing the index set $[K]$ with any subset $\cC$. We refer the readers to~\citet{abbasi2011improved} for details.
\end{proof}
Equipped with these lemmas, we can start our proof.
\begin{proof}[Proof of Lemma~\ref{lm:decompose}]
For any $k \in [K]$, considering the data samples $k' \in \cC_{k-1}$ used for regression at round $k$. Following the update rule of $\Ub_k$ and $\btheta_k$ yields
\begin{align*}
    \Ub_k(\btheta_k - \btheta^*) &= \Ub_k\Ub_k^{-1}\bigg(\sum_{{k'} \in \cC_{k-1}}\xb_{k'}r_{k'}\bigg) - \bigg(\lambda \Ib +  \sum_{{k'} \in \cC_{k-1}} \xb_{k'} \xb_{k'}^\top\bigg)\btheta^*\\
    &= \sum_{{k'} \in \cC_{k-1}}\xb_{k'}r_{k'} - \lambda \btheta^* - \sum_{{k'} \in \cC_{k-1}} \xb_{k'} \xb_{k'}^\top \btheta^* \\
    &=-\lambda \btheta^* + \sum_{{k'} \in \cC_{k-1}}\xb_{k'}(r_{k'} - \xb_{k'}^\top\btheta^*) \\
    &= -\lambda\btheta^* + \sum_{{k'} \in \cC_{k-1}}\xb_{k'}\eps_{k'} + \sum_{{k'} \in \cC_{k-1}}\xb_{k'}\eta_{k'},
\end{align*}
where the first equation is due to the fact that $\Ub_k = \lambda \Ib + \sum_{k' \in \cC_{k-1}} \xb_k\xb_k^\top$ and $\btheta_k = \Ub_k^{-1}\sum_{k' \in \cC_{k-1}} \xb_{k'}r_{k'}$. The last equation follows the fact that $r_{k'}$ is generated from $r_{k'} = r(\xb_{k'}) + \eps_{k'} = \xb_{k'}^\top\btheta^* + \eta(\xb_{k'}) + \eps_{k'}$, where we denote $\eta(\xb_{k'})$ as $\eta_{k'}$ for the model misspecification error and $\eps_{k'}$ is the random noise. Therefore, consider any contextual vector $\xb \in \RR^d$, we have
\begin{align*}
    \left|\xb^\top(\btheta_k - \btheta^*)\right| &= \left|\xb^\top\Ub_k^{-1}\Ub_k(\btheta_k - \btheta^*)\right| \\
    &\le \lambda\underbrace{\left|\xb^\top\Ub_k^{-1}\btheta^*\right|}_{q_1} + \underbrace{\bigg|\xb^\top\Ub_k^{-1}\sum_{k' \in \cC_{k-1}}\bphi_{k'}\eps_{k'}\bigg|}_{q_2} + \underbrace{\bigg|\xb^\top\Ub_k^{-1}\sum_{k' \in \cC_{k-1}}\bphi_{k'}\eta_{k'}\bigg|}_{q_3},
\end{align*}
where the inequality is due to the triangle inequality. Lemma~\ref{lm:c4} yields $q_3 \le \zeta \sqrt{|\cC_{k-1}|}\|\xb\|_{\Ub_k^{-1}}$. From the fact that $|\xb^\top \Ab\yb| \le \|\xb\|_{\Ab}\|\yb\|_{\Ab}$, we can bound term $q_1$ by
\begin{align}
    q_1 \le \|\xb\|_{\Ub_k^{-1}} \|\btheta^*\|_{\Ub_k^{-1}} \le \lambda^{-1/2}B\|\xb\|_{\Ub_k^{-1}}. \label{eq:q1}
\end{align}
where the last inequality is due to the fact that $\Ub_k^{-1} \preceq \lambda^{-1}\Ib$. Term $q_2$ is also bounded as
\begin{align}
    q_2 \le \|\xb\|_{\Ub_k^{-1}} \Bigg\|\sum_{k' \in \cC_{k-1}} \xb_{k'} \eps_{k'}\Bigg\|_{\Ub_{k}^{-1}} = \|\xb\|_{\Ub_k^{-1}}\underbrace{\Bigg\|\sum_{k' = 1}^K \ind\left[k' \in \cC_{k-1}\right]\xb_{k'} \eps_{k'}\Bigg\|_{\Ub_{k}^{-1}}}_{I_1}, \label{eq:q2}
\end{align}
where the second equation uses the indicator function to rewrite the summation over subset $\cC_{k-1}$. Denoting $\yb_{k'} = \ind\left[k' \in \cC_{k-1}\right]\xb_{k'}$, noticing that $\|\yb_{k'}\|_2 \le \|\xb_{k'}\|_2 \le L$ and 
\begin{align*}
    \Ub_k = \sum_{k' \in \cC_{k-1}} \xb_{k'}\xb_{k'}^\top = \sum_{k' = 1}^K\ind\left[k' \in \cC_{k-1}\right] \xb_{k'}\xb_{k'}^\top = \sum_{k' = 1}^K \yb_{k'}\yb_{k'}^\top,
\end{align*}
by Lemma~\ref{lm:thm1}, $I_1$ can be further bounded by
\begin{align}
    I_1 \le \sqrt{2R^2\log\left(\frac{\sqrt{\det(\Ub_k)}}{\sqrt{\det(\Ub_0)}\delta}\right)} \le R\sqrt{2\log\left(\frac{\det(\Ub_k)}{\det(\Ub_0)\delta}\right)} = R\sqrt{2\log\left(\frac{\det(\Ub_k)}{\lambda^d\delta}\right)},\label{eq:rev3}
\end{align}
where the second inequality follows the fact that $\det(\Ub_k) \ge \det (\Ub_0) = \lambda^d$. Notice that $\Ub_k = \lambda \Ib + \sum_{k' \in \cC_{k-1}} \xb_{k'}\xb_{k'}^\top$. Lemma~\ref{lm:dt} suggests that $\det(\Ub_k) \le (\lambda + |\cC_{k-1}|L^2 / d)^d$, plugging this into~\eqref{eq:rev3}, we obtain
\begin{align*}
    I_1 \le R\sqrt{2\log\left(\frac{(\lambda + |\cC_{k-1}|L^2 / d)^d}{\lambda^d\delta}\right)} \le R\sqrt{2d\log\left(\frac{d\lambda + |\cC_{k-1}|L^2}{d\lambda \delta}\right)}.
\end{align*}

Plugging the bound of $I_1$ into ~\eqref{eq:q2} and combining with~\eqref{eq:q1} and Lemma~\ref{lm:c4} together, replacing $|\cC_{k-1}|$ with its upper bound $|\cC_K|$ we have with probability at least $1 - \delta$, for all $k \in [K], \xb \in \RR^d$, 
\begin{align*}
    |\xb^\top(\btheta_k - \btheta^*)| \le \left(R\sqrt{2d\log\left(\frac{d\lambda + |\cC_K|L^2}{d\lambda\delta}\right)} + B\lambda^{-1/2} + \zeta\sqrt{|\cC_K|}\right)\|\bphi\|_{\Ub_k^{-1}}.
\end{align*}
Letting $\lambda = B^{-2}$ we get the claimed results.
\end{proof}
\subsection{Proof of Lemma~\ref{lm:mis}}
\begin{proof}
According to the definition of expected reward function $r(\xb)$, we have for all $k \in [K]$, suppose the condition in Lemma~\ref{lm:decompose} holds, then
\begin{align*}
    r^*_k - r_k &= \eta(\xb_k^*) - \eta(\xb_k) + \left(\xb^*_k\right)^\top\btheta^* - \xb_k^\top\btheta^*\\
    &\le 2\zeta + \left(\xb^*_k\right)^\top\btheta^* - \xb_k^\top\btheta^*\\
    &= 2\zeta + \left(\xb^*_k\right)^\top\btheta_k + \left(\xb^*_k\right)^\top(\btheta^* - \btheta_k) - \xb_k^\top\btheta_k + \xb_k^\top( \btheta_k-\btheta^* )\\
    &\le 2\zeta + \left(\xb^*_k\right)^\top\btheta_k + \beta\|\xb^*_k\|_{\Ub_k^{-1}} - \xb_k^\top\btheta_k + \beta\|\xb_k\|_{\Ub_k^{-1}}\\
    &\le 2\zeta + \xb_k^\top\btheta_k + \beta\|\xb_k\|_{\Ub_k^{-1}} - \xb_k^\top\btheta_k + \beta\|\xb_k\|_{\Ub_k^{-1}}\\
    &\le 2\zeta + 2\beta\|\xb_k\|_{\Ub_k^{-1}},
\end{align*}
where the first inequality utilize the fact that $|\eta(\xb)| \le \zeta$ for all $\xb \in \cD_k$, the second inequality follows from Corollary~\ref{col:beta}, the third inequality is due to the fact that $\xb_k = \argmax_{\xb \in \cD_k} \xb^\top\btheta_k + \beta \|\xb\|_{\Ub_k^{-1}}$, which is executed in Line~\ref{ln:decision} of Algorithm~\ref{alg:main}.
\end{proof}

\subsection{Proof of Lemma~\ref{lm:aux1}}
\begin{proof}
First it is clear to see that $\sqrt{2\iota_3} = \sqrt{2\log(1 + 16L^2B^2\Gamma^{-2}\iota_2) + 2\log(1 / \delta)}$. Using the fact that $\sqrt{a + b} \le \sqrt{a} + \sqrt{b}$, it can be further bounded by
\begin{align*}
    \sqrt{2\iota_3} \le \sqrt{2\log(1 + 16L^2B^2\Gamma^{-2}\iota_2)} + \sqrt{2\log(1 / \delta)}.
\end{align*}
Assuming $L \ge 1, B \ge 1, \Gamma = \Delta / (2\sqrt{d}\iota_1) \le 1$ yields $LB\Gamma^{-1} \ge 1$, then by basic calculus one can verify that 
\begin{align*}
    2 + 4\sqrt{\iota_2} \le 6\log(3LB\Gamma^{-1}), \quad \sqrt{2\log(1 + 16L^2B^2\Gamma^{-2}\iota_2)} \le 3\log(3LB\Gamma^{-1}),
\end{align*}
therefore we have that 
\begin{align*}
    2 + 4\sqrt{\iota_2} + R\sqrt{2\iota_3} &\le (6 + 3R)\log(3LB\Gamma^{-1}) + \sqrt{2\log(1 / \delta)}R \\
    &= (6 + 3R)\log(6LB\sqrt{d}\Delta^{-1}\iota_1) + \sqrt{2\log(1 / \delta)}R,
\end{align*}
where the last equality is from the fact that $\Gamma = \Delta / (2\sqrt{d}\iota_1)$. Lemma~\ref{lm:xlogx} suggests that the necessary condition for
\begin{align}
    \underbrace{(6LB\sqrt{d}\Delta^{-1})\iota_1}_{x} \ge \underbrace{(6LB\sqrt{d}\Delta^{-1})(6 + 3R)}_{a}\log(6LB\sqrt{d}\Delta^{-1}\iota_1) + \underbrace{(6LB\sqrt{d}\Delta^{-1})\sqrt{2\log(1 / \delta)}R}_{b}
\end{align}
is that 
\begin{align*}
    (6LB\sqrt{d}\Delta^{-1})\iota_1 &\ge 4(6LB\sqrt{d}\Delta^{-1})(6 + 3R)\log(2(6LB\sqrt{d}\Delta^{-1})(6 + 3R)) \\
    &\quad + 2(6LB\sqrt{d}\Delta^{-1})\sqrt{2\log(1 / \delta)}R,
\end{align*}
which suggests that setting
\begin{align*}
    \iota_1 = (24 + 18R)\log((72 + 54R)LB\sqrt{d}\Delta^{-1}) + \sqrt{8R^2\log(1 / \delta)}
\end{align*}
implies the fact that $\iota_1 \ge 2 + 4\sqrt{\iota_2} + R\sqrt{2\iota_3}$
\end{proof}

\section{Detailed Proof of Theorem~\ref{thm:main-sup}}\label{sec:bandit-sup}

The first lemma shows that the contexts selected to $l$-th level are bounded independent from $K$
\begin{lemma}[Restatement of Lemma~\ref{lm:set1}]\label{lm:set}
Set $\lambda = B^{-2}$. For any $k \in [K]$ and $l > 0$, $|\cC_k^l| \le 16d4^l\iota_1(l)$ where $\iota_1(l) = \log\left(3LB2^l\right)$.
\end{lemma}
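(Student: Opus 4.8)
The plan is to reduce this lemma to the single-level bound in Lemma~\ref{lm:finite} by observing that level $l$ of Algorithm~\ref{alg:suplin} behaves exactly like DS-OFUL run with threshold $\Gamma = 2^{-l}$. First I would record the selection rule at level $l$: an index $k$ is added to $\cC_k^l$ only when the algorithm enters the branch on Line~\ref{ln:large-uncertainty}, i.e.\ when $\max_{\xb \in \cD_k^l}\|\xb\|_{(\Ub_k^l)^{-1}} \ge 2^{-l}$, and in that case it selects $\xb_k = \argmax_{\xb \in \cD_k^l}\|\xb\|_{(\Ub_k^l)^{-1}}$. Hence every index $k \in \cC_K^l$ contributes a contextual vector with $\|\xb_k\|_{(\Ub_k^l)^{-1}} \ge 2^{-l}$. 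Moreover, since $\Ub_k^l = \lambda \Ib + \sum_{i \in \cC_{k-1}^l}\xb_i\xb_i^\top$ accumulates only the arms previously added to level $l$ (the other levels are held fixed each round via the ``Keep $\cC_k^{l'} = \cC_{k-1}^{l'}$'' updates), the matrix $\Ub_k^l$ restricted to $k \in \cC_K^l$ is precisely the design matrix built from the selected subsequence. This is the identical structure analyzed in Lemma~\ref{lm:finite} with $\Gamma$ replaced by $2^{-l}$.

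Next I would run the same two-sided counting argument. On one side, each selected arm has squared bonus at least $4^{-l}$, so using $2^{-l} \le 1$ we get $\sum_{k \in \cC_K^l}\min\{1, \|\xb_k\|_{(\Ub_k^l)^{-1}}^2\} \ge |\cC_K^l| \cdot 4^{-l}$. On the other side, the elliptical potential lemma (Lemma~\ref{lm:11}) applied to the selected subsequence gives $\sum_{k \in \cC_K^l}\min\{1, \|\xb_k\|_{(\Ub_k^l)^{-1}}^2\} \le 2d\log\big((\lambda d + |\cC_K^l| L^2)/(\lambda d)\big)$. Combining the two bounds and substituting $\lambda = B^{-2}$ produces the fixed-point inequality $4^{-l}|\cC_K^l|/(2d) \le \log\big(1 + 2L^2 B^2 \cdot 4^{-l}|\cC_K^l|/(2d)\big)$, which is exactly the inequality~\eqref{eq:thres} solved in the proof of Lemma~\ref{lm:finite}, now with $\Gamma^2 = 4^{-l}$.

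Finally I would invoke Lemma~\ref{col:xlogx} (noting $2L^2B^2 \ge 2 \ge 4^{-l}$) to conclude $4^{-l}|\cC_K^l|/(2d) \le 8\log(3LB \cdot 2^l) = 8\iota_1(l)$, i.e.\ $|\cC_K^l| \le 16 d 4^l \iota_1(l)$, and then use the monotonicity of $|\cC_k^l|$ in $k$ to extend the bound from $k = K$ to all $k \in [K]$. I do not expect any genuine obstacle here: the only point requiring care is the bookkeeping that $\Ub_k^l$ and every increment of $\cC_k^l$ involve level-$l$ data exclusively, so that the single-level elliptical-potential accounting applies verbatim once the selection threshold $\Gamma = 2^{-l}$ is identified. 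With that observation, the result follows by substituting $\Gamma = 2^{-l}$ into Lemma~\ref{lm:finite} and introduces no new ideas.
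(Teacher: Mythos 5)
Your proposal is correct and follows exactly the route the paper takes: the paper's proof of this lemma is literally ``similar to the proof of Lemma~\ref{lm:finite} by replacing $\Gamma = 2^{-l}$,'' and you have supplied precisely that reduction, including the bookkeeping that $\Ub_k^l$ and $\cC_k^l$ involve only level-$l$ data and that every selected arm satisfies $\|\xb_k\|_{(\Ub_k^l)^{-1}} \ge 2^{-l}$. No gaps.
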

\begin{proof}
The proof is similar to the proof of Lemma~\ref{lm:finite} by repalcing $\Gamma = 2^{-l}$.
\end{proof}

The next lemma provides a fluctuation control as well as the concentration in the ridge regression
\begin{lemma}[Restatement of Lemma~\ref{lm:concent1}]\label{lm:concent}
Set $\lambda = B^{-2}$. For any level $l > 0$, for any $\delta > 0$, with probability at least $1 - \delta$, for all $k \in [K]$, the estimation error is bounded by
\begin{align*}
    \left|\xb^\top (\btheta_{k}^l - \btheta^*)\right| \le \left(1 + R\sqrt{2d\iota_{2}(l)} + \zeta \sqrt{\left|\cC_{k}^l\right|}\right)\|\xb\|_{(\Ub_k^l)^{-1}},
\end{align*}
for all $\xb$ such that $\|\xb\|_2 \le L$, where $\iota_{2}(l) = \log((d + |\cC_{k}^l| L^2 B^2) / (d \delta))$.
\end{lemma}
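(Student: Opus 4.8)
The plan is to transcribe the proof of Lemma~\ref{lm:decompose} to the $l$-th level, replacing $\Ub_k,\btheta_k,\cC_{k-1}$ throughout by $\Ub_k^l,\btheta_k^l,\cC_{k-1}^l$. First I would expand the level-$l$ normal equations. Since $\Ub_k^l=\lambda\Ib+\sum_{i\in\cC_{k-1}^l}\xb_i\xb_i^\top$, $\btheta_k^l=(\Ub_k^l)^{-1}\sum_{i\in\cC_{k-1}^l}r_i\xb_i$, and every reward decomposes as $r_i=\xb_i^\top\btheta^*+\eta_i+\eps_i$ with $|\eta_i|\le\zeta$, this yields
\begin{align*}
    \Ub_k^l(\btheta_k^l-\btheta^*)=-\lambda\btheta^*+\sum_{i\in\cC_{k-1}^l}\xb_i\eps_i+\sum_{i\in\cC_{k-1}^l}\xb_i\eta_i.
\end{align*}
Writing $\xb^\top(\btheta_k^l-\btheta^*)=\xb^\top(\Ub_k^l)^{-1}\Ub_k^l(\btheta_k^l-\btheta^*)$ and applying the triangle inequality together with $|\xb^\top\Ab\yb|\le\|\xb\|_{\Ab}\|\yb\|_{\Ab}$ in the $(\Ub_k^l)^{-1}$ norm splits the prediction error into a regularization term, a noise term, and a misspecification term, each carrying a factor $\|\xb\|_{(\Ub_k^l)^{-1}}$.

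Next I would bound the three terms exactly as in the single-level case. The regularization term is $\lambda|\xb^\top(\Ub_k^l)^{-1}\btheta^*|\le\lambda^{1/2}B\,\|\xb\|_{(\Ub_k^l)^{-1}}$, using $(\Ub_k^l)^{-1}\preceq\lambda^{-1}\Ib$ and $\|\btheta^*\|_2\le B$; with $\lambda=B^{-2}$ this is exactly the leading $1$. The misspecification term is controlled by Lemma~\ref{lm:c4} (with the index subset taken to be $\cC_{k-1}^l$), giving $\zeta\sqrt{|\cC_{k-1}^l|}\,\|\xb\|_{(\Ub_k^l)^{-1}}$. For the noise term I would rewrite the sum over the selected subset as a sum over all rounds via the indicator $\yb_i=\ind[i\in\cC_{k-1}^l]\xb_i$, so that $\Ub_k^l=\lambda\Ib+\sum_{i=1}^{K}\yb_i\yb_i^\top$, then apply the self-normalized bound (Lemma~\ref{lm:thm1}) and the determinant–trace inequality (Lemma~\ref{lm:dt}) to obtain a bound $R\sqrt{2d\log((d\lambda+|\cC_{k-1}^l|L^2)/(d\lambda\delta))}$. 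Setting $\lambda=B^{-2}$ and using $|\cC_{k-1}^l|\le|\cC_k^l|$ collapses this to the claimed $R\sqrt{2d\iota_2(l)}$ with $\iota_2(l)=\log((d+|\cC_k^l|L^2B^2)/(d\delta))$. Summing the three contributions gives the stated inequality.

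The one place where the multi-level structure requires genuine care, and the step I would flag as the main obstacle, is verifying that Lemma~\ref{lm:thm1} actually applies at level $l$. That lemma demands that the summand vectors be predictable, i.e.\ $\yb_i$ is $\cF_{i-1}$-measurable, while $\eps_i$ stays conditionally $R$-sub-Gaussian given $\cF_{i-1}$. The key observation is that in Algorithm~\ref{alg:suplin} both the selected arm $\xb_i$ and the indicator $\ind[i\in\cC^l]$ are determined within round $i$ \emph{before} the reward $r_i$ is revealed: the inner loop over levels chooses an arm and decides whether to append $i$ to level $l$ using only the decision set $\cD_i$ and the past regression sets $\cC_{i-1}^{l'}$, never the current noise $\eps_i$. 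Hence, under the natural filtration in which $\cF_{i-1}$ contains the full history through $\cD_i$ and the arm choice at round $i$ but reveals $\eps_i$ only afterwards, the vectors $\yb_i$ are $\cF_{i-1}$-measurable and $\eps_i$ remains sub-Gaussian by the noise assumption $\EE[\mathrm{e}^{\lambda\eps_i}\mid\xb_{1:i},\eps_{1:i-1}]\le\exp(\lambda^2R^2/2)$ stated in the preliminaries (which conditions on the selected arm). This decoupling of the level-assignment from the noise is precisely what licenses reusing the single-level martingale argument. Since the statement is fixed for a single level $l$, no union bound over levels is needed here; that is deferred to the step combining all levels, where one would instantiate this lemma with confidence $\delta/\lceil\log K\rceil$.
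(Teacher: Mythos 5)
Your proposal is correct and follows essentially the same route as the paper, whose own proof of this lemma is just the one-line remark that it mirrors Lemma~\ref{lm:decompose}; you carry out exactly that level-$l$ transcription (normal equations, triangle inequality, Lemma~\ref{lm:c4} for the misspecification term, self-normalized martingale plus determinant--trace for the noise term). Your additional verification that the indicator $\ind[i\in\cC^l]$ and the chosen arm are determined before $\eps_i$ is revealed, so that Lemma~\ref{lm:thm1} applies at each level, is a point the paper leaves implicit and is handled correctly.
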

\begin{proof}
The proof is similar to the proof of Lemma~\ref{lm:decompose}
\end{proof}

Combining Lemma~\ref{lm:set} and Lemma~\ref{lm:concent}, we have the following corollary.

\begin{corollary}\label{col:beta-sup}
Set $\lambda = B^{-2}$. For any $\delta > 0$, with probability at least $1 - \delta$, for all round $k \in [K]$ and any level $l > 0$, the prediction error is bounded by 
\begin{align*}
    \left|\xb^\top (\btheta_{k}^l - \btheta^*)\right| \le \left(\beta(l) + 4\zeta 2^l \sqrt{d\iota_{1}(l)}\right)\|\xb\|_{(\Ub_k^l)^{-1}},
\end{align*}
for all $\xb$ such that $\|\xb\|_2 \le L$, where $\beta(l) = 1 + R\sqrt{2d\iota_{2}(l)}$, $\iota_{2}(l) = \log((d2^{l} + 16 L^2 B^2 8^l\iota_{1}(l)) / (d \delta))$, and $\iota_{1}(l) = \log\left(3LB2^l\right)$.
\end{corollary}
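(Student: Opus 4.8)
The plan is to derive Corollary~\ref{col:beta-sup} as a mechanical consequence of the two results immediately preceding it: the deterministic sample-count bound of Lemma~\ref{lm:set} and the per-level self-normalized concentration bound of Lemma~\ref{lm:concent}. The point is that Lemma~\ref{lm:concent} already gives a prediction-error bound of the form $(1 + R\sqrt{2d\,\iota_2'(l)} + \zeta\sqrt{|\cC_k^l|})\|\xb\|_{(\Ub_k^l)^{-1}}$, in which both the additive misspecification term and the logarithmic width $\iota_2'(l) = \log((d + |\cC_k^l|L^2B^2)/(d\delta))$ still depend on the random quantity $|\cC_k^l|$. The entire content of the corollary is to replace $|\cC_k^l|$ everywhere by its a priori upper bound $16d\,4^l\iota_1(l)$ from Lemma~\ref{lm:set}, turning these data-dependent constants into the explicit, $K$-free quantities $\iota_2(l)$ and $4\zeta 2^l\sqrt{d\iota_1(l)}$.

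Concretely, I would first invoke Lemma~\ref{lm:concent} to fix, for each level $l$, the high-probability event on which its inequality holds for all $k \in [K]$, and then treat the two $|\cC_k^l|$-dependent pieces separately. For the misspecification term, Lemma~\ref{lm:set} gives $\zeta\sqrt{|\cC_k^l|} \le \zeta\sqrt{16d\,4^l\iota_1(l)} = 4\zeta 2^l\sqrt{d\iota_1(l)}$, which is exactly the additive term in the statement. For the noise term, since $\log$ is monotone I would substitute $|\cC_k^l| \le 16d\,4^l\iota_1(l)$ into $\iota_2'(l)$ and check that the result is bounded by the closed form $\iota_2(l) = \log((d2^l + 16L^2B^2 8^l\iota_1(l))/(d\delta))$ appearing in $\beta(l) = 1 + R\sqrt{2d\,\iota_2(l)}$. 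Combining the two then yields $1 + R\sqrt{2d\,\iota_2'(l)} + \zeta\sqrt{|\cC_k^l|} \le \beta(l) + 4\zeta 2^l\sqrt{d\iota_1(l)}$, which is the claim.

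Two points require care. The first is upgrading Lemma~\ref{lm:concent}, which holds for each fixed level, into a single event valid simultaneously for all levels $l>0$ (as the regret decomposition later needs). Since only $\cO(\log K)$ levels are ever active, I would apply Lemma~\ref{lm:concent} at each level with a split confidence budget $\delta_l$ satisfying $\sum_l \delta_l \le \delta$ and union bound, the extra $\log(1/\delta_l)$ contributions being absorbed into $\iota_2(l)$ up to constants; inactive levels have $\cC_k^l=\emptyset$ and are handled deterministically. The second, and the step I expect to be the only genuine (if modest) obstacle, is the elementary bookkeeping confirming that the substituted width is dominated by the stated $\iota_2(l)$: this amounts to verifying $d + 16d\,4^l\iota_1(l)L^2B^2 \le d2^l + 16L^2B^2 8^l\iota_1(l)$, which reduces to comparing $d\,4^l$ with $8^l$ (the leading comparison $d\,4^l \le 8^l$ holding once $2^l \ge d$, i.e.\ at the levels that actually contribute to the regret). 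This is the one place where the exponents and the dimension interact, and the precise form of $\iota_2(l)$ is exactly what makes the bound close; everything else is direct substitution.
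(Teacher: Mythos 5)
Your approach is exactly the paper's: its entire proof of this corollary is ``plug Lemma~\ref{lm:set} into Lemma~\ref{lm:concent}, replace $\delta$ by $\delta/2^l$, and union bound over $l$ using $\sum_l \delta/2^l=\delta$,'' which is what you propose, and your computation of the misspecification term $\zeta\sqrt{16d4^l\iota_1(l)}=4\zeta 2^l\sqrt{d\iota_1(l)}$ is the intended one.

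The one point worth flagging is the ``modest obstacle'' you isolate. Carrying out the substitution with confidence budget $\delta/2^l$ actually yields the width $\log\bigl((d2^l + 16\,d\,L^2B^2 8^l\iota_1(l))/(d\delta)\bigr)$, whereas the stated $\iota_2(l)$ has $16L^2B^28^l\iota_1(l)$ without the extra factor of $d$; your reduction to comparing $d\,4^l$ with $8^l$ correctly detects that the stated form is not dominated by the substituted one when $2^l< d$. This is a typo in the paper's constant rather than a flaw in the method, but your patch --- restricting to levels with $2^l\ge d$ ``that actually contribute to the regret'' --- does not actually close it: the regret decomposition in the proof of Theorem~\ref{thm:main-sup} sums over all levels $1\le l\le l_\Delta-1$, including small ones. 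The clean resolution is simply to state the corollary with $\iota_2(l)=\log\bigl((2^l+16L^2B^28^l\iota_1(l))/\delta\bigr)$, which changes $\beta(l)$ only inside a logarithm and affects nothing downstream. Your union-bound bookkeeping and the handling of inactive levels are otherwise fine.
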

\begin{proof}
The proof is simply by plugging the result in Lemma~\ref{lm:set} into Lemma~\ref{lm:concent} and replacing the $\delta$ with $\delta / 2^l$. By the union bound over $l \in \NN^+$ and the fact that $\sum_{l=1}^\infty \delta/2^{l} = \delta$ yields the claimed result. 
\end{proof}

Now, we are about to control $\cD_k^l$, which means here we only consider the case where $\|\xb\|_{(\Ub_k^l)^{-1}} \le 2^{-l}$ for all $\xb \in \cD_k^l$ and assuming the high-probability event in previous subsection always holds. 
The following lemma suggests that the decision set always keeps a nearly optimal action $\xb_k^{l,*}$.
Let $\cG_K$ be the event that the high probability statement in Corollary \ref{col:beta-sup} holds.

\begin{lemma}[Formal statement of Lemma~\ref{lm:optimal1}]\label{lm:optimal}
    For any level $l > 0$, assume event $\cG_K$ holds, then there exists $\xb_k^{l,*} \in \cD_k^l$, $r(\xb^*_k) - r(\xb_k^{l,*}) \leq 2(l-1)\zeta\left(1 + 4 \sqrt{d\iota_{1}(l)}\right)$ where $\iota_{1}(l) = \log\left(3LB2^l\right)$.
\end{lemma}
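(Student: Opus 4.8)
The plan is to induct on the level $l$, maintaining the invariant that the surviving decision set $\cD_k^l$ contains an arm whose true reward is within $c_l := \sum_{j=1}^{l-1} 2\zeta(1 + 4\sqrt{d\iota_1(j)})$ of the optimum $r(\xb_k^*)$. Since $\iota_1(j) = \log(3LB2^j)$ is increasing in $j$, each summand is at most $2\zeta(1+4\sqrt{d\iota_1(l)})$, so $c_l \le 2(l-1)\zeta(1+4\sqrt{d\iota_1(l)})$, which is exactly the claimed bound. The base case $l = 1$ is immediate: $\cD_k^1 = \cD_k$ contains $\xb_k^*$ itself, so $c_1 = 0$ works with $\xb_k^{1,*} = \xb_k^*$.

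The first ingredient is a two-sided comparison between the optimistic index $r_k^l(\xb) = \xb^\top\btheta_k^l + \beta(l)\|\xb\|_{(\Ub_k^l)^{-1}}$ and the true reward $r(\xb) = \xb^\top\btheta^* + \eta(\xb)$ for every $\xb \in \cD_k^l$. Whenever the algorithm reaches the elimination branch, the condition $\max_{\xb\in\cD_k^l}\|\xb\|_{(\Ub_k^l)^{-1}} < 2^{-l}$ holds, so I plug $\|\xb\|_{(\Ub_k^l)^{-1}} \le 2^{-l}$ and $|\eta(\xb)| \le \zeta$ into Corollary~\ref{col:beta-sup}. Expanding $r_k^l(\xb) - r(\xb) = \xb^\top(\btheta_k^l - \btheta^*) + \beta(l)\|\xb\|_{(\Ub_k^l)^{-1}} - \eta(\xb)$ and using the corollary's error bound $(\beta(l) + 4\zeta 2^l\sqrt{d\iota_1(l)})\|\xb\|_{(\Ub_k^l)^{-1}}$ gives the optimistic lower bound $r_k^l(\xb) \ge r(\xb) - \zeta(1+4\sqrt{d\iota_1(l)})$, where the bonus $\beta(l)\|\xb\|$ cancels the leading part of the estimation error, together with the upper bound $r_k^l(\xb) \le r(\xb) + 2\beta(l)2^{-l} + \zeta(1+4\sqrt{d\iota_1(l)})$.

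For the inductive step, let $\hat\xb_l \in \cD_k^l$ be the arm guaranteed by the hypothesis and suppose the algorithm eliminates at level $l$ (otherwise $\cD_k^{l+1}$ is not formed and nothing is required). If $\hat\xb_l$ survives into $\cD_k^{l+1}$, I keep it and its gap is still $\le c_l \le c_{l+1}$. The crucial case is when $\hat\xb_l$ is eliminated, i.e. $r_k^l(\xb_k^l) - r_k^l(\hat\xb_l) > 2\beta(l)2^{-l}$; then I switch to the estimated-best arm $\xb_k^l$, which always survives since $r_k^l(\xb_k^l) - r_k^l(\xb_k^l) = 0 \le 2\beta(l)2^{-l}$. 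Chaining the upper bound at $\xb_k^l$, the elimination inequality, and the lower bound at $\hat\xb_l$ gives $r(\xb_k^l) \ge r_k^l(\xb_k^l) - 2\beta(l)2^{-l} - \zeta(1+4\sqrt{d\iota_1(l)}) > r_k^l(\hat\xb_l) - \zeta(1+4\sqrt{d\iota_1(l)}) \ge r(\hat\xb_l) - 2\zeta(1+4\sqrt{d\iota_1(l)})$, so replacing $\hat\xb_l$ by $\xb_k^l$ loses at most $2\zeta(1+4\sqrt{d\iota_1(l)})$ in true reward. In either case $\cD_k^{l+1}$ contains an arm within $c_l + 2\zeta(1+4\sqrt{d\iota_1(l)}) = c_{l+1}$ of the optimum, closing the induction.

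I expect the main obstacle to be precisely this elimination case: because of misspecification, even the optimal arm need not survive an elimination step, so one cannot simply ``follow'' a fixed arm through the levels. The resolution is the swap observation above --- whenever the incumbent near-optimal arm is discarded, the arm $\xb_k^l$ that triggered its elimination is provably almost as good, losing only the misspecification-scale quantity $2\zeta(1+4\sqrt{d\iota_1(l)})$ --- so tracking the \emph{best surviving} arm rather than a fixed one keeps the accumulated error controlled. The remaining bookkeeping, namely summing the per-level increments and bounding them by $2(l-1)\zeta(1+4\sqrt{d\iota_1(l)})$ via monotonicity of $\iota_1$, is routine.
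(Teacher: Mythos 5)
Your proposal is correct and follows essentially the same route as the paper: induction over levels, tracking a near-optimal surviving arm, and, when the incumbent is eliminated, swapping to the arm $\xb_k^l$ that triggered the elimination while losing only $2\zeta\bigl(1+4\sqrt{d\iota_1(l)}\bigr)$ per level via Corollary~\ref{col:beta-sup} and the condition $\|\xb\|_{(\Ub_k^l)^{-1}} \le 2^{-l}$. Your explicit remark that $\xb_k^l$ always survives the elimination step is a small point the paper glosses over, but the argument is otherwise the same.
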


\begin{proof}
    We would prove the statement by induction. Since $\cD_k^1 = \cD_k$, we have $\xb^*_k \in \cD_k^1$ and thus the induction basis holds according to $r(\xb^*_k) - r(\xb_k^{l,*}) = 0$. Now we assume the statement holds for level $l$, that is, there exists $\xb_k^{l,*} \in \cD_k^{l}$ such that $\xb_k^{l,*} \in \cD_k^l$, $r(\xb^*_k) - r(\xb_k^{l,*}) \leq  2(l-1)\zeta\left(1 + 4 \sqrt{d\iota_{1}(l)}\right)$.

    If $\xb_k^{l,*} \in \cD_k^{l+1}$, then the desired statement directly holds by choosing $\xb_k^{l,*} = \xb_k^{l-1,*}$. Otherwise $\xb_k^{l,*}$ is eliminated by some action $\xb_k^{l+1, *} \in \cD_k^{l}$ that $r_k^l(\xb_k^{l+1, *}) \ge r_k^l(\xb_k^{l, *}) + 2\beta(l) 2^{-l}$.
    Moreover, from the definition of estimator $r_k^l(\cdot)$, we have 
    \begin{align}
         r_k^l(\xb_k^{l+1, *}) - r(\xb_k^{l+1, *}) \le \zeta + \left\langle \xb_k^{l+1, *}, \theta_k^l - \theta^*\right\rangle + \beta(l) \left \|\xb_k^{l+1, *}\right\|_{(\Ub_k^l)^{-1}}  \label{eq:d4-1}
    \end{align}
    and 
    \begin{align}
        r(\xb_k^{l, *}) - r_k^l(\xb_k^{l, *}) \le\zeta -\left\langle \xb_k^{l, *}, \theta_k^l - \theta^* \right\rangle - \beta(l) \left \|\xb_k^{l, *}\right\|_{(\Ub_k^l)^{-1}}. \label{eq:d4-2}
    \end{align}
    Combining~\eqref{eq:d4-1} and~\eqref{eq:d4-2} and the fact that $r_k^l(\xb_k^{l+1, *}) \ge r_k^l(\xb_k^{l, *}) + 2\beta(l) 2^{-l}$ gives that 
    \begin{align*}
        r(\xb_k^{l, *}) - r(\xb_k^{l+1, *}) &\le -2\beta(l) 2^{-l} + 2\zeta + \left\langle \xb_k^{l+1, *} - \xb_k^{l, *}, \theta_k^l - \theta^* \right\rangle  - \beta(l) \left \|\xb_k^{l+1, *}\right\|_{(\Ub_k^l)^{-1}} + \beta(l) \left \|\xb_k^{l, *}\right\|_{(\Ub_k^l)^{-1}} \\
        &\le -2\beta(l) 2^{-l} + 2\zeta + 2^{-l}\left(\beta(l) + 4\zeta 2^l \sqrt{d\iota_{1}(l)} \right) + \beta(l) 2^{-l} \\
        &\le 2\zeta\left(1 + 4 \sqrt{d\iota_{1}(l)}\right),
    \end{align*}
    where the second inequality is suggested by  Corollary~\ref{col:beta-sup} and $\|\xb\|_{(\Ub_k^l)^{-1}} \le 2^{-l}$ for all $\xb \in \cD_k^l$.
    The desired statement can then be reached using the induction hypothesis.
\end{proof}

Then, the following lemma suggests that the performance of the actions in the decision set is guaranteed.
\begin{lemma}[Formal statement of Lemma~\ref{lm:decision-set1}]\label{lm:decision-set}
    For any level $l > 0$, assume event $\cG_K$ holds, then for any action $\xb \in \cD_k^l$, $r(\xb^*_k) - r(\xb) \leq 4\beta(l) 2^{-l} + 2l\zeta\left(1 + 4 \sqrt{d\iota_{1}(l)}\right)$ where $\iota_{1}(l) = \log\left(3LB2^l\right)$.
\end{lemma}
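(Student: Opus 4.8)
The plan is to control the true sub-optimality of each surviving arm by routing it through the optimistic estimator $r_k^l(\cdot)$ on which the elimination rule acts, and then translating back to true rewards. Since we are in the regime $\|\xb\|_{(\Ub_k^l)^{-1}} \le 2^{-l}$ for every $\xb \in \cD_k^l$, Corollary~\ref{col:beta-sup} gives $|\xb^\top(\btheta_k^l - \btheta^*)| \le \beta(l)2^{-l} + 4\zeta\sqrt{d\iota_1(l)}$, because the inflated radius $4\zeta 2^l\sqrt{d\iota_1(l)}$ is multiplied by $\|\xb\|_{(\Ub_k^l)^{-1}} \le 2^{-l}$. Combining this with $|\eta(\xb)| \le \zeta$ and the definition $r_k^l(\xb) = \xb^\top\btheta_k^l + \beta(l)\|\xb\|_{(\Ub_k^l)^{-1}}$, I would first record the two one-sided estimates valid for all $\xb \in \cD_k^l$:
\begin{align*}
    r_k^l(\xb) &\ge r(\xb) - \zeta\bigl(1 + 4\sqrt{d\iota_1(l)}\bigr), \\
    r_k^l(\xb) &\le r(\xb) + 2\beta(l)2^{-l} + \zeta\bigl(1 + 4\sqrt{d\iota_1(l)}\bigr).
\end{align*}
The first expresses that $r_k^l$ is optimistic up to the misspecification slack $\zeta(1+4\sqrt{d\iota_1(l)})$, while the second shows it is not too optimistic; the factor $2\beta(l)2^{-l}$ appears because the confidence term enters both the definition of $r_k^l$ and the bound on $\xb^\top(\btheta_k^l-\btheta^*)$.

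With these estimates, I would invoke Lemma~\ref{lm:optimal} to obtain a surviving near-optimal arm $\xb_k^{l,*} \in \cD_k^l$ satisfying $r(\xb^*_k) - r(\xb_k^{l,*}) \le 2(l-1)\zeta(1+4\sqrt{d\iota_1(l)})$, and decompose $r(\xb^*_k) - r(\xb) = \bigl(r(\xb^*_k) - r(\xb_k^{l,*})\bigr) + \bigl(r(\xb_k^{l,*}) - r(\xb)\bigr)$. The first term is exactly Lemma~\ref{lm:optimal}. For the second term I would chain four inequalities: optimism at $\xb_k^{l,*}$ gives $r_k^l(\xb_k^{l,*}) \ge r(\xb_k^{l,*}) - \zeta(1+4\sqrt{d\iota_1(l)})$; the optimistic selection $\xb_k^l = \argmax_{\xb \in \cD_k^l} r_k^l(\xb)$ gives $r_k^l(\xb_k^l) \ge r_k^l(\xb_k^{l,*})$; the elimination criterion that $\xb$ passes gives $r_k^l(\xb) \ge r_k^l(\xb_k^l) - 2\beta(l)2^{-l}$; and the second estimate above, applied at $\xb$, gives $r(\xb) \ge r_k^l(\xb) - 2\beta(l)2^{-l} - \zeta(1+4\sqrt{d\iota_1(l)})$. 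Concatenating these produces $r(\xb_k^{l,*}) - r(\xb) \le 4\beta(l)2^{-l} + 2\zeta(1+4\sqrt{d\iota_1(l)})$, and summing with the first term yields the claimed bound $4\beta(l)2^{-l} + 2l\zeta(1+4\sqrt{d\iota_1(l)})$.

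The routine part is the algebra of the chain; the delicate points are the bookkeeping of the misspecification slack and the careful accounting of the $\beta(l)2^{-l}$ terms. Specifically, I must make sure that each conversion between $r$ and $r_k^l$ (there are two, one at $\xb_k^{l,*}$ and one at $\xb$) contributes exactly one copy of $\zeta(1+4\sqrt{d\iota_1(l)})$, so that together with the $2(l-1)\zeta(\cdots)$ inherited from Lemma~\ref{lm:optimal} the misspecification terms accumulate to $2l\zeta(1+4\sqrt{d\iota_1(l)})$; and that the two $\beta(l)2^{-l}$ contributions, one from the elimination width and one from the ``not-too-optimistic'' estimate, combine to the stated $4\beta(l)2^{-l}$ rather than doubling further. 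The main obstacle is therefore not any single inequality but keeping the index $l$ consistent across the elimination step and the confidence conversions, since it is precisely the uncertainty bound $\|\xb\|_{(\Ub_k^l)^{-1}} \le 2^{-l}$ that makes the inflated radius collapse to a clean per-level slack, and one must ensure the near-optimal survivor from Lemma~\ref{lm:optimal} is available at the same level $l$ at which the elimination is applied.
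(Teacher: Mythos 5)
Your proposal is correct and follows essentially the same route as the paper's own proof: both decompose $r(\xb_k^*)-r(\xb)$ through the surviving near-optimal arm $\xb_k^{l,*}$ supplied by Lemma~\ref{lm:optimal}, convert between $r_k^l$ and $r$ via Corollary~\ref{col:beta-sup} combined with $\|\xb\|_{(\Ub_k^l)^{-1}}\le 2^{-l}$, and use the width-$2\beta(l)2^{-l}$ elimination criterion to collect the level-$l$ increment $4\beta(l)2^{-l}+2\zeta\bigl(1+4\sqrt{d\iota_1(l)}\bigr)$. Your bookkeeping of the misspecification slack and of the two $\beta(l)2^{-l}$ contributions matches the paper's argument exactly.
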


\begin{proof}   
    Let $\xb_k^{l,*} \in \cD_k^l$ be the optimal action given in Lemma~\ref{lm:optimal}.
    According to the elimination process, for any action $\xb \in \cD_k^l$, it holds that $r_k^l(\xb) \geq r_k^l(\xb_k^{l,*}) - 2\beta(l) 2^{-l}$.
    Moreover, from the definition of estimator $r_k^l(\cdot)$, we have 
    \begin{align*}
        r_k^l(\xb) - r(\xb) \le \zeta + \left\langle \xb, \theta_k^l - \theta^*\right\rangle + \beta(l) \left \|\xb\right\|_{(\Ub_k^l)^{-1}} 
    \end{align*}
    and 
    \begin{align*}
         r(\xb_k^{l, *}) - r_k^l(\xb_k^{l, *}) \le\zeta - \left\langle \xb_k^{l, *}, \theta_k^l - \theta^* \right\rangle - \beta(l) \left \|\xb_k^{l, *}\right\|_{(\Ub_k^l)^{-1}}.
    \end{align*}
    Combining the above three inequalities give 
    \begin{align*}
        r(\xb_k^{l, *}) - r(\xb) &\le 2\beta(l) 2^{-l} + 2\zeta + 2^{-l} + \left\langle \xb - \xb_k^{l, *}, \theta_k^l - \theta^* \right\rangle - \beta(l) \left \|\xb_k^{l, *}\right\|_{(\Ub_k^l)^{-1}} + \beta(l) \left \|\xb_k^{l-1, *}\right\|_{(\Ub_k^l)^{-1}} \\
        &\le 2\beta(l) 2^{-l} + 2\zeta + 2^{-l}\left(\beta(l) + 4\zeta 2^l \sqrt{d\iota_{1}(l)} \right) + \beta(l) 2^{-l} \\
        &\le 4\beta(l) 2^{-l} + 2\zeta\left(1 + 4 \sqrt{d\iota_{1}(l)}\right),
    \end{align*}
    where the second inequality is suggested by  Corollary~\ref{col:beta-sup} and $\|\xb\|_{(\Ub_k^l)^{-1}} \le 2^{-l}$ for all $\xb \in \cD_k^l$. The desired statement can then be reached by combining Lemma~\ref{lm:optimal}.
\end{proof}

\begin{proof}[Proof of Theorem~\ref{thm:main-sup}] 
    Consider the case that event $\cG_K$ holds.
    Let $l_{\Delta}$ be the smallest integer solution to $l_{\Delta} > \log (8\beta(l_{\Delta}) \Delta^{-1})$. 
    Note this relation ensures $4\beta(l_{\Delta}) 2^{-l_{\Delta}} < \Delta/2$.
    In case that the misspecification level is bounded by $2l_{\Delta}\zeta\left(1 + 4\sqrt{d\iota_{1}(l_{\Delta})}\right)  < \Delta/2$, it holds that $4\beta(l_{\Delta}) 2^{-l_{\Delta}} + 2l_{\Delta}\zeta\left(1 + 4\sqrt{d\iota_{1}(l_{\Delta})}\right)  < \Delta$. 
    According to Lemma~\ref{lm:decision-set},
    it satisfies that $$r(\xb^*_k) - r(\xb) \leq 4\beta(l_{\Delta}) 2^{-l_{\Delta}} + 2l_{\Delta}\zeta\left(1 + 4\sqrt{d\iota_{1}(l_{\Delta})}\right) $$ for any $\xb \in \cD_k^{l_{\Delta}}$. 
    According to the process of arm elimination, we have $\cD_k^{l} \subseteq \cD_k^{l_{\Delta}}$ for any $l \geq l_{\Delta}$. Thus, it holds that $r(\xb^*_k) - r(\xb) < \Delta$ for any $\xb \in \cD_k^{l}, l \geq l_{\Delta}$.
    Note that according to the definition of $\Delta$, we have $r(\xb_k^*) - r(\xb) > \Delta$ for all $\xb \in \cD_k^l$ that $ r(\xb_k^*) \neq r(\xb)$.
    These two statements together restrict $r(\xb_k^*) = r(\xb)$ for any $\xb \in \cD_k^l$ on every $l>l_{\Delta}$, that is, any action that remains in the decision sets on higher levels are optimal. Thus, we could decompose the total regret by 
    \begin{align*}
        \mathrm{Regret}(K) &= \sum_{l\ge 1} \sum_{k \in \cC_K^l} (r(\xb_k^*) - r(\xb)) = \sum_{l=1}^{l_{\Delta}-1} \sum_{k \in \cC_K^l} (r(\xb_k^*) - r(\xb)) \\
        &\leq \sum_{l=1}^{l_{\Delta}-1} |\cC_K^l| \cdot \left(4\beta(l) 2^{-l} + 2l\zeta\left(1 + 4 \sqrt{d\iota_1(l)}\right) \right) \\
        &\leq \sum_{l=1}^{l_{\Delta}-1} 16d4^l\iota_1(l) \cdot \left(4\beta(l) 2^{-l} +  2l\zeta\left(1 + 4 \sqrt{d\iota_1(l)}\right) \right) \\
        &\leq 64d  \sum_{l=1}^{l_{\Delta}-1} \beta(l) 2^l \iota_1(l) + 32 d\zeta \sum_{l=1}^{l_{\Delta}-1} l4^l\iota_{1}(l)\left(1 + 4 \sqrt{d\iota_1(l)}\right) \\
        &\leq 64 d \beta(l_{\Delta}) 2^{l_{\Delta}} \iota_{1}(l_{\Delta}) + 32d l_{\Delta}4^{l_{\Delta}} \iota_{1}(l_{\Delta}) \zeta \left(1 + 4 \sqrt{d\iota_1(l_{\Delta})}\right)  \\
        &\leq 512 d \beta^2(l_{\Delta}) \iota_{1}(l_{\Delta}) / \Delta + 2048d \beta^2(l_{\Delta}) \iota_{1}(l_{\Delta}) / \Delta \\
        &\leq 2560 d \beta^2(l_{\Delta}) \iota_{1}(l_{\Delta}) / \Delta
    \end{align*}
    where the second equality is given by Lemma~\ref{lm:decision-set}, the second inequality is given by Lemma~\ref{lm:set}, the third last inequality holds since $\beta(\cdot)$ and $\iota_1(\cdot)$ are monotone increase and the second inequality since $2^{l_{\Delta} - 1} \leq  8\beta(l_{\Delta}-1) \Delta^{-1} \leq  8\beta(l_{\Delta}) \Delta^{-1}$ and  $2l_{\Delta}\zeta\left(1 + 4\sqrt{d\iota_{1}(l_{\Delta})}\right)  < \Delta/2$.

\end{proof}

\section{Proof of Theorem~\ref{thm:lower}}\label{app:bandit-lb}
To begin with, we introduce the lemma providing a sparse vector set in $\RR^d$.
\begin{lemma}[Lemma 3.1,~\citealt{lattimore2020learning}]\label{lm:cross}
For any $\eps > 0$ and $d < [|\cD|]$ such that $d \ge \lceil 8\log(|\cD|)\eps^{-2}\rceil$, there exists a vector set $\cD \subset \RR^d$ such that $\|\xb\|_2 = 1$ for all $\xb \in \cD$ and $|\la \xb, \yb\ra| \le \eps$ for all $\xb, \yb \in \cD$ and $\xb \neq \yb$. 
\end{lemma}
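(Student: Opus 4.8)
The plan is to prove existence by the probabilistic method, drawing the vectors at random and showing that the near-orthogonality condition holds with positive probability. First I would write $N := |\cD|$ and construct $N$ random vectors $\xb_1, \dots, \xb_N \in \RR^d$ by setting each coordinate $(\xb_j)_i = s_{j,i} / \sqrt{d}$, where the signs $s_{j,i} \in \{-1, +1\}$ are mutually independent and uniformly distributed over all $j$ and $i$. By construction every vector is automatically a unit vector, since $\|\xb_j\|_2^2 = d^{-1}\sum_{i=1}^d s_{j,i}^2 = 1$. Thus the norm constraint is satisfied deterministically, and only the inner-product constraint remains to be controlled in probability.

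For a fixed pair $j \ne j'$, the inner product is $\la \xb_j, \xb_{j'}\ra = d^{-1}\sum_{i=1}^d s_{j,i} s_{j',i}$. The products $\{s_{j,i} s_{j',i}\}_{i=1}^d$ are independent Rademacher variables (each equal to $\pm 1$ with probability $1/2$), so $d\,\la \xb_j, \xb_{j'}\ra$ is a sum of $d$ independent mean-zero variables bounded in $[-1,1]$. Hoeffding's inequality then yields the sub-Gaussian tail
\begin{align*}
    \Pr\big(|\la \xb_j, \xb_{j'}\ra| > \eps\big) = \Pr\Big(\big|\textstyle\sum_{i=1}^d s_{j,i} s_{j',i}\big| > d\eps\Big) \le 2\exp\big(-d\eps^2 / 2\big).
\end{align*}

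Next I would apply a union bound over all $\binom{N}{2} \le N^2 / 2$ unordered pairs, giving
\begin{align*}
    \Pr\big(\exists\, j \ne j' : |\la \xb_j, \xb_{j'}\ra| > \eps\big) \le N^2 \exp\big(-d\eps^2 / 2\big).
\end{align*}
Under the hypothesis $d \ge \lceil 8\log(N)\eps^{-2}\rceil$ the exponent obeys $d\eps^2 / 2 \ge 4\log N > 2\log N$, so the right-hand side equals $\exp(2\log N - d\eps^2/2) < 1$. Hence the event that all pairwise inner products are bounded by $\eps$ occurs with strictly positive probability, and any realization of $\{\xb_j\}$ on this event witnesses the claimed set $\cD$.

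I do not expect a serious obstacle here: the argument is a routine application of concentration plus a union bound, and the only point requiring mild care is matching the tail exponent to the stated dimension threshold. The comfortable factor-of-two slack in $d \ge 8\log(N)\eps^{-2}$ absorbs both the $N^2/2$ pair count and the leading constant $2$ in Hoeffding's bound, so no delicate constant optimization is needed. An alternative construction draws $\xb_j$ as normalized standard Gaussian vectors $\mathbf{g}_j / \|\mathbf{g}_j\|_2$; this also works but forces one to separately control the norm fluctuation of $\|\mathbf{g}_j\|_2$ around $\sqrt{d}$, so the random-sign construction is the cleaner route.
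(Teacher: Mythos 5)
Your proof is correct: the random-sign construction gives unit vectors deterministically, Hoeffding's inequality yields the tail bound $2\exp(-d\eps^2/2)$ for each pair, and the union bound over at most $N^2/2$ pairs is strictly less than $1$ under $d \ge 8\log(N)\eps^{-2}$ (the case $N=1$ being trivial), so the probabilistic method applies. Note that the paper itself offers no proof of this lemma---it is imported by citation from \citet{lattimore2020learning}---and your argument is essentially the standard one given in that reference, so there is nothing to reconcile.
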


Next, we present the Bretagnolle–Huber inequality providing the lower bound to distinguish a system.

\begin{lemma}[Bretagnolle–Huber inequality]\label{lm:bh}
    Let $P$ and $Q$ be probability measures on the same measurable space $(\Omega, \cF)$, let $\cA \in \cF$ be an arbitary event. Then 
    \begin{align*}
        P(\cA) + Q(\cA^c) \ge \frac12 \exp(-\mathrm{KL}(P, Q)).
    \end{align*}
\end{lemma}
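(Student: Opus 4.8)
The plan is to reduce the statement to a pointwise inequality between densities and then route everything through the Hellinger affinity, so that the KL divergence enters through a single application of Jensen's inequality. First I would pass to a common dominating measure $\mu$ (for instance $\mu = P + Q$, which is finite) and write the densities $p = \mathrm{d}P/\mathrm{d}\mu$ and $q = \mathrm{d}Q/\mathrm{d}\mu$. If $P$ is not absolutely continuous with respect to $Q$, then $\mathrm{KL}(P,Q) = +\infty$, the right-hand side is $0$, and the inequality holds trivially; so I may assume $P \ll Q$ and work with a finite KL divergence.

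The first reduction is a pointwise bound that holds for \emph{every} event and removes the dependence on $\cA$: for any $\cA \in \cF$,
\begin{align*}
    P(\cA) + Q(\cA^c) = \int_{\cA} p\,\mathrm{d}\mu + \int_{\cA^c} q\,\mathrm{d}\mu \ge \int_{\cA}\min(p,q)\,\mathrm{d}\mu + \int_{\cA^c}\min(p,q)\,\mathrm{d}\mu = \int \min(p,q)\,\mathrm{d}\mu,
\end{align*}
so it suffices to lower bound the overlap $m := \int \min(p,q)\,\mathrm{d}\mu$. The key step is to relate this overlap to the affinity $\rho := \int \sqrt{pq}\,\mathrm{d}\mu$. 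Writing $\sqrt{pq} = \sqrt{\min(p,q)}\,\sqrt{\max(p,q)}$ pointwise and applying Cauchy--Schwarz gives $\rho^2 \le m \cdot \int \max(p,q)\,\mathrm{d}\mu$; since $\min(p,q) + \max(p,q) = p+q$ and $\int (p+q)\,\mathrm{d}\mu = 2$, we get $\int \max(p,q)\,\mathrm{d}\mu = 2 - m \le 2$, hence $m \ge \rho^2/2$. It then remains to bound $\rho$ from below, which follows from Jensen's inequality applied to the convex function $\exp$: writing $\rho = \EE_P\!\left[\exp\!\left(\frac12 \log\frac{q}{p}\right)\right]$,
\begin{align*}
    \rho \ge \exp\!\left(\frac12\,\EE_P\!\left[\log\frac{q}{p}\right]\right) = \exp\!\left(-\frac12 \mathrm{KL}(P,Q)\right).
\end{align*}
Chaining the three bounds yields $P(\cA) + Q(\cA^c) \ge m \ge \rho^2/2 \ge \frac12 \exp(-\mathrm{KL}(P,Q))$, as claimed.

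The main obstacle is conceptual rather than computational: identifying the affinity $\rho$ as the correct intermediate quantity, after which the overlap is controlled by one Cauchy--Schwarz step and the KL divergence appears through one Jensen step. The only genuine technical care needed is the measure-theoretic bookkeeping — choosing the dominating measure, interpreting $\sqrt{q/p}$ on the $\mu$-null-relevant set $\{p=0\}$ consistently within the affinity integral, and dispatching the non-absolutely-continuous case in which both sides degenerate. None of these require more than routine verification once the affinity-based decomposition is in place.
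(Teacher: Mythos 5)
Your proof is correct. Note that the paper itself gives no proof of this lemma: it states the Bretagnolle--Huber inequality as a known result and uses it as a black box in the lower-bound argument, so there is no in-paper proof to compare against. Your argument---dropping to the overlap $\int \min(p,q)\,\mathrm{d}\mu$ to eliminate the event $\cA$, lower bounding the overlap by half the squared Hellinger affinity via Cauchy--Schwarz together with $\int \max(p,q)\,\mathrm{d}\mu \le 2$, and then bounding the affinity by $\exp(-\tfrac12 \mathrm{KL}(P,Q))$ via Jensen---is exactly the standard textbook proof (e.g., Theorem 14.2 in Lattimore and Szepesv\'ari, \emph{Bandit Algorithms}), and every step checks out. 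One wording slip worth fixing: $P \ll Q$ does \emph{not} by itself give finite $\mathrm{KL}(P,Q)$; the correct dispatch is simply that whenever $\mathrm{KL}(P,Q) = \infty$ (whether from failure of absolute continuity or from a divergent integral under $P \ll Q$) the right-hand side is $0$ and the claim is trivial---equivalently, your Jensen step already holds with the convention $\exp(-\infty)=0$, so no finiteness assumption is actually needed.
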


For stochastic linear bandit problem with finite arm, we can denote $T_i(k)$ as the number of rounds the algorithm visit the $i$-th arm over total $k$ rounds. Then We have the KL-divergence decomposition lemma.
\begin{lemma}[Lemma 15.1, \citet{lattimore2020bandit}]\label{lm:lb-dc}
Let $\nu = (P_1, \cdots, P_n)$ be the reward distributions associated with one $n$-armed bandit and let $\nu' = (P'_1, \cdots, P'_n)$ be another $n$-armed bandit. Fix some algorithm $\pi$ and let $\PP_\nu = \PP_{\nu \pi}, \PP_{\nu'} = \PP_{\nu', \pi}$ be the probability measures on the canonical bandit model induced by the $k$-round interconnection of $\pi$ and $\nu$ (respectively, $\pi$ and $\nu'$). Then $\mathrm{KL}(\PP_\nu, \PP_{\nu'}) = \sum_{i=1}^n \EE_{\nu}[T_i(n)]\mathrm{KL}(P_i, P'_i)$
\end{lemma}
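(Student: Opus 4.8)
The plan is to exploit the product structure of the two measures on the canonical bandit space and reduce the identity to a cancellation of the shared policy kernels followed by the tower property. First I would fix the sample space to be the set of trajectories $(A_1, X_1, \ldots, A_n, X_n)$, where $A_t$ is the arm pulled and $X_t$ the reward observed at round $t$, and assume (as the canonical model does) that every $P_i$ and $P'_i$ admits a density $p_i, p'_i$ with respect to a common dominating measure. The decisive structural observation is that because the \emph{same} algorithm $\pi$ is run against both $\nu$ and $\nu'$, the two joint densities factorize with identical policy components:
\begin{align*}
p_\nu(a_{1:n}, x_{1:n}) &= \prod_{t=1}^n \pi_t(a_t \mid a_{1:t-1}, x_{1:t-1})\, p_{a_t}(x_t), \\
p_{\nu'}(a_{1:n}, x_{1:n}) &= \prod_{t=1}^n \pi_t(a_t \mid a_{1:t-1}, x_{1:t-1})\, p'_{a_t}(x_t),
\end{align*}
where $\pi_t(\cdot \mid \cdot)$ is the shared conditional law of $A_t$ given the observed history.

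Next I would form the log-likelihood ratio. Since the factors $\pi_t$ are identical under $\nu$ and $\nu'$, they cancel exactly, leaving
\begin{align*}
\log \frac{p_\nu}{p_{\nu'}}(A_{1:n}, X_{1:n}) = \sum_{t=1}^n \log \frac{p_{A_t}(X_t)}{p'_{A_t}(X_t)}.
\end{align*}
By definition $\mathrm{KL}(\PP_\nu, \PP_{\nu'})$ is the $\PP_\nu$-expectation of this quantity, so the problem reduces to evaluating $\EE_\nu[\log(p_{A_t}(X_t)/p'_{A_t}(X_t))]$ for each $t$. For each term I would condition on the history up to and including $A_t$: under $\PP_\nu$, conditionally on $A_t = i$ the reward $X_t$ is drawn from $P_i$ independently of the past, so the inner conditional expectation equals $\int \log(p_i/p'_i)\,dP_i = \mathrm{KL}(P_i, P'_i)$, a quantity that crucially does not depend on the history. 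Taking the outer expectation gives $\EE_\nu[\mathrm{KL}(P_{A_t}, P'_{A_t})] = \sum_{i=1}^n \PP_\nu(A_t = i)\,\mathrm{KL}(P_i, P'_i)$.

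Summing over $t$ and interchanging the finite sums then yields
\begin{align*}
\mathrm{KL}(\PP_\nu, \PP_{\nu'}) = \sum_{i=1}^n \mathrm{KL}(P_i, P'_i) \sum_{t=1}^n \PP_\nu(A_t = i) = \sum_{i=1}^n \EE_\nu[T_i(n)]\,\mathrm{KL}(P_i, P'_i),
\end{align*}
where the last step uses $T_i(n) = \sum_{t=1}^n \ind[A_t = i]$ together with linearity of expectation. The main obstacle is not the algebra but the measure-theoretic bookkeeping: one must justify that both measures are absolutely continuous with respect to a common product-type dominating measure so that the density factorization and the chain rule are legitimate, and verify that the policy kernels genuinely cancel. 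This cancellation is exactly where the hypothesis that $\pi$ is \emph{fixed across environments} is essential—if the algorithm could adapt to the identity of the environment, the policy terms would differ and the decomposition would fail. Some care is also needed so that each expectation is well defined (e.g.\ finiteness of the per-arm $\mathrm{KL}(P_i, P'_i)$); otherwise both sides should be interpreted consistently as $+\infty$.
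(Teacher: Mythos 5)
Your proof is correct and is essentially the canonical argument: the paper itself gives no proof of this lemma, citing it directly as Lemma~15.1 of \citet{lattimore2020bandit}, and your derivation---factorizing the joint density on the canonical bandit space, cancelling the shared policy kernels, applying the tower property so each round contributes $\mathrm{KL}(P_i, P'_i)$ weighted by $\PP_\nu(A_t = i)$, and summing via $T_i(n) = \sum_{t=1}^n \ind[A_t = i]$---is exactly the proof given in that reference. Your attention to the measure-theoretic caveats (common dominating measure, the $+\infty$ convention when absolute continuity fails) matches the care taken there as well.
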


\begin{proof}[Proof of Theorem~\ref{thm:lower}]
The proof starts from inheriting the idea from~\citet{lattimore2020learning}. Given dimension $d$ and the number of arms $|\cD|$, setting $\eps = \sqrt{8\log(|\cD|) / (d - 1)}$, we can provide the contextual vector set $\cD$ such that 
\begin{align*}
    \|\xb\|_2 = 1, \forall \xb \in \cD, |\la \xb, \yb \ra| \le \sqrt{\frac{8\log(|\cD|)}{d - 1}}, \forall \xb, \yb \in \cD, \xb \neq \yb,
\end{align*}

For simplicity, we index the decision set as $\xb_1, \cdots, \xb_{|\cD|}$. Given the minimal sub-optimality gap $\Delta$, we provide the parameter set $\bTheta$ as follows:
\begin{align*}
    \bTheta = \left\{\btheta_{(i, j)} = \Delta \xb_i + 2\Delta \xb_j, \xb_i, \xb _j \in \cD, i \neq j\right\} \bigcup \{\btheta_{i} = \Delta \xb_i, \xb_i \in \cD\}.
\end{align*}
It can be verified that $\bTheta$ contains two kinds of $\btheta$. The first one $\btheta_{(i, j)}$ is a mixture of two different contexts $\xb_i, \xb_j$ with different strength $\Delta$ and $2\Delta$. The second one is $\btheta_{i}$ which only contains features from one context $\xb_i$. We can further verify that the size of $|\bTheta| = |\cD|^2$ and $\|\btheta\|_2 \le \sqrt{5}\Delta$ for $\btheta \in \bTheta$. For different parameter $\btheta$, the reward function is sampled from a Gaussian distribution $\cN(r_{\btheta}(\xb), 1)$, where the expected reward function is defined as
\begin{align*}
    r_{\btheta_{(i, j)}}(\xb) = \begin{cases}
        2\Delta \text{ if } \xb = \xb_j\\
        \Delta \text{ if } \xb = \xb_i\\
        0 \text{ otherwise }
    \end{cases},
    r_{\btheta_{i}}(\xb) = \begin{cases}
        \Delta \text{ if } \xb = \xb_i\\
        0 \text{ otherwise }
    \end{cases}.
\end{align*}

We can verify that the minimal sub-optimality of all these bandit problem is $\Delta$. For different parameter $\btheta$ and input $\xb$, by utilizing the sparsity of the set $\cD$ (i.e. $|\xb^\top y| \le \eps$ if $\xb \neq \yb$), we can verify the misspecification level as
\begin{align*}
    |r_{\btheta_{(i, j)}}(\xb) - \btheta_{(i, j)}^\top \xb | &= \begin{cases}
        |2\Delta - 2\Delta\xb_j^\top\xb - \Delta \xb_i^\top\xb| \le \Delta \eps \text{ if } \xb = \xb_j\\
        |\Delta - 2\Delta \xb_j^\top\xb - \Delta \xb_i^\top \xb| \le 2\Delta \epsilon \text{ if } \xb = \xb_i\\
        |0 - 2\Delta \xb_j^\top \xb - \Delta \xb_i^\top \xb| \le 3\Delta \eps \text{ otherwise}
    \end{cases}\\
    |r_{\btheta_i}(\xb) - \btheta_i^\top(\xb)| &= \begin{cases}
        |\Delta - \Delta \xb_i^\top \xb| = 0 \text{ if } \xb = \xb_i\\
        |0 - \Delta \xb_i^\top \xb| \le \Delta \eps \text{ otherwise}.
    \end{cases}
\end{align*}
Therefore we have verified that the misspecification level is bounded by $\zeta = 3\Delta \eps$. 

The provided bandit structure is hard for any linear algorithm to learn since any algorithm cannot get any information before it encounters non-zero expected rewards, even regardless of the noise of the rewards. We following the same method in~\citet{lattimore2020bandit}. If the algorithm choose arm $i$ at the first round, there would be $|\cD|$ parameters (i.e. $\btheta_i, \btheta_{(i, \cdot)}$ receiving a non-zero expected reward. On the second round if the algorithm choose a different arm $j$, there would be $|\cD|$ parameters (i.e. $\btheta_j, \btheta_{(j, k: k \neq i)}$ receiving a non-zero expected reward. Therefore the average time of receiving zero expected reward should be 
\begin{align*}
    |\cD|^{-2}\sum_{i=1}^{|\cD|}(i - 1)(|\cD| - i + 1) &= |\cD|^{-2}\sum_{i=0}^{|\cD| - 1}i(|\cD| - i)\\
    &= |\cD|^{-2}\left(|\cD|\sum_{i=0}^{|\cD| - 1} i - \sum_{i=0}^{|\cD| - 1} i^2\right)\\
    &= |\cD|^{-2}\left(\frac{|\cD|^2(|\cD| - 1)}{2} - \frac{|\cD|(|\cD| - 1)(2|\cD| - 1)}{6}\right)\\
    &= \frac{|\cD| - 1}{2}\left(1 - \frac{2|\cD| - 1}{3|\cD|}\right)\\
    &\ge \frac{|\cD| - 1}{6},
\end{align*}
where the third equation is from the fact that $\sum_{i=1}^n i = n(n + 1) / 2$ and $\sum_{i=1}^n i^2 = n(n+1)(2n + 1) / 6$. The last inequality is from the fact that $2|\cD| - 1) / (3|\cD|) \le 2/3$. Therefore, even without of the random noise, any algorithm is expected to receive $\min\{K, (|\cD| - 1) / 6\}$ uninformative data with expected reward to be zero. Therefore any algorithm will receive a $\Delta\min\{K, (|\cD| - 1) / 6\}$ regret considers the suboptimality as $\Delta$.

Next, we consider the effect of random noise. For any algorithm running on this parameter set $\bTheta$, we find two parameter $\btheta_{i}$ and $\btheta_{i, j}$ where $j \neq i$. Define the event as $\cA = \{T_j(k) \ge k / 2\}$ and $\cA^c = \{T_j(k) < k / 2\}$. By Lemma~\ref{lm:bh} and Lemma~\ref{lm:lb-dc},
\begin{align}
    \PP_{\btheta_i}\left(T_j(k) \ge \frac k 2\right) + \PP_{\btheta_{(i, j)}}\left(T_j(k) < \frac k 2\right) &\ge \frac12 \exp(-\mathrm{KL}(\PP_{\btheta_i}, \PP_{\btheta_{(i, j)}})) \notag \\
    &\ge \frac12 \exp\left(-\sum_{n \in \cD} \EE_{\btheta_i}[T_n(k)]\mathrm{KL}\left(\PP_{\btheta_{(i, j)}, n}, \PP_{\btheta_j, n}\right)\right). \label{eq:new:2}
\end{align}

Noticing the minimal sub-optimality gap is $\Delta$. Also the $j$-th arm is the sub-optimal arm for parameter $\btheta_i$. Therefore, once $T_j(k) \ge k / 2$, the algorithm will at least suffer from $\Delta k / 2$ regret for parameter $\btheta_i$. Also, since the $j$-th arm is the optimal arm for bandit $\btheta_{(i, j)}$. If $T_j(k) < k / 2$, the algorithm will also at least suffer from $\Delta k / 2$ regret for $\btheta_{(i, j)}$. Denoting $\cR_{\btheta}(k)$ as the expected cumulative regret over $k$ rounds, that is to say
\begin{align}
    \cR_{\btheta_i}(k) \ge \frac{\Delta k}2\PP_{\btheta_i}(T_j(k) \ge k / 2)\quad
    \cR_{\btheta_j}(k) \ge \frac{\Delta k}2\PP_{\btheta_i}(T_j(k) < k / 2).\label{eq:new:1}
\end{align}

On the other hand since the bandit using $\btheta_i$ and $\btheta_j$ only differ in the $j$-th arm. Since standard Gaussian noise is adapted, $\mathrm{KL}( \PP_{\btheta_i, n}, \PP_{\btheta_{(i, j)}, n}) = \Delta^2\ind[n = j] / 2$. Combining this with \eqref{eq:new:1}, \eqref{eq:new:2} suggests that
\begin{align*}
    \cR_{\btheta_i}(k) + \cR_{\btheta_j}(k) \ge \frac{\Delta k}2 \exp\left(-\frac{\Delta^2}2\EE_{\btheta_i} \left[T_j(k)\right]\right),
\end{align*}
which suggests that 
\begin{align}
    \EE_{\btheta_i} \left[T_j(k)\right] \ge \frac{\log(\Delta k) - \log 2 - \log(\cR_{\btheta_i}(k) + \cR_{\btheta_j}(k))}{\Delta^2 / 2}, \label{eq:neq1}
\end{align}
For any algorithm seeking to get a sublinear expected regret bound of $\cR_{\btheta}(k) \le Ck^\alpha$ with $C > 0, 0 \le \alpha < 1$ for all $\btheta \in \bTheta$, \eqref{eq:neq1} becomes
\begin{align}
    \EE_{\btheta_i} \left[T_j(k)\right] \ge \frac{\log(\Delta k) - \log 2 - \log(2Ck^\alpha)}{\Delta^2 / 2} = \frac{\log(\Delta k) - \log(4C) - \alpha \log k}{\Delta^2 / 2}. \label{eq:neq2}
\end{align}

Since that the regret on $\btheta_i$ can be decomposed by
\begin{align}
    \cR_{\btheta_i}(k) = \Delta \sum_{n = 1, n \neq i}^{|\cD|} T_n(k), \label{eq:neq3}
\end{align}
combining~\eqref{eq:neq3} with~\eqref{eq:neq2} yields
\begin{align*}
    \cR_{\btheta_i}(k) \ge \frac{2(|\cD| - 1)}{\Delta}\max\left\{\log (\Delta k) - \log(4C) - \alpha \log k, 0\right\},
\end{align*}
where the $\max$ operator is trivially taken for $\cR_{\btheta}(k) \ge 0$.
\end{proof}

\bibliography{refs.bib}
\bibliographystyle{ims}

\end{document}